\newtheorem*{rep@theorem}{\rep@title}
\newcommand{\newreptheorem}[2]{%
\newenvironment{rep#1}[1]{%
 \def\rep@title{#2 \ref{##1}}%
 \begin{rep@theorem}}%
 {\end{rep@theorem}}}
\theoremstyle{plain}
\newtheorem{theorem}{Theorem}[section]
\newtheorem{lemma}[theorem]{Lemma}
\theoremstyle{definition}
\theoremstyle{remark}
\newcommand\bolden[1]{{\boldmath\bfseries#1}}
\newcommand{\smallsection}[1]{{\vspace{0.02in} \noindent {\bolden{\uline{\smash{#1}}}}}}
\definecolor{lucky}{RGB}{120, 130, 150}
\def\mydefbb#1{\expandafter\def\csname bb#1\endcsname{\ensuremath{\mathbb{#1}}}}
\def\mydefallbb#1{\ifx#1\mydefallbb\else\mydefbb#1\expandafter\mydefallbb\fi}
\def\mydefcal#1{\expandafter\def\csname cal#1\endcsname{\ensuremath{\mathcal{#1}}}}
\def\mydefallcal#1{\ifx#1\mydefallcal\else\mydefcal#1\expandafter\mydefallcal\fi}
\newcommand{\abs}[1]{\left|#1\right|}
\newcommand{\set}[1]{\{#1\}}
\newcommand{\black}[1]{{\color{black}{#1}}}
\newcommand{\Var}{\ensuremath{\operatorname{Var}}}
\renewcommand{\setminus}{\text{\textbackslash}}
\newcommand{\hdot}{\ensuremath{\mathbf{\Tilde{h}}(\cdot)}\xspace}
\newcommand{\Hdot}{\ensuremath{{\Tilde{H}}(\cdot)}\xspace}
\newcommand{\ax}{$\mathrm{A}\text{-}\mathrm{X}$ dependence\xspace}
\newcommand{\hg}{\ensuremath{\mathbf{\Tilde{h}}^{(G)}}\xspace}
\newcommand{\hp}{\ensuremath{\mathbf{\Tilde{h}}^{(p)}_{ij}}\xspace}
\newcommand{\hn}{\ensuremath{\mathbf{\Tilde{h}}^{(v)}_i}\xspace}
\newcommand{\hc}{\ensuremath{\mathbf{h}_c}\xspace}
\newcommand{\gmodel}{CSBM-X\xspace}
\theoremstyle{plain}
\providecommand{\customgenericname}{}
\newcommand{\newcustomtheorem}[2]{%
  \newenvironment{#1}[1]
  {%
   \renewcommand\customgenericname{#2}%
   \renewcommand\theinnercustomgeneric{##1}%
   \innercustomgeneric
  }
  {\endinnercustomgeneric}
}
\DeclareMathOperator*{\argmin}{arg\,min}
\icmltitlerunning{\black{Feature Distribution on Graph Topology Mediates the Effect of Graph Convolution}}
\begin{document}

\twocolumn[
\icmltitle{Feature Distribution on Graph Topology \\ Mediates the Effect of Graph Convolution: Homophily Perspective}



\icmlsetsymbol{equal}{*}

\begin{icmlauthorlist}
\icmlauthor{Soo Yong Lee}{KAIST_AI}
\icmlauthor{Sunwoo Kim}{KAIST_AI}
\icmlauthor{Fanchen Bu}{KAIST_AI,KAIST_EE}
\icmlauthor{Jaemin Yoo}{KAIST_EE}
\icmlauthor{Jiliang Tang}{MSU}
\icmlauthor{Kijung Shin}{KAIST_AI,KAIST_EE}
\end{icmlauthorlist}


\icmlaffiliation{KAIST_AI}{Kim Jaechul Graduate School of Artificial Intelligence, KAIST, Daejeon, Republic of Korea}
\icmlaffiliation{KAIST_EE}{School of Electrical Engineering, KAIST, Daejeon, Republic of Korea}
\icmlaffiliation{MSU}{Department of Computer Science and Engineering, Michigan State University, Michigan, US}

\icmlcorrespondingauthor{Kijung Shin}{kijungs@kaist.ac.kr}

\icmlkeywords{}

\vskip 0.3in
]



\printAffiliationsAndNotice{}  
\begin{abstract}
How would randomly shuffling feature vectors among nodes from the same class affect graph neural networks (GNNs)?
The feature shuffle, intuitively, perturbs the dependence between graph topology and features ({\ax}) for GNNs to learn from.
Surprisingly, we observe a consistent and significant improvement in GNN performance following the feature shuffle.
Having overlooked the impact of \ax on GNNs, the prior literature does not provide a satisfactory understanding of the phenomenon.
Thus, we raise two research questions.
{First, how should \ax be measured, while controlling for potential confounds?}
Second, how does \ax affect GNNs?
In response, we 
(\textit{i}) propose a principled measure for \ax, 
(\textit{ii}) design a random graph model that controls \ax,
(\textit{iii}) establish a theory on how \ax relates to graph convolution, and
(\textit{iv}) present empirical analysis on real-world graphs that align with the theory.
We conclude that \ax mediates the effect of graph convolution, 
{such that smaller dependence improves GNN-based node classification.}
\end{abstract}

\section{Introduction}\label{sec:intro}
\begin{figure}[t] 
    \centering
    
    \begin{subfigure}[b]{\linewidth}
    \centering
    \includegraphics[scale=0.35]{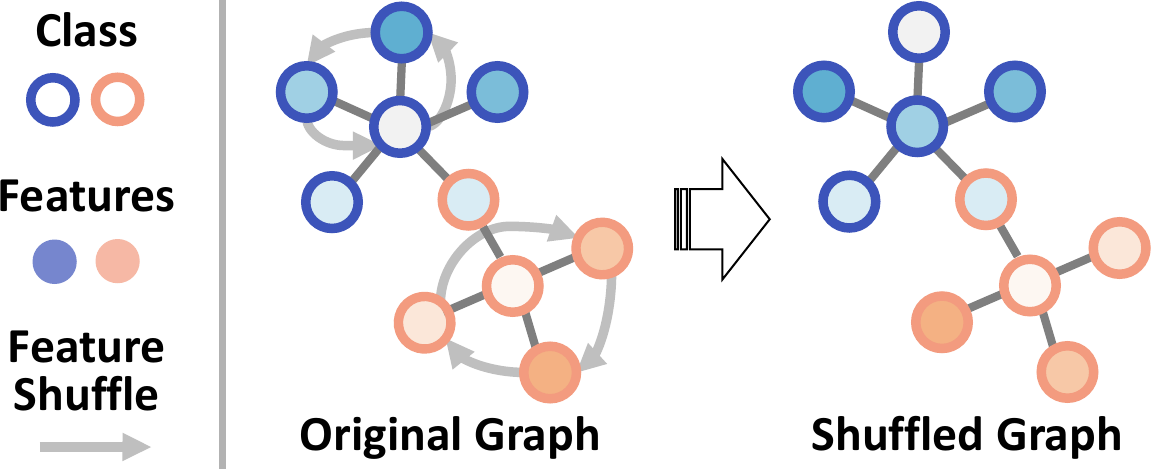}
    \caption{Visualization of the feature shuffle. 
    Features of the same class nodes are shuffled. 
    The shuffled node ratio is 0.6 in the example.}
    \end{subfigure}

    \vspace{2mm}

    \begin{subfigure}[b]{\linewidth}
    \centering
    \includegraphics[scale=0.21]{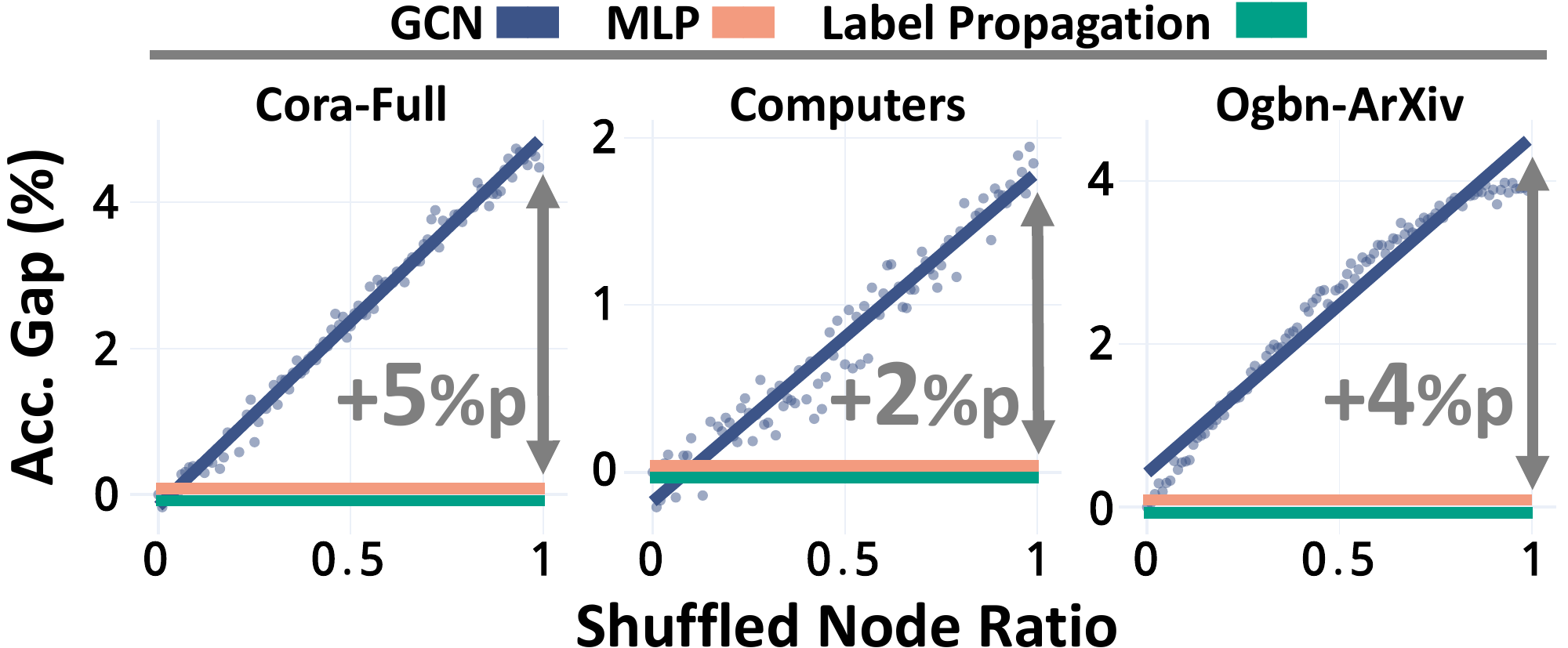}
    \caption{Accuracy gaps between the original and shuffled graphs.}
    \end{subfigure}

    \vspace{2mm}

    \caption{\label{fig:strange_phe}
    \bolden{An Intriguing Phenomenon.}
    GCN performance increases significantly over the feature shuffle, 
    while those of MLP and label propagation remain stationary.
    }
    \vspace{2mm} 
\end{figure}

Graph neural networks (GNNs) are functions of graph topology and features.
Understanding the conditions in which GNNs become powerful is the key to their improvement and effective applications.
As such, many prior works have investigated the conditions that affect GNN effectiveness, especially from the node representation learning perspective~\cite{Oono2020GraphClassification, Abboud2021TheInitialization,  You2021Identity-awareNetworks, Wang2022HowNetworks, Wei2022UnderstandingPerspective, Wu2023ANetworks, Baranwal2021GraphGeneralization, Baranwal2023EffectsNetworks}.

However, in this work, we report an intriguing phenomenon not well accounted for by the prior studies.
How would randomly shuffling feature vectors among nodes from the same class affect GNNs?
The feature shuffle, intuitively, disrupts the dependence between graph topology and node features (\ax).  
Rather surprisingly, increasing the shuffled node ratio consistently improves GCN~\cite{Kipf2017Semi-supervisedNetworks} performance (Fig.~\ref{fig:strange_phe}).
The performances of MLP and label propagation (LP), however, remain the same (for experiment details, refer to Sec.~\ref{sec:exp_setting}).

The prior studies on GNN theory do not provide a satisfactory understanding of the phenomenon.
One line of studies indicates how class distribution on graph topology, such as class-homophily, can be critical for effective GNNs~\cite{Luan2022RevisitingNetworks, Luan2023WhenDistinguishability, Ma2022IsNetworks, Mao2023DemystifyingAll, Platonov2023CharacterizingBeyond}. 
The feature shuffle, however, does not intervene with label distribution because the class labels are not shuffled, 
causing the LP performance to be unchanged.

Some studies point out feature informativeness for node class as another crucial factor for effective GNNs~\cite{Baranwal2021GraphGeneralization, Baranwal2023EffectsNetworks, Wei2022UnderstandingPerspective, Wu2023ANetworks}. 
These do not explain the reported phenomenon, either, since the feature shuffle is done only among nodes from the same class. 
Namely, feature informativeness for node class remains the same, leaving the MLP performance unaffected.

Others stress GNN's efficacy as a node signal denoiser~\cite{NT2019RevisitingFilters, Ma2021ADenoising}.
From such perspective, whether the signals are well denoised after graph convolution is important for effective node classification~\cite{Luan2023WhenDistinguishability}.
However, they do not discuss conditions in which the convoluted features are well-denoised.
The reason, thus, is vague for why the feature shuffle affects GNN performance.

The limitation of the prior works in understanding the observed phenomenon stems from overlooking the impact \ax on GNNs.
Thus, we \textit{advance the findings from prior works} by investigating how \ax affects GNNs.
We raise two research questions (RQs).
\begin{itemize}[leftmargin=*]
    \item \textbf{RQ1}. How should \ax be measured, {while controlling for potential confounds}? 
    \vspace{-0.5mm}
    \item \textbf{RQ2}. How does \ax affect GNNs?
\end{itemize}
In response, we propose a principled measure for \ax, \textit{class-controlled feature homophily} (CFH), that mitigates potential confounding by node class.
We propose a random graph model, \textit{\gmodel}, to control CFH.
With the measure, graph model, and feature shuffle, 
we establish a theory that CFH mediates the effect of graph convolution.
Specifically, CFH moderates its force to pull each node feature toward the feature mean of the respective node class, with smaller CFH increasing the force.

\section{Preliminaries}\label{sec:prelim_background}

\smallsection{Graphs.}
A graph $G = (V, E)$ is defined by a \textit{node set} $V = V(G)$
and an \textit{edge set} $E = E(G) \subseteq \binom{V}{2}$.
We denote an edge between two nodes $u$ and $v$ as $(u,v) \in E$, and $(u,v) = (v,u)$ holds unless otherwise stated.

Let $n = n(G)$ denote the number of nodes in $G$ with $V = \set{v_i \colon i \in [n]}$.
Let $X = X(G) \in \mathbb{R}^{n \times k}$ denote node feature matrix, where the $i$-th row corresponds to the feature vector $X_i \in \mathbb{R}^{k}$ of node $v_i \in V$, where $k = k(G)$ is the feature dimension.
For each node $v_i \in V$,
its class is $Y_i \in [c]$, where $c$ is the number of node classes.
Its neighbor set is $N_i = \{v_{j} \in V \colon (v_{i}, v_{j}) \in E\}$.
Its degree is $d_i = \abs{N_i}$, and its same- and different-class degrees are
$d^{+}_i = \abs{\{v_{j} \in N_i \colon Y_{j} = Y_{i}\}}$ and 
$d^{-}_i = \abs{\{v_{j} \in N_i \colon Y_{j} \neq Y_{i}\}}$, respectively, with
$d_i = d^{+}_i + d^{-}_i$.

We define $V'_{i} = V$ \textbackslash $\{v_{i}\}$ as the set of nodes excluding $v_i$.
Also, for each class $\ell \in [c]$, 
we use $C^+_\ell = \set{ v_i \in V \colon Y_i = \ell }$ to denote node set of class $\ell$,
and $C^-_\ell = V \text{\textbackslash} C^+_\ell$ denotes the rest.

\smallsection{Feature distance (FD)}.
For measuring feature distance between classes, we adopt 
a simplified version of Bhattacharyya distance~\cite{kailath1967divergence}.
Specifically, given two data classes $C_{0}$ and $C_{1}$ 
with feature means $\mu_i \in \mathbb{R}^{k}$ and covariance matrices $\Sigma_i \in \mathbb{R}^{k \times k}$ with $i = 0$ or $1$ respectively,
we define the feature distance FD between $C_{0}$ and $C_{1}$ as:
\begin{equation}\label{eq.fd}
    \text{FD} (C_0, C_1) \coloneqq  
    \sqrt{(\mu_0 - \mu_1)^\intercal \left( \frac{\Sigma_0 + \Sigma_1}{2} \right)^{-1}(\mu_0 - \mu_1)}.
\end{equation}
A higher feature distance FD indicates a larger (normalized) distance between the two classes, i.e., the two classes are more distinct.
{If both classes follow a Gaussian distribution, 
roughly speaking, the difficulty in classifying $C_0$ and $C_1$ decreases as $\text{feature distance FD}(C_0, C_1) \in [0, \infty)$ increases~\cite{kailath1967divergence}.}

\smallsection{Homophily}. 
From a network perspective, homophily (\textit{love of the same})
refers to the \textit{positive dependence} between node similarity and connection~\cite{McPherson2001BirdsNetworks}.
Heterophily (\textit{love of the different}) is considered as the opposite, 
describing the \textit{negative dependence} in that dissimilar nodes tend to connect~\cite{Rogers2019DiffusionInnovations}.
Importantly, we distinguish {\textit{impartiality}} from both
for networks having \textit{no dependence} between node similarity and their connection.

The vast majority of works on GNN-homophily connection focus specifically on class-homophily. 
We use $\mathbf{h}_{c}$ to denote the class-homophily defined by~\citet{Lim2021LargeMethods}:
\begin{equation}\label{eq.class_homophily}
    \mathbf{h}_{c} = \frac{1}{c} \sum_{\ell \in [c]}
    \texttt{max}\left(
    \frac
    {\sum_{v_i \in C^+_\ell}  d^+_i }
    {\sum_{v_i \in C^+_\ell}  d_i }
    - \frac{\vert C^+_\ell \vert}{\vert V \vert}, ~0\right)
\end{equation}

\smallsection{Contextual stochastic block models (CSBMs)}. 
Stochastic block models (SBMs) are widely used graph models for network analysis~\cite{Holland1983StochasticSteps}, with distinct communities, or blocks, consisting of same-class nodes.
CSBMs~\cite{Deshpande2018ContextualModels} supplement SBMs by considering node features.
Recently, many researchers have used CSBMs and developed their variants for GNN analysis~\cite{Wei2022UnderstandingPerspective, Palowitch2022Graphworld:Gnns, Wu2023ANetworks, Baranwal2021GraphGeneralization, Baranwal2023EffectsNetworks, Luan2023WhenDistinguishability},
where they directly control dependence between (\textit{i}) topology and class (i.e., class-homophily \hc) and (\textit{ii}) features and class (i.e., feature distance FD).
It is, however, non-trivial to control dependence between topology and features with the prior CSBMs, while holding the other two dependence (i.e., \hc and FD) constant.

\section{Measure and Patterns}\label{sec:measure}
In this section, we address the first research question (\textbf{RQ1})
on the measure of \ax (i.e., dependence between graph topology and features).


\subsection{Design Goals and Intuition} \label{sec:measure_goal}
We target two central goals in designing \ax measure \hdot.
First, the measure \hdot should distinguish positive, negative, and no dependence.
Second, if a third variable is available (i.e., node class $Y$), the measure \hdot should control its potential effects on \ax.

Their intuition is as follows.
Let us first illustrate the examples of positive and negative dependence.
Consider a \textit{friendship network} (FN) of adults in the ages of 20-50s.
People with similar \textit{political inclinations} (PI) tend to become friends, so FN positively depends on PI.
People with \textit{care need} (CN) tend to seek friends without CN to receive their support, making FN negatively dependent on CN. 

Now, we further illustrate no dependence and class-control.
Consider the number of \textit{wrinkles} (WK).
People generally do not make friends based on their WK, but WK still positively depends on FN.
This is because the \textit{age group} (AG; i.e., node class) confounds the dependence between WK and FN.
Controlling for the effect of AG on WK, the dependence between WK and FN should no longer exist.


\subsection{Measure Design}
To achieve the design goals, we propose \textbf{\underline{C}}lass-controlled \textbf{\underline{F}}eature \textbf{\underline{H}}omophily (\textbf{CFH}) measure \hdot.

\smallsection{Class-controlled features}.
Assuming a linear relation between classes and features, we mitigate their association to define \textbf{\textit{class-controlled features}} $X \vert Y$.
\begin{equation}\label{eq.class_control}
    {X_{i} \vert Y} = X_{i} - \left(\frac{1}{\vert C^+_{Y_{i}} \vert}\sum_{v_{j}\in C^+_{Y_{i}}} X_{j}\right).
\end{equation}
For intuition, consider the former example. 
Let AG be the class and WK be the feature.
Since AG affects WK, WK distributions are different for each AG.
However, the AG-controlled WK, obtained with Eq.~\eqref{eq.class_control}, would have similar distributions across AGs.
Namely, Eq.~\eqref{eq.class_control} mitigates the association between AG and WK.
Eq.~\eqref{eq.class_control} is analogous to the variable control method of \textit{partial} and \textit{part correlation}~\cite{Stevens2009AppliedEd}.
We discuss their connection in Appendix~\ref{app:class-control}.


\smallsection{Measuring CFH}. 
We measure CFH \hdot with class-controlled features $X \vert Y$.
Let us define a distance function.

\begin{enumerate}[start=1,label={\bfseries D\arabic*)}]
    \item {Distance function $\mathbf{d}: (V \times 2^{V}) \mapsto \mathbb{R}_{\geq 0}$:} 
    \begin{equation}\label{eq.distance_func}
        \mathbf{d}(v_{i}, V') \coloneqq \frac{1}{\vert V'\vert}\sum_{v_{j} \in V'}\lVert (X_{i}~\vert~Y) - (X_{j}~\vert~Y) \rVert_{2}.
    \end{equation}
\end{enumerate}
Recall that $V'_i = V$\textbackslash $\set{v_i}$.
Given the distance function $\mathbf{d}(\cdot)$, 
we define \textit{\textbf{homophily baseline}} $b(v_{i}) = \mathbf{d}(v_{i}, V_{i}')$.
Homophily baseline $b(v_{i})$ can be interpreted as node $v_i$'s expected (i.e., average) distance to random nodes or to its neighbors $N_i$ when no \ax is assumed.

Based on the distance functions, we define node pair-level, node-level, and graph-level CFHs as follows:
\begin{enumerate}[start=1,label={\bfseries H\arabic*)}]
    \item Node pair-level CFH $\mathbf{h}^{(p)}_{ij}$: 
    \begin{equation}\label{eq.edge_homophily}
        \mathbf{h}^{(p)}_{ij} = \mathbf{h}((v_{i},v_{j}) \vert X,Y,E) \coloneqq b(v_{i}) - \mathbf{d}(v_{i}, \{v_{j}\}).
    \end{equation}

    \item Node-level CFH $\mathbf{h}^{(v)}_{i}$: 
    \begin{equation}\label{eq.node_homophily}
        \mathbf{h}^{(v)}_{i} = \mathbf{h}(v_{i}~\vert~X,Y,E) \coloneqq \frac{1}{\vert N_i \vert} \sum_{v_{j} \in N_i} \mathbf{h}^{(p)}_{ij}.
    \end{equation}
    \vspace{-5mm}

    \item Graph-level CFH $\mathbf{h}^{(G)}$: 
    \begin{equation}\label{eq.graph_homophily}
        \mathbf{h}^{(G)} = \mathbf{h}(G~\vert~X,Y,E) \coloneqq \frac{1}{\vert V\vert} \sum_{v_{j} \in V} \mathbf{h}^{(v)}_{j}.
    \end{equation}
    \vspace{-5mm}
\end{enumerate}
Simply put, CFH $\mathbf{h}(\cdot)$ measures neighbor distance relative to homophily baseline $b(\cdot)$, 
and {it meets the two design goals discussed in Sec.~\ref{sec:measure_goal}.
With the homophily baseline $b(\cdot)$, $\mathbf{h}(\cdot)$ distinguishes homophily (positive dependence), heterophily (negative dependence), and impartiality (no dependence).
At the same time, 
by measuring the distance with class-controlled features $X \vert Y$ (see Eq.~\eqref{eq.distance_func}), 
CFH $\mathbf{h}(\cdot)$ mitigates potential confounding by node class.}

Finally, we normalize $\mathbf{h}(\cdot)$ for good mathematical properties (Lemma~\ref{lem:measure_bound}-~\ref{lem:measure_mono}), which allows for its intuitive interpretation (discussed in Sec.~\ref{sec:measure_interpretation}).
\footnote{For completeness, if $b(\cdot) = 0$, we let $\mathbf{\Tilde{h}}^{(v)}_{i}, \mathbf{\Tilde{h}}^{(G)}=0$.}
\begin{enumerate}[start=1,label={\bfseries N\arabic*)}]
    \item Node-level normalization: 
    \begin{equation}\label{eq.node_norm}
        \mathbf{\Tilde{h}}^{(v)}_{i} = 
        \frac{\mathbf{h}^{(v)}_{i}}
        {\texttt{max}(b(v_{i}),\mathbf{d}(v_{i}, N_i))}.
    \end{equation}
    \vspace{-5mm}

    \item Graph-level normalization:
    \begin{equation}\label{eq.graph_norm}
        \mathbf{\Tilde{h}}^{(G)} = 
        \frac{\mathbf{h}^{(G)}}
        {\frac{1}{\abs{V}}~
        {\texttt{max}(\sum_{v_i \in V} b(v_{i}) , \sum_{v_i \in V} \mathbf{d}(v_{i}, N_i))}}.
    \end{equation} 
    \vspace{-2mm}
\end{enumerate}

\begin{lemma}({Boundedness})\label{lem:measure_bound}
    \hg, \hn $\in [-1,1]$, 
    and the bound is tight, i.e., $\inf_{G}\tilde{\mathbf{h}}^{(G)} = -1$ and $\sup_{G}\hg = 1$.
\end{lemma}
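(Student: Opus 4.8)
The whole statement reduces to one elementary fact: for real numbers $\alpha,\beta\ge 0$ that are not both zero, $\frac{\alpha-\beta}{\max(\alpha,\beta)}\in[-1,1]$ (if $\alpha\ge\beta$ then $0\le\alpha-\beta\le\alpha=\max$; if $\beta>\alpha$ then $-\beta\le\alpha-\beta\le 0$ using $\alpha\ge 0$, so the ratio lies in $[-1,0]$). So the first step is to put \hn{} and \hg{} into exactly this form. Averaging Eq.~\eqref{eq.edge_homophily} over $v_j\in N_i$ and comparing with Eq.~\eqref{eq.distance_func} gives $\frac{1}{\abs{N_i}}\sum_{v_j\in N_i}\mathbf{d}(v_i,\{v_j\})=\mathbf{d}(v_i,N_i)$, hence $\mathbf{h}^{(v)}_i=b(v_i)-\mathbf{d}(v_i,N_i)$ by Eq.~\eqref{eq.node_homophily} and, summing, $\mathbf{h}^{(G)}=\frac{1}{\abs V}\big(\sum_i b(v_i)-\sum_i\mathbf{d}(v_i,N_i)\big)$ by Eq.~\eqref{eq.graph_homophily}. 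Since $\mathbf{d}$ is an average of Euclidean norms it is $\ge 0$, so $b(v_i),\mathbf{d}(v_i,N_i)\ge 0$ and, writing $B:=\sum_i b(v_i)$ and $D:=\sum_i\mathbf{d}(v_i,N_i)$, also $B,D\ge 0$. Substituting into Eqs.~\eqref{eq.node_norm} and \eqref{eq.graph_norm} yields $\hn=\frac{b(v_i)-\mathbf{d}(v_i,N_i)}{\max(b(v_i),\mathbf{d}(v_i,N_i))}$ and $\hg=\frac{B-D}{\max(B,D)}$; the elementary fact, together with the footnote convention $\hn,\hg:=0$ when the relevant baseline is $0$, then gives $\hn,\hg\in[-1,1]$.

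\textbf{Tight supremum.} It suffices to produce one graph with $\hg=1$, i.e.\ with $D=0<B$. Take a single class on four nodes, feature vectors $X_{v_1}=X_{v_2}=a$, $X_{v_3}=X_{v_4}=-a$ for any $a\neq 0$; the class mean is $0$, so by Eq.~\eqref{eq.class_control} the class-controlled features are again $a,a,-a,-a$. With edge set $\{(v_1,v_2),(v_3,v_4)\}$ every neighbor shares the same class-controlled feature, so $\mathbf{d}(v_i,N_i)=0$ for all $i$ and $D=0$, whereas $b(v_1)=\tfrac13\big(0+\norm{2a}+\norm{2a}\big)>0$, so $B>0$ and $\hg=\frac{B-0}{\max(B,0)}=1$. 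Combined with the upper bound this gives $\sup_G\hg=1$ (the same example, viewed at a node, gives $\hn=1$).

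\textbf{Tight infimum.} First, $\hg=-1$ is never attained: $\frac{B-D}{\max(B,D)}=-1$ forces $B=0$, hence $b(v_i)=0$ for all $i$ and (norms being nonnegative) all class-controlled features equal; but class-controlled features have zero mean within each class, so they are all $0$, whence $D=0$ too and the convention sets $\hg=0\neq-1$. To show $-1$ is still the infimum, use the $n$-node star $G_n$ in one class: center $v_1$ with $X_{v_1}=0$ and leaves $v_2,\dots,v_n$ with $X_{v_j}=e\neq0$, with $v_1$ adjacent to every leaf. A short computation (only $\norm e$ and $0$ occur as pairwise class-controlled distances) gives $b(v_1)=\mathbf{d}(v_1,N_1)=\norm e$, $b(v_j)=\tfrac{\norm e}{n-1}$, $\mathbf{d}(v_j,N_j)=\norm e$ for each leaf, so $B=2\norm e$, $D=n\norm e$, and $\hg=\frac{2-n}{n}\to-1$ as $n\to\infty$. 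With the $[-1,1]$ bound this yields $\inf_G\hg=-1$ (the same $G_n$ shows $\hn\to-1$ at a leaf).

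\textbf{Main obstacle.} The only subtle part is the infimum: one must realize that $-1$ is a limit rather than a value, and build graphs in which, for most nodes, the homophily baseline $b(v_i)$ is negligible compared with the neighbor distance $\mathbf{d}(v_i,N_i)$ — the star with a single ``far'' center does exactly this, since a leaf is at distance $0$ from all $n-2$ other leaves and far only from the center. Everything else is bookkeeping around the normalization and the $b(\cdot)=0$ convention; one should also state upfront that $\abs V\ge 2$ and the relevant neighborhoods are nonempty so that all quantities are defined.
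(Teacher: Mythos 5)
Your proof is correct, and the boundedness half coincides with the paper's argument: both of you rewrite \hn and \hg in the form $(\alpha-\beta)/\max(\alpha,\beta)$ with $\alpha,\beta\ge 0$ (your explicit verification that $\mathbf{h}^{(v)}_i=b(v_i)-\mathbf{d}(v_i,N_i)$ and $\mathbf{h}^{(G)}=\frac{1}{|V|}(B-D)$ is exactly what the paper uses implicitly) and invoke the elementary inequality, with the $b(\cdot)=0$ convention covering the degenerate case. Where you genuinely diverge is the tightness part. For the supremum, the paper describes the same family abstractly (a disconnected graph whose adjacent nodes share identical class-controlled features while non-adjacent ones differ); your four-node, two-edge example is a concrete instance, so nothing new but nothing lost. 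For the infimum, the paper only sketches an asymptotic scenario in which $\mathbf{d}(v_i,N_i)\to\infty$ and $b(v_i)$ is of strictly smaller order \emph{for every node}; taken literally this per-node condition cannot hold, since the triangle inequality gives $\lVert (X_u\vert Y)-(X_v\vert Y)\rVert_2\le\frac{n-1}{n-2}\bigl(b(v_u)+b(v_v)\bigr)$ for every pair, so at the node maximizing $b(\cdot)$ one has $\mathbf{d}(v_i,N_i)\le\frac{2(n-1)}{n-2}\,b(v_i)$. Your one-class star sidesteps this neatly: the lone center violates the per-node domination, but the aggregate sums $B=2\norm{e}$, $D=n\norm{e}$ still give $\hg=\frac{2-n}{n}\to-1$, which is a fully rigorous witness that $\inf_G\hg=-1$; you also prove non-attainment of $-1$ (via $B=0\Rightarrow$ all class-controlled features are $0\Rightarrow D=0$ and the convention gives $0$), which the paper only hints at by calling its scenario asymptotic. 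So your route buys an explicit, verifiable construction and a sharper statement (the infimum is not a minimum), at the cost of a slightly longer computation; the paper's version is shorter but its infimum scenario is loosely specified. The only cosmetic additions I would make are the ones you already flag: state $|V|\ge 2$, no isolated nodes, and that with a single class the class control merely translates all features, so pairwise distances are unchanged.
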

\vspace{1mm}

\begin{lemma}({Scale-Invariance})\label{lem:measure_scale}
    $\mathbf{\Tilde{h}}(v_i \vert \ X, \cdot) = \mathbf{\Tilde{h}}(v_i \vert \ cX, \cdot)$ and 
    $\mathbf{\Tilde{h}}(G \vert \ X, \cdot) = \mathbf{\Tilde{h}}(G \vert \ cX, \cdot)$,
    $\forall c \in \mathbb{R}$\textbackslash $\set{0}$.
\end{lemma}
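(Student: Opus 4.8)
The plan is to track how each ingredient of the definitions of \hn and \hg transforms when $X$ is replaced by $cX$ with $c \in \mathbb{R} \setminus \set{0}$, and to observe that every unnormalized quantity scales by the common positive factor $\abs{c}$, which then cancels against the normalizer.

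First I would note that the class-controlled feature map of Eq.~\eqref{eq.class_control} is linear in $X$: replacing $X$ by $cX$ replaces $X_i \vert Y$ by $c\,(X_i \vert Y)$ for every $i$, since scalar multiplication commutes with subtracting the class mean. By absolute homogeneity of the Euclidean norm, the distance function of Eq.~\eqref{eq.distance_func} then satisfies $\mathbf{d}(v_i, V') \mapsto \abs{c}\,\mathbf{d}(v_i, V')$ for every $V' \subseteq V$; in particular the homophily baseline $b(v_i) = \mathbf{d}(v_i, V_i')$ and the neighbor distance $\mathbf{d}(v_i, N_i)$ both acquire the factor $\abs{c}$.

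Next I would propagate this through the CFH hierarchy. From Eq.~\eqref{eq.edge_homophily}, $\mathbf{h}^{(p)}_{ij} = b(v_i) - \mathbf{d}(v_i,\set{v_j}) \mapsto \abs{c}\,\mathbf{h}^{(p)}_{ij}$; averaging over $N_i$ in Eq.~\eqref{eq.node_homophily} and then over $V$ in Eq.~\eqref{eq.graph_homophily} preserves the factor, so $\mathbf{h}^{(v)}_i \mapsto \abs{c}\,\mathbf{h}^{(v)}_i$ and $\mathbf{h}^{(G)} \mapsto \abs{c}\,\mathbf{h}^{(G)}$. The node-level normalizer $\texttt{max}(b(v_i), \mathbf{d}(v_i, N_i))$ in Eq.~\eqref{eq.node_norm} is the maximum of two quantities each multiplied by the same $\abs{c} > 0$, hence scales by $\abs{c}$; likewise the graph-level normalizer of Eq.~\eqref{eq.graph_norm}, being built from sums of the same quantities, scales by $\abs{c}$. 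Forming the ratios in Eqs.~\eqref{eq.node_norm} and~\eqref{eq.graph_norm}, the factor $\abs{c}$ cancels, yielding $\mathbf{\Tilde{h}}(v_i \vert cX, \cdot) = \mathbf{\Tilde{h}}(v_i \vert X, \cdot)$ and $\mathbf{\Tilde{h}}(G \vert cX, \cdot) = \mathbf{\Tilde{h}}(G \vert X, \cdot)$.

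The only point requiring care — more a bookkeeping detail than a real obstacle — is the degenerate case in which the normalizer vanishes (equivalently $b(v_i) = 0$, or the corresponding graph-level sum is $0$), where by the stated convention \hn and \hg are set to $0$. Since $b(v_i) = 0$ if and only if $\abs{c}\,b(v_i) = 0$ for $c \neq 0$, this case is mapped to itself and the claimed equalities continue to hold. I would also remark that the hypothesis $c \neq 0$ is used essentially, both to keep $\abs{c}$ invertible in the cancellation and because at $c = 0$ all class-controlled features collapse to the origin.
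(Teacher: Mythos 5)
Your proposal is correct and follows essentially the same route as the paper's proof: linearity of the class-controlled features under scaling, absolute homogeneity of the Euclidean norm giving the factor $\abs{c}$ in $\mathbf{d}(\cdot)$ and $b(\cdot)$, and cancellation of $\abs{c}$ in the normalized ratios at both node and graph level. Your extra remark that the degenerate case $b(v_i)=0$ is preserved under scaling is a small refinement the paper leaves implicit via its stated convention, but it does not change the argument.
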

\vspace{1mm}

{\begin{lemma}(Monotonicity)\label{lem:measure_mono}
    Fix features of $v_{j} \in V \setminus N_{i}$, 
    \hn is a monotonically decreasing function of  $\mathbf{d}(v_i, N_i)$. 
\end{lemma}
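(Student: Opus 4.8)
The plan is to reduce the statement to elementary single-variable monotonicity. Write $z_j := X_j \mid Y$ for class-controlled features, and abbreviate $D := \mathbf{d}(v_i, N_i)$ and $b := b(v_i) = \mathbf{d}(v_i, V_i')$. First I would note that, by linearity, Eqs.~\eqref{eq.edge_homophily}--\eqref{eq.node_homophily} give $\mathbf{h}^{(v)}_i = \frac{1}{|N_i|}\sum_{v_j \in N_i}\bigl(b - \mathbf{d}(v_i,\{v_j\})\bigr) = b - D$, and that, by Eq.~\eqref{eq.distance_func},
\[
  (n-1)\,b \;=\; \underbrace{\sum_{v_j \in N_i}\lVert z_i - z_j\rVert_2}_{=\,d_i D} \;+\; \underbrace{\sum_{v_j \in V\setminus(N_i\cup\{v_i\})}\lVert z_i - z_j\rVert_2}_{=:\,T}.
\]
Since $v_i \in V\setminus N_i$, fixing the features of every $v_j \in V\setminus N_i$ (equivalently, holding the relevant class means, hence those class-controlled features, fixed) makes $T$ a constant; therefore $b = \alpha D + \beta$ with $\alpha := d_i/(n-1) \in (0,1]$ and $\beta := T/(n-1) \ge 0$ — an affine, nondecreasing function of $D$. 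In particular $\hn$ is genuinely a function of the single variable $D$. (If $d_i = 0$ or $b = 0$ the claim is vacuous by the footnote convention, so assume $d_i \ge 1$, $n \ge 2$, $b>0$.)

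Next I would substitute into the node-level normalization Eq.~\eqref{eq.node_norm}:
\[
  f(D) \;:=\; \hn \;=\; \frac{b-D}{\max(b,D)} \;=\; \frac{(\alpha-1)D + \beta}{\max(\alpha D + \beta,\,D)},
\]
and split on the sign of $b - D = (\alpha-1)D+\beta$, i.e., on whether $D \le D^\star := \beta/(1-\alpha)$ (read $D^\star = +\infty$ when $\alpha=1$). On $\{D \le D^\star\}$ we have $b \ge D$, so $f(D) = 1 - \frac{D}{\alpha D+\beta} = 1 - \bigl(\alpha + \beta/D\bigr)^{-1}$; on $\{D \ge D^\star\}$ we have $b \le D$, so $f(D) = (\alpha-1) + \beta/D$.

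Finally I would check monotonicity on each piece and verify they fit together. Since $\beta \ge 0$, the map $D \mapsto \beta/D$ is nonincreasing on $(0,\infty)$; hence $(\alpha-1)+\beta/D$ is nonincreasing, and $\alpha + \beta/D$ is nonincreasing and positive, so its reciprocal is nondecreasing and $1-\bigl(\alpha+\beta/D\bigr)^{-1}$ is nonincreasing. Thus $f$ is nonincreasing on $[0,D^\star]$ and on $[D^\star,\infty)$; at the junction, $\alpha D^\star + \beta = D^\star$, so both branches give $f(D^\star)=0$, i.e., $f$ is continuous there, and a function continuous and nonincreasing on each of two abutting intervals is nonincreasing on their union. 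This is the claim; moreover the decrease is strict wherever $\beta>0$, i.e., whenever some non-neighbor $v_j$ of $v_i$ has $\lVert z_i - z_j\rVert_2 > 0$. The hard part is purely the bookkeeping around the $\max$: one must notice that $b$ is not held fixed but co-varies affinely with $D$, pin down the case boundary $D^\star$ so the two branches agree, and dispatch the degenerate cases $d_i=0$, $b=0$, $\beta=0$, $\alpha=1$ via the stated conventions; the calculus itself is immediate once $f$ is in the closed form above.
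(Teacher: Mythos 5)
Your proof is correct and follows essentially the same route as the paper's: both express $b(v_i)$ as an affine function of $\mathbf{d}(v_i,N_i)$ (the paper's $b = aK + C$ is your $\alpha D + \beta$) and then split on which argument attains the $\max$ in Eq.~\eqref{eq.node_norm}. If anything, your write-up is slightly more careful than the paper's — you check continuity of the two branches at the junction $D^\star$ and state exactly when the decrease is strict (namely $\beta>0$), whereas the paper differentiates each branch (with minor algebraic slips in the stated derivatives, which should be $-C/(aK+C)^2$ and $-C/K^2$) and merges the cases without comment.
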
}

All the proofs are in Appendix~\ref{app:proofs}.

\begin{figure}[t] 
    \centering
    
    \includegraphics[width=\linewidth]{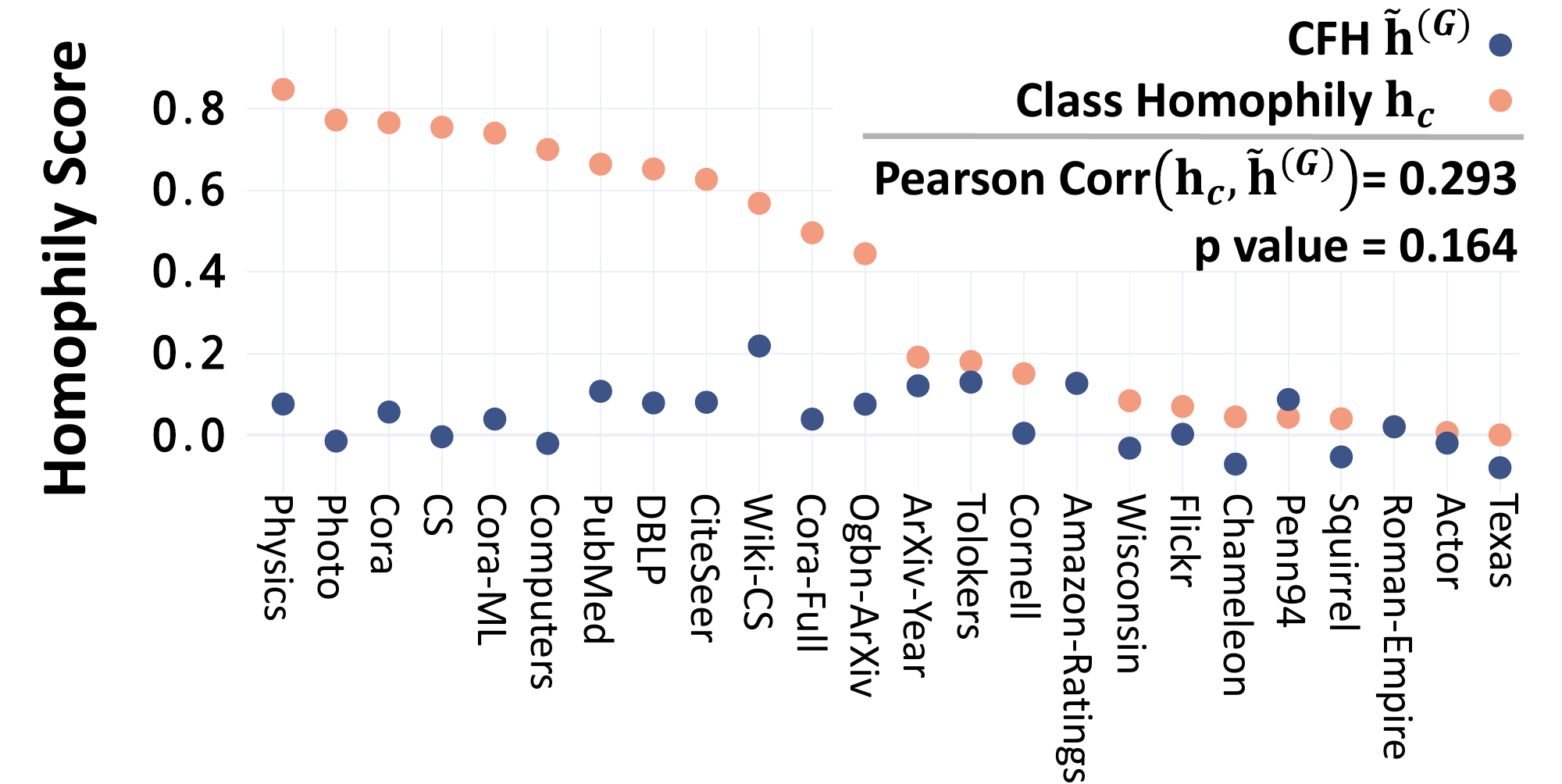}

    \caption{\label{fig:data_overall_statistics}
    \bolden{Benchmark Graph Statistics. }
     Graph-level CFH scores $\mathbf{\Tilde{h}}^{(G)}$ 
     (\textit{i}) are generally positive and small, with 
     (\textit{ii}) low correlation to class-homophily $\mathbf{h}_c$.}
    \vspace{2mm}
\end{figure}

\subsection{Measure Interpretation}\label{sec:measure_interpretation}

We first focus on the node-level interpretation of CFH \hdot. 
Recall that $\mathbf{d}(v_{i}, N_i)$ and $b(v_i)$ respectively represent node $v_i$'s distance to neighbors and random nodes.

\smallsection{Sign}.
Node-level CFH \hn$>0$ means that the node $v_i$ is closer to its neighbors than to random nodes and, thus, \textit{homophilic}.
\hn$<0$ means that the node $v_i$ is farther to its neighbors than to random nodes and, thus, \textit{heterophilic}.

\smallsection{Zero}.
Node-level CFH $\mathbf{\Tilde{h}}^{(v)}_i = 0$ indicates that the node $v_i$ has the same distance to its neighbors and to random nodes, 
suggesting {\textit{impartiality}} or \textit{no \ax}.
Several different cases entail $\mathbf{\Tilde{h}}^{(v)}_i = 0$ (in expectation).
For example,
(\textit{i}) when the neighbors of $v_i$ are chosen uniformly at random from all the other nodes regardless of their features or
(\textit{ii}) when all the nodes have the same feature, $\mathbb{E}[\mathbf{\Tilde{h}}^{(v)}_i] = 0$.

\smallsection{Magnitude}.
Increasing a node $v_i$'s distance to its neighbors reduces node-level CFH ${\mathbf{\Tilde{h}}^{(v)}_i}$ (Lemma~\ref{lem:measure_mono}). 
We rephrase Eq.~\eqref{eq.node_norm} as follows:
\vspace{1mm}
\\
$
{\mathbf{\Tilde{h}}^{(v)}_i} = 
\begin{cases}
    1 - \frac{\mathbf{d}(v_{i}, N_i)}{b(v_i)}, &\text{~if~} \mathbf{d}(v_{i}, N_i) \leq b(v_i), \\
    \frac{b(v_i)}{\mathbf{d}(v_{i}, N_i)} - 1, &\text{~if~} \mathbf{d}(v_{i}, N_i) > b(v_i).
\end{cases}
\vspace{1mm}
$ \\
Intuitively, 
a node $v_i$ is $\frac{ \vert {\mathbf{\Tilde{h}}^{(v)}_i} \vert }{1- \vert {\mathbf{\Tilde{h}}^{(v)}_i} \vert}$ 
times closer (or farther) to its neighbors than to random nodes,
if $\mathbf{\Tilde{h}}_{i}^{(v)} > 0$ (or $< 0$).

\smallsection{Summary}.
In summary, for each node $v_i$, its distance to random nodes (i.e., $b(v_i)$) serves as an anchor to determine the sign and magnitude of its CFH \hn, making it readily interpretable and comparable across different graphs.

\smallsection{Graph-level interpretation}.
A graph-level CFH \hg is an aggregation of node-level CFH ${\mathbf{h}^{(v)}_i}$'s.
Details are in Appendix~\ref{app:measure}.


\subsection{Patterns in Benchmark Datasets}

Here, we analyze node classification benchmark datasets using CFH \hdot.
First, we measure graph-level CFH \hg in 24 datasets (Fig.~\ref{fig:data_overall_statistics}).
Most of the graphs (23 out of 24) have \hg scores below 0.13, 
and 16 graphs have positive \hg scores.
Their mean $\vert \mathbf{\Tilde{h}}^{(G)} \vert$ is 0.06.
Recall that the full reachable range of \hg is $[-1,1]$ (Lemma~\ref{lem:measure_bound}).

\begin{customobs}{1}\label{obs:absolte_homophily}
    The real-world graph benchmarks tend to show small, positive CFH scores.%
    \footnote{By small CFH scores, we mean the distances from nodes to their neighbors are \textit{highly} close to their homophily baselines,
    numerically evidenced by the mean $\vert \mathbf{\Tilde{h}}^{(G)} \vert$ of 0.06.
    }
\end{customobs}

We further analyze the relation between CFH \hg and class-homophily $\mathbf{h}_c$ (Fig.~\ref{fig:data_overall_statistics}).
Surprisingly, their correlation is low (Pearson's $r$ = 0.293, Kendall's $\tau$ = 0.196) and not statistically significant ($p$ value = 0.164 and 0.191, respectively).

\begin{customobs}{2}\label{obs:relative_homophily}
    In the real-world graph benchmarks, CFH and class-homophily have a small, positive correlation.
\end{customobs}

From Observations~\ref{obs:absolte_homophily}-\ref{obs:relative_homophily}, 
we conclude that CFH \hdot and class-homophily \hc show distinct patterns in the real-world benchmark graphs.
We, thus, argue that investigating the impact of CFH \hdot for GNNs has a unique significance.

Lastly, we examine how the feature shuffle (recall Fig.~\ref{fig:strange_phe}(a)) affects CFH.
For graph-level CFH \hg, increasing the shuffled node ratio reduces its magnitude $\vert \mathbf{\Tilde{h}}^{(G)} \vert$ (Fig.~\ref{fig:feat_homophiy_over_shuffle}).
Also, the distribution of node-level CFH scores (\hn's) tends to center around 0 after the feature shuffle. 
We find similar results in 
19 out of 24 datasets, while the remaining five do not fully obey the pattern.

\begin{customobs}{3}\label{obs:shuffle_homophily}
    CFH scores tend to approach zero after shuffling the features of nodes from the same class.
\end{customobs}

{In later sections, Observation~\ref{obs:shuffle_homophily} serves to bridge a GNN theory and GNN performance in the real-world graphs, explicating the intriguing phenomenon (Fig.~\ref{fig:strange_phe}).}

To further corroborate each Observation, we provide more in-depth analysis in Appendix~\ref{app:patterns}, together with dataset description and statistics.

\begin{figure}[t] 
    \centering
    
    \includegraphics[width=\linewidth]{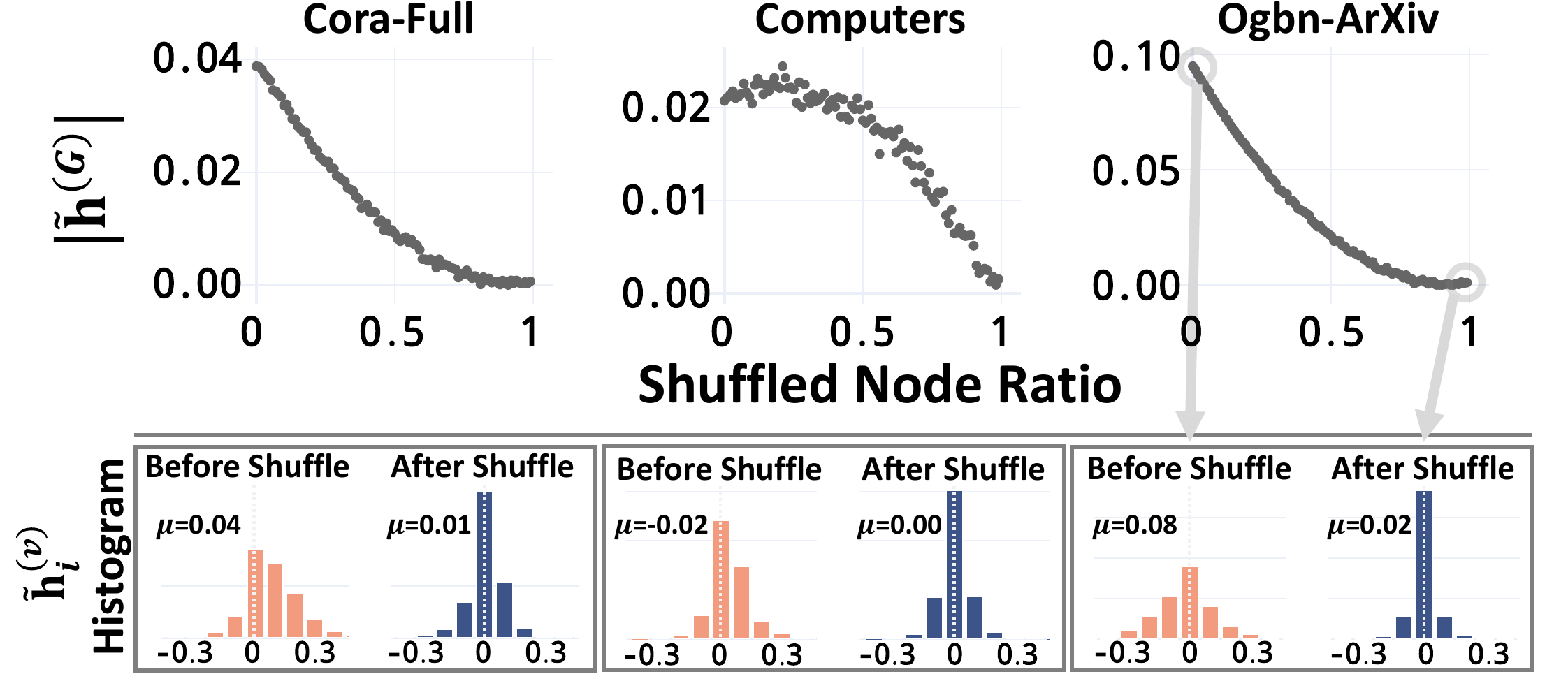}

    \caption{\label{fig:feat_homophiy_over_shuffle}
    \bolden{The Effect of Feature Shuffle on CFH.}
    Both graph- and node-level CFH scores, \hg and \hn, tend to approach zero over the feature shuffles.
    }
    \vspace{2mm}
\end{figure}

\section{Graph Model and GNN Theory}\label{sec:graph_model}

We first address the second research question (\textbf{RQ2}) theoretically {with a random graph model}.

\textbf{RQ2.1} [\textit{Graph Model and Theory}]. How does \ax affect graph convolution in a random graph model?

\subsection{Graph Model: \gmodel}

\smallsection{\gmodel overview}.
{To control class-controlled feature homophily (CFH) \hdot with a graph model, we propose \gmodel. 
Compared to the previous CSBMs, \gmodel is equipped with a new \ax strength parameter $\tau$.
We provide a verbal description here, and its formal mathematical expression can be found in Appendix~\ref{app:csbmx}.}

{\gmodel uses $(n, \mu_0, \mu_1, \Sigma_0, \Sigma_1, d^+, d^-, \tau)$ as its parameters.}
It initializes $n$ (assume even) number of nodes and equally divides them into two classes.
For each node $v_i$, based on its class $Y_i$, \gmodel samples its feature $X_i$ from a Gaussian distribution with a mean vector $\mu_{Y_i}$ and a covariance matrix $\Sigma_{Y_i}$ (i.e., $X_i = \mathcal{N}(\mu_{Y_i}, \Sigma_{Y_i})$).

Then, directed edges are sampled based on the node features $X$ and classes $Y$, 
where parameter $\tau$ influences the sampling weights $\mathbb{P}_{ij}$'s.
Specifically, CSBM-X first computes neighbor sampling weights $\mathbb{P}_{ij}$'s as follows:
\begin{align*}
    \mathbb{P}_{ij} &=
    \begin{dcases}
        \frac{\operatorname{exp}(\tau \mathbf{h}^{(p)}_{ij})}{\sum_{t \in C^+_{Y_i} \setminus \set{v_i}} \operatorname{exp}(\tau \mathbf{h}^{(p)}_{it})} &\quad\text{, if } Y_i = Y_j, \\
        \frac{\operatorname{exp}(\tau \mathbf{h}^{(p)}_{ij})}{\sum_{t \in C^-_{Y_i}} \operatorname{exp}(\tau \mathbf{h}^{(p)}_{it})} &\quad\text{, if } Y_i \neq Y_j.
    \end{dcases}
\end{align*}
A positive (or negative) $\tau$ exaggerates neighbor sampling weight $\mathbb{P}_{ij}$'s among the node pairs with higher (or lower) pair-level CFH $\mathbf{h}^{(p)}_{ij}$ (Eq.~\eqref{eq.edge_homophily}).
By the neighbor sampling weights $\mathbb{P}_{ij}$'s, for each node $v_i$, CSBM-X samples $d^+$ same-class (and $d^-$ {different}-class) neighbors from the same-class node set $C^+_{Y_i} \setminus \set{v_i}$ (and different-class node set $C^-_{Y_i}$) without replacement.
With its sampled nodes, neighbors, features, and classes, 
CSBM-X returns its graph $\mathcal{G} \coloneqq \mathcal{G}(n, \mu_0, \mu_1, \Sigma_0, \Sigma_1, d^+, d^-, \tau) = (V, E, X, Y)$.

\smallsection{\gmodel properties}. 
The key innovation of \gmodel involves satisfying good properties in controlling dependence among classes $Y$, features $X$, and graph topology $A$.
First, the parameters ($\mu_0, \mu_1, \Sigma_0, \Sigma_1$) control feature distance FD (Eq.~\eqref{eq.fd}; $\mathrm{X}\text{-}\mathrm{Y}$ dependence).
Second, the parameters ($d^+, d^-$) control \hc (Eq.~\eqref{eq.class_homophily}; $\mathrm{A}\text{-}\mathrm{Y}$ dependence).
Last, the parameter $\tau$ controls CFH \hdot (Eqs.~\eqref{eq.node_norm}-\eqref{eq.graph_norm}; \ax).

Existing CSBMs can also control $\mathrm{X}\text{-}\mathrm{Y}$ and $\mathrm{A}\text{-}\mathrm{Y}$ dependence~\cite{Deshpande2018ContextualModels, Abu-El-Haija2019Mixhop:Mixing, Chien2021AdaptiveNetwork, Palowitch2022Graphworld:Gnns, Baranwal2023EffectsNetworks, Luan2023WhenDistinguishability,wang2024understanding}.
However, the proposed \gmodel further controls \ax (CFH \hdot), 
satisfying two additional good properties.
\footnote{In theoretical statements, we focus on unnormalized CFH $\mathbf{h}(\cdot)$ for simplicity.}

\vspace{1mm}
\begin{lemma}[$\tau$ controls CFH $\mathbf{h}(\cdot)$ \textit{precisely}]\label{lem:relation_tau_hg}
    {Given $0 < {{\max}}(d^+, d^-) < \frac{n}{2}$ and}
    fix the other parameters except for $\tau$.
    (i) $\bbE[\mathbf{h}^{(G)}]$ strictly increases as $\tau$ increases.
    (ii) When $\Sigma_0 = \Sigma_1 \neq \mathbf{0}$, $\bbE[\mathbf{h}^{(G)}] = 0$ if and only if $\tau = 0$.
\end{lemma}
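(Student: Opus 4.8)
The plan is to condition on the node features $X$ --- which are drawn before the edges and carry no $\tau$ --- analyze the edge sampling for fixed $X$, and finally average over $X$. Write
\[
\mathbf{h}^{(G)}=\frac{1}{n\,(d^{+}+d^{-})}\sum_{v_i\in V}\sum_{v_j\in N_i}\mathbf{h}^{(p)}_{ij},\qquad \mathbf{h}^{(p)}_{ij}=b(v_i)-\delta_{ij},\quad \delta_{ij}:=\mathbf{d}(v_i,\{v_j\}).
\]
Given $X$, each $b(v_i)$ and each $\delta_{ij}$ is a fixed number, and in the weight $\mathbb{P}_{ij}\propto\exp(\tau\mathbf{h}^{(p)}_{ij})=\exp(\tau b(v_i))\exp(-\tau\delta_{ij})$ the factor $\exp(\tau b(v_i))$ is constant over $v_i$'s candidates and cancels. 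Hence the same-class (resp.\ different-class) out-neighbors of $v_i$ form a without-replacement sample of size $d^{+}$ (resp.\ $d^{-}$) drawn from the pool $C^{+}_{Y_i}\setminus\{v_i\}$ (resp.\ $C^{-}_{Y_i}$) with item weights $\propto e^{-\tau\delta_{ij}}$. Everything then reduces to one question about a single pool: given values $\{\delta_t\}_{t=1}^{m}$ and a weighted without-replacement sample $S$ of size $k$ with weights $\propto e^{-\tau\delta_t}$, how does $\varphi(\tau):=\mathbb{E}_\tau\!\left[\sum_{t\in S}\delta_t\right]$ move with $\tau$?

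For part (i) I would show $\varphi$ is non-increasing in $\tau$, and \emph{strictly} decreasing whenever the $\delta_t$ are not all equal and $1\le k\le m-1$. The clean route is the exponential-family structure: the induced law on size-$k$ subsets is $\mathbb{P}(S)\propto e^{-\tau\,T(S)}$ with sufficient statistic $T(S)=\sum_{t\in S}\delta_t$, so $\varphi(\tau)=\mathbb{E}_\tau[T]$ has $\varphi'(\tau)=-\Var_\tau(T)\le 0$, with equality iff $T$ is a.s.\ constant on the support, i.e.\ iff all $\delta_t$ coincide. Granting this, for each fixed $X$,
\[
\mathbb{E}\!\left[\sum_{v_j\in N_i}\mathbf{h}^{(p)}_{ij}\,\right]=(d^{+}+d^{-})\,b(v_i)-\varphi^{+}_i(\tau)-\varphi^{-}_i(\tau)
\]
is non-decreasing in $\tau$; and on a probability-one set of $X$ the distances within each pool are pairwise distinct (the feature distributions being non-degenerate, a standing assumption), so under the bounds $0<\max(d^{+},d^{-})<n/2$ --- which keep at least one pool's sample genuinely random --- the increase is strict for every $\tau$. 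Summing over $v_i$ and averaging over $X$ yields (i).

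For part (ii), take $\Sigma_0=\Sigma_1=\Sigma\neq\mathbf{0}$. For the direction $\tau=0\Rightarrow\mathbb{E}[\mathbf{h}^{(G)}]=0$: at $\tau=0$ each pool is sampled uniformly, so writing $\overline{\mathbf{d}}^{+}_i$ and $\overline{\mathbf{d}}^{-}_i$ for $v_i$'s mean class-controlled distance to its same- and different-class pools and using $b(v_i)=\tfrac{1}{n-1}\big((n/2-1)\overline{\mathbf{d}}^{+}_i+(n/2)\overline{\mathbf{d}}^{-}_i\big)$, a one-line rearrangement gives, conditionally on $X$,
\[
\mathbb{E}[\mathbf{h}^{(v)}_i]=\Big(\tfrac{n/2-1}{n-1}-\tfrac{d^{+}}{d^{+}+d^{-}}\Big)\big(\overline{\mathbf{d}}^{+}_i-\overline{\mathbf{d}}^{-}_i\big).
\]
When $\Sigma_0=\Sigma_1$ the class-controlled feature vectors are, inside the model, identically distributed across the two classes, so $\mathbb{E}_X[\overline{\mathbf{d}}^{+}_i]=\mathbb{E}_X[\overline{\mathbf{d}}^{-}_i]$ and the right-hand side has mean $0$; hence $\mathbb{E}[\mathbf{h}^{(G)}]=0$. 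For the converse $\mathbb{E}[\mathbf{h}^{(G)}]=0\Rightarrow\tau=0$: by (i) the map $\tau\mapsto\mathbb{E}[\mathbf{h}^{(G)}]$ is strictly increasing and vanishes at $\tau=0$, hence is negative for $\tau<0$ and positive for $\tau>0$.

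I expect the main obstacle to be the monotonicity of $\varphi(\tau)$ when $k>1$: it is immediate for the conditional (exponential-family) subset sampling above, but if the formal model in Appendix~\ref{app:csbmx} uses the sequential without-replacement scheme (draw one item, delete it, renormalize, repeat), one needs a separate --- still elementary --- argument, e.g.\ an induction on $k$ via the first-draw decomposition, to recover the same conclusion; one must also check that the degree bounds genuinely exclude the degenerate configurations in which a pool is sampled in full or left empty. The remaining ingredients --- the cancellation of $\exp(\tau b(v_i))$, the reduction to single-pool sampling by linearity of expectation, and the $\Sigma_0=\Sigma_1$ exchangeability that pins the $\tau=0$ value to exactly $0$ --- are routine.
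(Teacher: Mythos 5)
Your proposal is correct in substance and follows the same skeleton as the paper's proof: condition on the features (whose law is $\tau$-free), reduce via linearity to showing that each node's expected distance to its sampled neighbors, $\bbE[\mathbf{d}(v_i,N_i)]$, decreases strictly in $\tau$, and for (ii) observe that at $\tau=0$ the within-pool sampling is uniform so that, with $\Sigma_0=\Sigma_1$, the identically distributed class-controlled features force $\bbE[\mathbf{h}^{(v)}_i]=0$; the converse of (ii) then follows from (i) exactly as you say. Where you genuinely diverge is the key monotonicity step. The paper argues directly on the inclusion probabilities $\Pr[v_j\in N_i]$ with a threshold argument: increasing $\tau$ shifts sampling weight from far candidates to near candidates about some cutoff $\mathbf{d}_{th}$, so the weighted sum $\sum_{v_j}\Pr[v_j\in N_i]\,\mathbf{d}(v_i,\set{v_j})$ decreases (this treats the per-draw softmax weights and the without-replacement inclusion probabilities rather interchangeably, i.e., it is informal about exactly the point you flag). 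You instead exploit the cancellation of $\exp(\tau b(v_i))$ and invoke an exponential-family identity, $\varphi'(\tau)=-\Var_\tau(T)$ for the subset statistic $T(S)=\sum_{t\in S}\delta_t$, which gives strictness cleanly whenever the pool distances are not all equal and $1\le k\le m-1$ (your reading of the degree bounds is the right one). The caveat you yourself raise is real: the identity holds for the conditional (Gibbs) law $\Pr[S]\propto e^{-\tau T(S)}$, whereas the formal model in Appendix~\ref{app:csbmx} defines neighbor sets by sequential weighted sampling without replacement, whose subset law is not of that form; your proposed "elementary induction via the first-draw decomposition" is the one genuinely nontrivial piece you have not supplied, so your argument as written proves the lemma for a slightly different (but natural) sampling design. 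Since the paper's own proof also elides this without-replacement subtlety, your route is, if anything, the more transparent one on the step where both arguments carry the real content; your closed-form rearrangement for (ii), giving $\bbE[\mathbf{h}^{(v)}_i\mid X]$ proportional to the gap between mean same- and different-class pool distances, is a slightly sharper presentation of the same cancellation the paper performs in expectation.
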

\begin{lemma}[$\tau$ controls CFH $\mathbf{h}(\cdot)$ \textit{only}]\label{lem:tau_indep_hdot}
    Fix the other parameters except for $\tau$,
    the FD and \hc of $\calG$ are constant regardless of the value of $\tau$.
\end{lemma}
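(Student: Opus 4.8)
The plan is to handle the two assertions separately and observe that in each case the quantity is a deterministic function of parameters other than $\tau$, so $\tau$ cannot affect it. The underlying structural fact to exploit is that $\tau$ enters \gmodel \emph{only} through the neighbor sampling weights $\mathbb{P}_{ij}$, i.e. only in the step that decides \emph{which} nodes become neighbors, and it touches neither the feature-generating step nor the per-node neighbor \emph{counts}.

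First I would dispatch FD. By Eq.~\eqref{eq.fd}, $\text{FD}(C_0,C_1)$ is a fixed function of the class-conditional means $\mu_0,\mu_1$ and covariances $\Sigma_0,\Sigma_1$ (well-defined under the standing assumption that $\Sigma_0+\Sigma_1$ is invertible). In \gmodel, each $X_i$ is drawn from $\mathcal{N}(\mu_{Y_i},\Sigma_{Y_i})$ in a step that precedes and is independent of edge sampling, and $\tau$ appears nowhere in this step; hence the joint law of $(X,Y)$ does not depend on $\tau$, and FD, being determined by $\mu_0,\mu_1,\Sigma_0,\Sigma_1$, is literally constant in $\tau$. (If one instead reads ``FD of $\calG$'' as an empirical Bhattacharyya-type distance computed from the realized features, the same observation shows its distribution is invariant to $\tau$.)

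Next I would treat $\mathbf{h}_c$. The point is that $\tau$ reshapes which nodes are selected but never how many same-class versus different-class neighbors a node receives. By construction every node $v_i$ samples exactly $d^+$ neighbors from $C^+_{Y_i}\setminus\{v_i\}$ and exactly $d^-$ from $C^-_{Y_i}$ (the constraint $0<\max(d^+,d^-)<n/2$ guarantees this is feasible), so $d^+_i=d^+$ and $d_i=d^++d^-$ for every $v_i\in V$, deterministically and independently of $\tau$; moreover $|C^+_\ell|=n/2=|V|/2$ for $\ell\in\{1,2\}$. Substituting into Eq.~\eqref{eq.class_homophily} with $c=2$, each summand equals $\max\!\big(\tfrac{(n/2)d^+}{(n/2)(d^++d^-)}-\tfrac12,\,0\big)=\max\!\big(\tfrac{d^+}{d^++d^-}-\tfrac12,\,0\big)$, so $\mathbf{h}_c=\max\!\big(\tfrac{d^+}{d^++d^-}-\tfrac12,\,0\big)$, a function of $d^+,d^-$ only.

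This lemma is essentially bookkeeping rather than deep, and the only delicate point — the closest thing to an obstacle — is the treatment of directedness. Since \gmodel samples directed edges, the $d_i,d^+_i,d^-_i$ above are out-degrees, for which the counts are exactly $d^+$ and $d^-$. If $\mathbf{h}_c$ is meant to be evaluated on the symmetrized graph, then in-degrees (hence $d_i$) become random through edge overlaps, and I would either restrict the statement to the directed reading that matches the model's definition, or pass to expectations and use that $\mathbf{h}_c$ depends on the multiset of same-/different-class neighbor choices only through its fixed marginal counts; in both readings $\tau$ drops out, which is what the lemma claims.
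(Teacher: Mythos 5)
Your proposal is correct and follows essentially the same route as the paper, whose proof is a one-liner observing that FD is determined solely by $(\mu_0,\mu_1,\Sigma_0,\Sigma_1)$ and $\mathbf{h}_c$ solely by $(d^+,d^-)$ (via the fixed per-node same-/different-class neighbor counts from sampling without replacement), neither of which involves $\tau$; you simply spell out the bookkeeping the paper leaves implicit. Your extra remark on directedness is a reasonable clarification consistent with the paper's directed-graph reading of the generated $\calG$.
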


The proofs are in Appendix~\ref{app:proofs}.
In concert, the above properties highlight that \gmodel flexibly, yet precisely, controls the dependence among classes $Y$, features $X$, and topology $A$ in the generated graph $\mathcal{G}$'s.

\begin{figure*}[t]
\begin{center}
    
    \includegraphics[width=\textwidth]{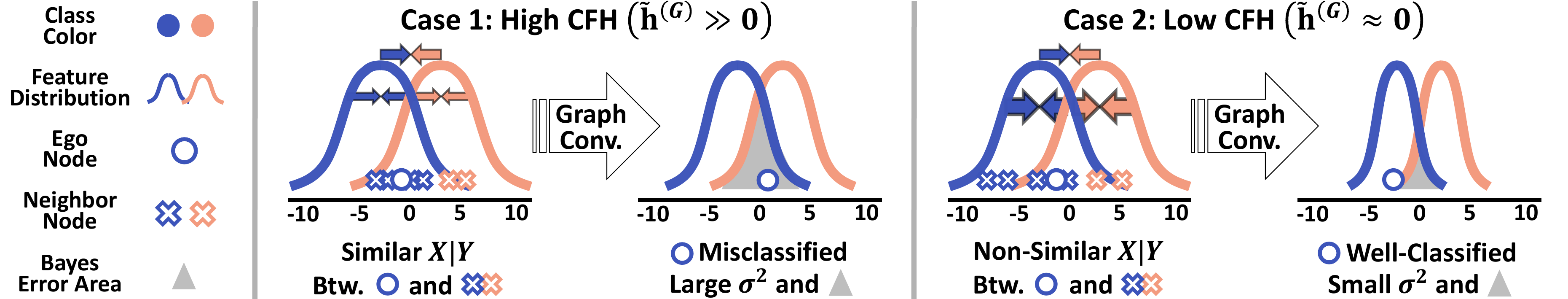}
    \caption{\label{fig:theorem_intuition}
    \bolden{Visual Intuition of Theorem~\ref{thm:main_result}.}
    When CFH is low ($ \mathbf{\Tilde{h}}^{(G)} \approx 0$), 
    the feature distribution of each class shrinks faster (denoted by the arrows) by graph convolution, 
    resulting in a lower Bayes error rate.
    Namely, the power to pull node features towards the feature mean of each class becomes stronger with decreasing $\vert \mathbf{\Tilde{h}}^{(G)} \vert$.
    }
\end{center}
\end{figure*}

\begin{figure*}[t]
\begin{center}
    \includegraphics[width=\textwidth]{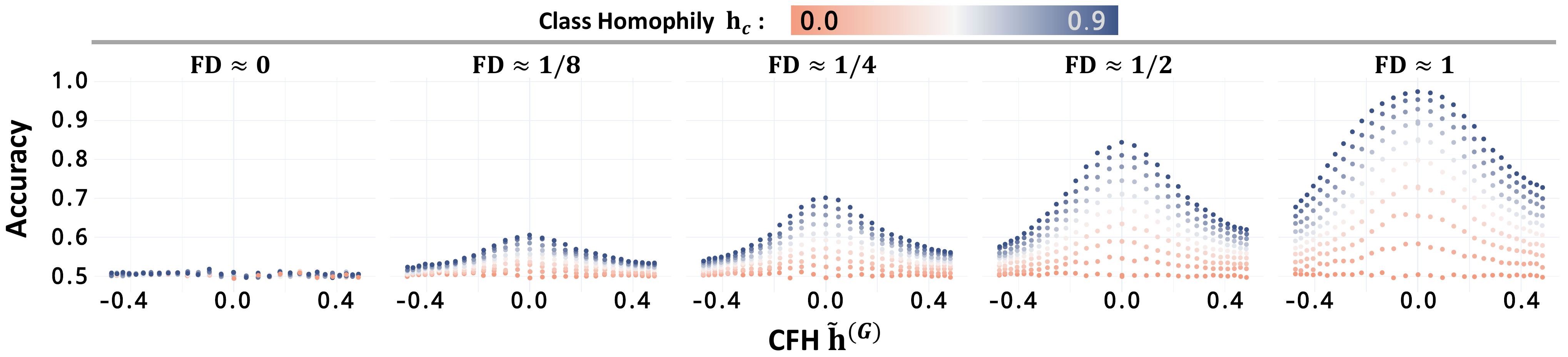}

    \caption{\label{fig:csbm-x-main-exp}
    \bolden{The Simplified GNN Performance in \gmodel Graphs.}
    Consistent with Theorem~\ref{thm:main_result},
    for given feature distance $\text{FD} > 0$ and class homophily $\mathbf{h}_c >0$,
    the simplified GNN performance increases as graph-level CFH $\mathbf{\Tilde{h}}^{(G)} \rightarrow 0~(\text{i.e.,~} \tau \rightarrow 0)$.
    }
\end{center}
\end{figure*}

\subsection{Graph Convolution in CSBM-X Graphs: Theory}
In this section, we theoretically analyze the relationship between CFH \hdot and graph convolution.

\smallsection{Analysis setting}.
For simplicity, we assume that the features are
(\textit{i}) 1-dimensional ($\mu_0, \mu_1, \Sigma_0, \Sigma_1 \in \mathbb{R}$) and
(\textit{ii}) symmetric with identical variances ($\mu_0 = -\mu_1 < 0$ and $\Sigma_0 = \Sigma_1 = 1$).
We focus on an asymptotic setting with 
(\textit{iii}) {the number of nodes} $n \rightarrow \infty$. 
(\textit{iv}) {The same- and different-class degree parameters respectively are} $d^+ = np^+$ and $d^- = np^-$, with fixed $p^- \neq p^+ \in (0, \frac{1}{2})$.

Following some prior works on GNN theory~\cite{Wu2023ANetworks, Luan2023WhenDistinguishability},
we define graph convolution as $D^{-1}AX$, a convolution of feature matrix $X$ on an adjacency matrix $A$ left-normalized by a (diagonal) degree matrix $D$.


Given the setting, 
after a graph convolution,
the expected feature means of the two classes are constant and symmetric regardless of parameter $\tau$.
Specifically, the expected means are
$\frac{d^- - d^+}{d^+ + d^-} \mu_1$ for class-$0$ and 
$\frac{d^+ - d^-}{d^+ + d^-} \mu_1$ for class-$1$.
Thus, we consider a classifier $\mathcal{F}$ predicting node classes as follows:
\[
\mathcal{F}(x) = 
\begin{cases}
1 &\quad\text{if } x \geq 0, \\
0 &\quad\text{otherwise}.
\end{cases}
\]
\hspace{-1mm}
\smallsection{Theoretical analysis}.
We analyze how parameter $\tau$, controlling for CFH \hdot, affects the Bayes error rate of the classifier $\mathcal{F}$, given the \textit{convoluted node features} (i.e., features after convolution $D^{-1}AX$).
Formally, we denote the expected Bayes error rate of $\mathcal{F}$ for classifying the two classes in a CSBM-X graph $\mathcal{G}(\cdot, \tau)$ as $\mathcal{B}_{\mathcal{F}}(\mathcal{G}(\cdot, \tau))$.

\vspace{1.5mm}
\begin{theorem} \label{thm:main_result} 
    Fix the other parameters except for $\tau$, 
    after a step of graph convolution,
    $\mathcal{B}_{\mathcal{F}}(\mathcal{G}(\cdot, \tau))$ is     
    minimized at $\tau = 0$ and 
    {strictly} increases as $\abs{\tau}$ increases, i.e.,
    $\argmin_\tau \mathcal{B}_{\mathcal{F}}(\mathcal{G}(\cdot, \tau)) = 0$;
    $\mathcal{B}_\mathcal{F}(\mathcal{G}(\cdot, \tau_0)) < \mathcal{B}_\mathcal{F}(\mathcal{G}(\cdot, \tau_1))$ for any $\tau_0$ and $\tau_1$ such that $\abs{\tau_0} < \abs{\tau_1}$ and $\tau_0 \tau_1 > 0$.
\end{theorem}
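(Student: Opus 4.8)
The plan is to carry the $n\to\infty$ limit through one convolution, reduce the claim to a one‑dimensional monotonicity fact about the within‑class variance of the convoluted features, and prove that fact by differentiating in $\tau$. First, write the convoluted feature as $\tilde X_i=(D^{-1}AX)_i=\frac{1}{d^{+}+d^{-}}\sum_{v_j\in N_i}X_j$ and decompose $X_j=\mu_{Y_j}+z_j$, where $z_j$ is the within‑class‑centered feature; as $n\to\infty$ the empirical class means concentrate, so $z_j\sim\calN(0,1)$ for every node regardless of class. The crucial simplification is that for \emph{both} a same‑class and a different‑class candidate neighbor $v_j$ one has $(X_i\mid Y)-(X_j\mid Y)\to z_i-z_j$, so by Eq.~\eqref{eq.edge_homophily} the weight $\operatorname{exp}(\tau\mathbf{h}^{(p)}_{ij})$ is proportional to $\operatorname{exp}(-\tau\lvert z_i-z_j\rvert)$ (the $b(v_i)$ term cancels in the normalization). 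Hence, conditioned on $z_i$, every sampled neighbor contributes a centered feature with the same conditional mean $g(z_i,\tau)\coloneqq\bbE_{u\sim\tilde P_{z_i,\tau}}[u]$, where $\tilde P_{z,\tau}(du)\propto\phi(u)\operatorname{exp}(-\tau\lvert z-u\rvert)\,du$ and $\phi$ is the standard normal density. Since $d^{+},d^{-}\to\infty$ the neighbor average concentrates, so the class‑$0$ convoluted feature has limiting law equal to that of $\tfrac{d^{-}-d^{+}}{d^{+}+d^{-}}\mu_1+g(Z,\tau)$, $Z\sim\calN(0,1)$ (and symmetrically for class~$1$). As $\bbE_Z[g(Z,\tau)]=0$ by symmetry, the two class means are exactly the $\tau$‑independent values stated before the theorem, so the classes differ only through the common within‑class variance $\sigma_\tau^{2}\coloneqq\Var_{Z\sim\calN(0,1)}(g(Z,\tau))$. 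Treating the convoluted features as asymptotically Gaussian, as is standard for CSBM‑type analyses, the feature distance between the convoluted classes is $2\lvert\beta\mu_1\rvert/\sigma_\tau$ with $\beta=\tfrac{d^{+}-d^{-}}{d^{+}+d^{-}}\neq0$, so $\mathcal{B}_{\mathcal{F}}(\mathcal{G}(\cdot,\tau))=\bar\Phi(\lvert\beta\mu_1\rvert/\sigma_\tau)$ with $\bar\Phi=1-\Phi$, a strictly increasing function of $\sigma_\tau$. It therefore suffices to show $\sigma_\tau^{2}$ is uniquely minimized at $\tau=0$ and strictly increasing in $\lvert\tau\rvert$ on each side of $0$.

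\textbf{Base case and reduction to two sign facts.} At $\tau=0$ the tilt is trivial, so $g(\cdot,0)\equiv0$ and $\sigma_0^{2}=0$; since $\sigma_\tau^{2}\ge0$ this already gives $\argmin_\tau\mathcal{B}_{\mathcal{F}}=0$. For $\tau\neq0$, $g(\cdot,\tau)$ is non‑constant: differentiating the tilt, $\partial_z g(z,\tau)=-\tau\operatorname{Cov}_{\tilde P_{z,\tau}}(u,\operatorname{sign}(z-u))$ has the sign of $\tau$ because $\operatorname{sign}(z-u)$ is non‑increasing in $u$, so $\sigma_\tau^{2}>0$. Since $\bbE_Z[g]=0$, $\sigma_\tau^{2}=\bbE_Z[g(Z,\tau)^{2}]$ and $\tfrac{d}{d\tau}\sigma_\tau^{2}=2\,\bbE_Z[g(Z,\tau)\,\partial_\tau g(Z,\tau)]$, so it is enough to establish: (i) $\operatorname{sign}g(z,\tau)=\operatorname{sign}(\tau z)$; and (ii) $\operatorname{sign}\partial_\tau g(z,\tau)=\operatorname{sign}(z)$. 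Given (i)--(ii), the integrand $g\,\partial_\tau g$ has sign $\operatorname{sign}(\tau z)\operatorname{sign}(z)=\operatorname{sign}(\tau)$ at every $z\neq0$ and is not a.e.\ zero, so $\tfrac{d}{d\tau}\sigma_\tau^{2}$ has the sign of $\tau$; hence $\sigma_\tau^{2}$ strictly decreases on $(-\infty,0)$ and strictly increases on $(0,\infty)$, and composing with the increasing map $\sigma_\tau\mapsto\bar\Phi(\lvert\beta\mu_1\rvert/\sigma_\tau)$ yields the asserted behavior of $\mathcal{B}_{\mathcal{F}}$. Fact (i) is easy: pairing $u$ with $-u$, the numerator of $g(z,\tau)$ equals $\int_0^{\infty}u\,\phi(u)\big(e^{-\tau\lvert z-u\rvert}-e^{-\tau\lvert z+u\rvert}\big)\,du$, and for $z,u>0$ one has $\lvert z-u\rvert<z+u$, so the parenthesis — hence the integral — has the sign of $\tau$; the case $z<0$ follows from $g(-z,\tau)=-g(z,\tau)$ and $g(0,\tau)=0$.

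\textbf{The main obstacle: fact (ii).} This says $g(z,\cdot)$ is strictly increasing for $z>0$ (and, by the same symmetry, decreasing for $z<0$). For $\tau_1>\tau_0$, $\tilde P_{z,\tau_1}$ is $\tilde P_{z,\tau_0}$ reweighted by $e^{-(\tau_1-\tau_0)\lvert z-u\rvert}$, so $g(z,\tau_1)-g(z,\tau_0)$ has the sign of $\operatorname{Cov}_{\tilde P_{z,\tau_0}}\!\big(u,\,e^{-(\tau_1-\tau_0)\lvert z-u\rvert}\big)$; this covariance is the one non‑routine point, since $u\mapsto\lvert z-u\rvert$ is not monotone and no Chebyshev/FKG‑type inequality applies directly. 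The plan is to split $u=z+st$ with $s=\operatorname{sign}(u-z)$ and $t=\lvert u-z\rvert\ge0$, and to observe that at a fixed $t$ the ratio of the $\tilde P_{z,\tau_0}$‑density on the slice $s=+1$ to that on $s=-1$ is $\phi(z+t)/\phi(z-t)=e^{-2zt}=:\rho(t)$, \emph{independent of $\tau_0$}. Writing the covariance as an integral in $t$ against the single measure $\phi(z-t)e^{-\tau_0 t}\,dt$ and symmetrizing in the two integration variables $t_1,t_2$ turns it into a double integral whose integrand is the product of $e^{-(\tau_1-\tau_0)t_1}-e^{-(\tau_1-\tau_0)t_2}$ and $B(t_1)B(t_2)\big(\tfrac{t_1A(t_1)}{B(t_1)}-\tfrac{t_2A(t_2)}{B(t_2)}\big)$, where $A=\rho-1$, $B=\rho+1$ and $\tfrac{tA(t)}{B(t)}=-t\tanh(zt)$ is strictly decreasing in $t$ for $z>0$; both factors then carry the sign of $t_2-t_1$, so the integrand is nonnegative and in fact positive off the diagonal, giving $g(z,\tau_1)>g(z,\tau_0)$. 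Beyond this, two routine points should be spelled out in the reduction step: that finite‑$n$, without‑replacement neighbor sampling converges as $n\to\infty$ to the tilted law $\tilde P_{z,\tau}$ (the finite‑population correction only rescales the tilt and preserves the $z\pm t$ reflection ratio used throughout), and the asymptotic‑Gaussian treatment of the convoluted features that makes $\mathcal{B}_{\mathcal{F}}$ exactly a monotone function of $\sigma_\tau$. The hard part is genuinely the covariance sign in fact (ii); everything else is bookkeeping around it.
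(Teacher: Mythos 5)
Your setup through the tilted neighbor law coincides with the paper's: the $b(v_i)$ term cancels in the normalization, the sampled neighbor's centred feature has exactly the conditional mean $g(z_i,\tau)=\bbE[x_{nbr}]$ that the paper computes in closed form, and since $d^{\pm}\to\infty$ the convoluted feature of a class-$1$ node concentrates at $\frac{d^+-d^-}{d^++d^-}\mu_1+g(z_i,\tau)$. The genuine gap is the step right after. Precisely because of this concentration, the limiting within-class law of the convoluted feature is the push-forward of $Z\sim\calN(0,1)$ under $g(\cdot,\tau)$ — a non-Gaussian law supported in an interval of half-width at most $\abs{\tau}$ (from the closed form, $g=\tau\frac{A-B}{A+B}$ with $A,B\ge 0$, so $\abs{g(z,\tau)}\le\abs{\tau}$) — and the limiting error of the fixed classifier $\calF$ is the tail probability $\Pr_Z[g(Z,\tau)<-\beta\mu_1]$ with $\beta=\frac{d^+-d^-}{d^++d^-}$, which is the paper's indicator integral in Eq.~\eqref{eq.closed_form}. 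Your replacement of this quantity by $1-\Phi(\abs{\beta\mu_1}/\sigma_\tau)$ with $\sigma_\tau^2=\Var(g(Z,\tau))$ is not a harmless "standard CSBM" approximation here: the error of a fixed threshold test is not a function of the within-class variance alone, and concretely the exact limit vanishes whenever $\abs{\tau}\le\beta\mu_1$, while your surrogate is strictly positive for every $\tau\neq 0$, so the two quantities genuinely differ. Consequently, proving that $\sigma_\tau^2$ is minimized at $\tau=0$ and increases with $\abs{\tau}$ does not, by itself, establish the theorem.

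The encouraging part is that your two sign facts are exactly the paper's engine and plug directly into the correct representation, with no variance detour needed: at $\tau=0$, $g\equiv 0$ gives zero limiting error; for $\tau>0$, fact (i) ($\operatorname{sign} g=\operatorname{sign}(\tau z)$) restricts misclassification to $z<0$, and fact (ii) (which the paper verifies by an explicit derivative of the erfc formula on the two relevant quadrants) makes $g(z,\cdot)$ decreasing there, so the event $\{g(Z,\tau)<-\beta\mu_1\}$ grows with $\tau$; the case $\tau<0$ follows by the symmetry $g(-z,\tau)=-g(z,\tau)$. So the repair is simply to apply (i)–(ii) to $\Pr_Z[g(Z,\tau)<-\beta\mu_1]$ rather than to $\sigma_\tau^2$. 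Your covariance/reflection-symmetrization argument for fact (ii) is an interesting alternative to the paper's brute-force differentiation of $\bbE[x_{nbr}]$, but as written it is a sketch: the key double-integral identity (the slicing $u=z+st$, the $\tau$-independent ratio $e^{-2zt}$, and the factorization whose second factor is $-t\tanh(zt)$) is asserted rather than verified, and it is the one place where your route would have to do real work that the paper instead discharges by an explicit closed-form derivative.
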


\begin{proof}[Proof sketch]
WLOG, assume a node $v_i \in V$ is 
assigned to class-1 (i.e., $Y_i = 1$) and 
has class-controlled feature $x_i$ (i.e., $X_i = \mu_1 + x_i$).
\footnote{Recall that class-controlled feature $x_i = (X_i \vert Y)$ in Eq.~\eqref{eq.class_control}.}

We obtain closed-form formulae of the distributions of
(\textit{i}) edge sampling probabilities and, subsequently, 
(\textit{ii}) neighbor sampling probabilities.
This allows us to calculate the distribution of class-controlled, convoluted node features.
\begin{align*}
    \bbE[x_{N}] 
    &=
    \int_{-\infty}^{\infty} x \Pr[x_{N} = x] dx \\
    &=
    \frac{\tau \left(\text{erfc}\left(\frac{\tau-x_i}{\sqrt{2}}\right)-\exp{(2 \tau x_i)} \text{erfc}\left(\frac{\tau+x_i}{\sqrt{2}}\right)\right)}{\text{erfc}\left(\frac{\tau-x_i}{\sqrt{2}}\right)+\exp{(2 \tau x_i)} \text{erfc}\left(\frac{\tau+x_i}{\sqrt{2}}\right)},
\end{align*}
where 
$x_{N}$ denotes the class-controlled feature of node $v_i$'s neighbor and erfc denotes the complementary error function of the Gaussian error function. 

Then, we obtain a closed-form formula of $\mathcal{B}_{\mathcal{F}}(\mathcal{G}(\cdot, \tau))$.
\[
\mathcal{B}_{\mathcal{F}}(\mathcal{G}(\cdot, \tau))
=
\int_{-\infty}^{\infty} \Pr[\mathcal{F}({x}_i) = 0] \Pr[{x}_i \mid Y_i = 1] d {x}_i 
\]
\begin{equation}\label{eq.closed_form}
    \xrightarrow{n \to \infty} 
    \int_{-\infty}^{\infty} \bm{1}[\bbE[x_{N}] < \frac{d^- - d^+}{d^- + d^+} \mu_1] \varphi[{x}_i] d {x}_i,
\end{equation}
where $\varphi$ is the PDF of the standard Gaussian distribution.
By analyzing the derivatives of the formulae, we show that $\mathcal{B}_{\mathcal{F}}(\mathcal{G}(\cdot, \tau))$ is minimized at $\tau = 0$ and increases as $\abs{\tau}$ increases in both positive and negative directions.
The full proof is in Appendix~\ref{app:proofs}. 
\end{proof}

\smallsection{Summary}.
{For each class $\ell$'s convoluted feature distribution,
degree parameters $(d^+$, $d^-)$ and the feature mean parameters $(\mu_0, \mu_1)$ determine its mean,
while \ax strength parameter $\tau$ determines the distance between each node and the mean 
(or the distribution variance; see Eq.~\eqref{eq.closed_form} and Fig.~\ref{fig:theorem_intuition}).}
Thus, the simplified GNN's Bayes error rate $\mathcal{B}_{\mathcal{F}}(\mathcal{G}(\cdot, \tau))$
decreases as $\abs{\tau}$ decreases, reaching its minimum at $\tau=0$ (Theorem~\ref{thm:main_result}).
\footnote{The graph convolution followed by the defined classifier $\mathcal{F}$ can be considered a simplified GNN~\cite{Wu2019SimplifyingNetworks}.}
With Lemma~\ref{lem:relation_tau_hg}, we conclude that $\mathcal{B}_{\mathcal{F}}(\mathcal{G}(\cdot, \tau))$ is the lowest when CFH $\mathbf{{h}}^{(G)}$ is zero (i.e., no \ax). 

\subsection{Empirical Elaboration on Theory}\label{sec:graph-model-exp}

Here, we empirically validate and elaborate on Theorem~\ref{thm:main_result}.

\smallsection{Experiment setting}.
We generate \gmodel graphs with various parameter configurations.
We fix the number of nodes $n=10000$ and node degree $(d^+ + d^-) = 20$, assuming sparse graph topology.
The features in each class are sampled from a Gaussian distribution.
The generated graphs have a wide range of feature distance FD, class-homophily \hc, and CFH \hdot.

\vspace{-2mm}
\begin{itemize}[leftmargin=*]
    \item {FD}: $\vert \mu_{0} - \mu_{1} \vert \in \set{0, ~1/8, ~1/4, ~1/2, ~1}$; $\Sigma_{0} = \Sigma_{1} = 1$
    \item {\hc}: $d^+ \in \set{10,11,\ldots,18,19}$; $d^- = 20-d^+$, 
    \item {\hdot}: $\tau \in \set{-1.5, -1.4,\ldots, -0.1,0,0.1,\ldots, 1.4, 1.5}$.
\end{itemize}
\vspace{-2mm}

On the CSBM-X graphs $\mathcal{G}$'s, 
we train a simplified GNN $D^{-1}AXW$, 
where $W \in \mathbb{R}$ is a learnable parameter.
We report the test accuracy averaged over 5 trials, each with a train/val/test split of 50/25/25.

\smallsection{Finding 1} (\textit{Effect of \hdot}).
As shown in Fig.~\ref{fig:csbm-x-main-exp}, 
given class-homophily \hc $> 0$ (i.e., degree parameters $d^+ > d^-$) and feature distance FD $> 0$ (i.e., feature mean parameters $\mu_0 \neq \mu_1$), 
the simplified GNN achieves the best accuracy when graph-level CFH \hg$\approx 0$,
and the accuracy decreases as $\vert \hg \vert$ increases, in both positive and negative directions.

\smallsection{Finding 2} (\textit{Interplay among} FD, \hc, \textit{and} \hdot).
{Aligned with our theoretical outcomes (Eq.~\eqref{eq.closed_form}, Fig.~\ref{fig:theorem_intuition}),
class-homophily \hc and feature distance FD moderate the beneficial effect of small \hdot (Fig.~\ref{fig:csbm-x-main-exp}).
For understanding, recall that our theoretical findings roughly indicate that FD and \hc affect the mean, whereas \hdot the variance, of the convoluted feature distribution of each class. Intuitively, consider the two cases below.}

{If feature distance FD and class homophily \hc are moderate-sized, the convoluted feature means of the two classes would be \textit{somewhat distant}.
Small \hdot, then, can significantly benefit GNNs, 
since small variances of the two feature distributions would markedly reduce their overlap.
A very small (or large) FD and \hc, on the contrary, would cause the convoluted feature distributions to be \textit{too close} (or \textit{too distant}).
Then, reducing variances of the convoluted feature distributions may not significantly improve GNNs, mitigating the beneficial effect of \hdot.}

In conclusion, the empirical outcomes are highly consistent with Theorem~\ref{thm:main_result}.
In Appendix~\ref{app:csbmx}, we report consistent results with 
(\textit{i}) two graph convolution layers, 
(\textit{ii}) symmetrically normalized graph convolution,
(\textit{iii}) high-dimensional $X$,
(\textit{iv}) imbalanced variances $\Sigma_0 \neq \Sigma_1$,
(\textit{v}) larger \ax strength $\vert \tau \vert$'s, and
(\textit{vi}) a more complex version of CSBM-X reflecting a power-law degree distribution.

\section{Feature Shuffle in Real-World Graphs}\label{sec:swapping_exp}

In this section, we finalize our investigation of the second research question (\textbf{RQ2}) with {the feature shuffle}.

\textbf{RQ2.2} [\textit{Feature Shuffle}]. In real-world graphs, how does reducing \ax with feature shuffle affect GNNs?

\subsection{Experiment Setting}\label{sec:exp_setting}

For each class, we randomly choose the nodes to be shuffled by a given shuffled node ratio $\in \set{0.00, 0.01,...,1.00}$.
For the chosen same-class nodes, their feature vectors are shuffled randomly.
{To ensure that the train/val/test split is not affected, shuffle is done only within the same split.}

The feature shuffle can reduce \ax (i.e., CFH \hdot; Observation~\ref{obs:shuffle_homophily}) without perturbing X-Y and A-Y dependence, providing a suitable experimental setting to answer \textbf{RQ2}. 
Namely, the feature shuffle serves to generate synthetic versions of the benchmark graphs.

For each shuffled graph, we initialize, train, and evaluate a GNN model.
We report the mean test accuracy over 5 trials, with a train/val/test split of 50/25/25.
For the GNN model, we use GCNII~\cite{Chen2020SimpleNetworks}, GPR-GNN~\cite{Chien2021AdaptiveNetwork}, and AERO-GNN~\cite{Lee2023TowardsRemedies}.
We mainly use GCNII due to its 
(\textit{i}) non-adaptive graph convolution layer and
(\textit{ii}) empirical strengths in both high and low class-homophily \hc graphs.
For more training details, refer to Appendix~\ref{app:train_details}.

\subsection{Connecting Theory and Real-World Graphs}

\smallsection{High class-homophily graphs}.
As shown in Fig.~\ref{fig:shuffle-main-exp},
in all 12 high class-homophily \hc benchmark datasets, GCNII performance improves consistently over increasing shuffled node ratio (the mean increase of 4\%p).
The largest performance gain is 10\%p in the Cora-Full dataset.

\smallsection{Low class-homophily graphs}.
Meanwhile, in low class-homophily \hc benchmark datasets, 
GCNII shows small to no performance improvement in 11/12 datasets (the mean increase of 0.5\%p; Fig.~\ref{fig:shuffle-heterophily}).
As demonstrated with \gmodel (Fig.~\ref{fig:csbm-x-main-exp}), 
low class-homophily \hc reduces the beneficial effect of small \ax in real-world graphs. 
Unexpectedly, in the Roman-Empire dataset, GCNII suffers from a steady performance decline over increasing shuffled node ratio.
The reason may relate to its abnormally large diameter of 6,824.
We provide an in-depth analysis in Appendix~\ref{app:roman_empire}.

\smallsection{The role of FD}.
Increasing feature noise generally decreases feature distance FD.
Specifically, we randomly chose 50\% of all nodes and randomly permuted the feature vectors \textit{irrespective of their classes} to add such noise.
Fig.~\ref{fig:feature-informativeness} shows that, after adding the noise, the slope of performance over the feature shuffles becomes smaller, 
suggesting that significantly increasing the feature noise reduces the beneficial effect of the feature shuffle.
The finding echoes the results from the \gmodel experiment (Fig.~\ref{fig:csbm-x-main-exp}), such that FD moderates the beneficial effect of low \ax.

\smallsection{Other GNN architectures}.
We use other GNN architectures to test if the effect of the feature shuffle relies on GNN architecture choice.
Specifically, we use GPR-GNN, a decoupled GNN with an adaptive graph convolution.
For an attention-based GNN, we use AERO-GNN, capable of stacking deep layers.
In the considered models, the trends are similar to those of GCNII (Fig.~\ref{fig:shuffle-model}), 
suggesting that the other GNN architectures also leverage small \ax to improve their prediction.
We also found consistent results with transformer- and neural-ODE-based GNNs~\cite{gravina2022anti, deng2024polynormer}.

\smallsection{Proximity-based features}.
GNN node classification performance often degrades when using proximity-based information as the \textit{only} node features~\cite{Duong2019OnNetworks, Cui2022OnGraphs, lee2024villain}.
We find consistent results. 
However, we are astonished to find that, after the feature shuffle, a GNN trained with proximity-based features can be as competitive as the one trained with the original features (Fig.~\ref{fig:shuffle-feat}).
The results highlight that, given some feature distance FD and class-homophily \hc, reducing \ax can improve GNN regardless of the feature types.

In summary, the results with the real-world graphs and advanced GNNs are well-aligned with the theoretical results, underscoring the validity and generalizability of the proposed theory.
In Appendix~\ref{app:shuffle}, we further report consistent results with lower train node ratios.

\begin{figure*}[t]
    \centering
    
    \includegraphics[width=\textwidth]{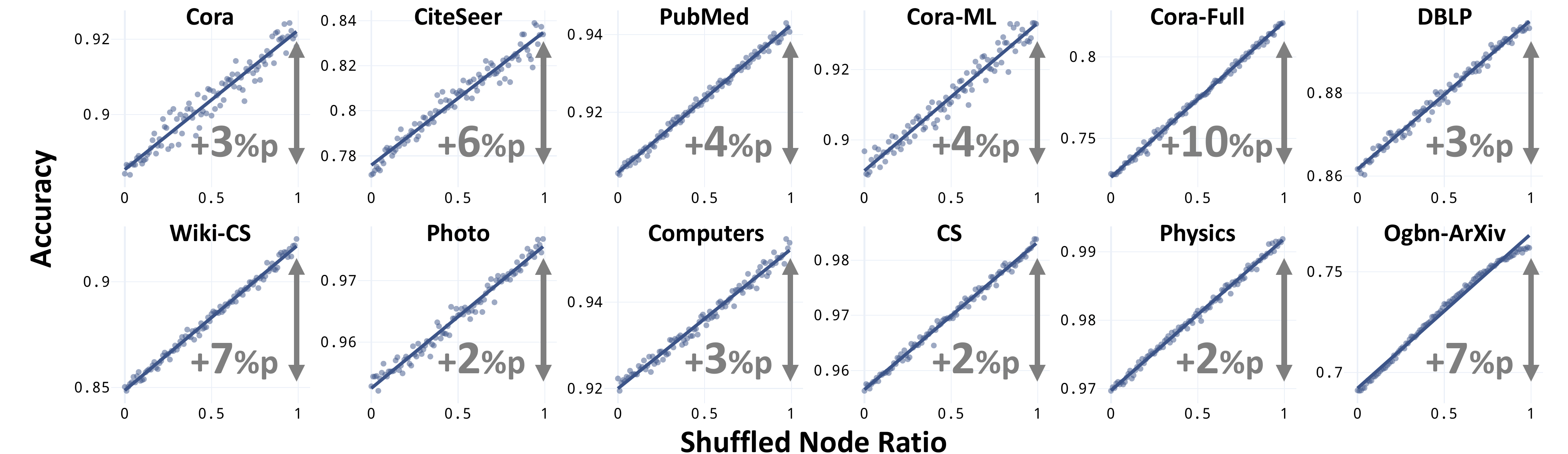}

    \centering
    \caption{\label{fig:shuffle-main-exp}
    \bolden{GNN Performance After the Feature Shuffles: \textit{High Class-Homophily}.}
    Consistent with the findings from \gmodel (Theorem~\ref{thm:main_result}; Fig.~\ref{fig:csbm-x-main-exp}; high \hc case), 
    reducing \ax with the feature shuffle improves GNN performance consistently and significantly in high class-homophily graphs.
    }
    \vspace{2mm}
\end{figure*}
\begin{figure}[t] 
    \centering
    \vspace{2mm}
    
    \includegraphics[width=\linewidth]{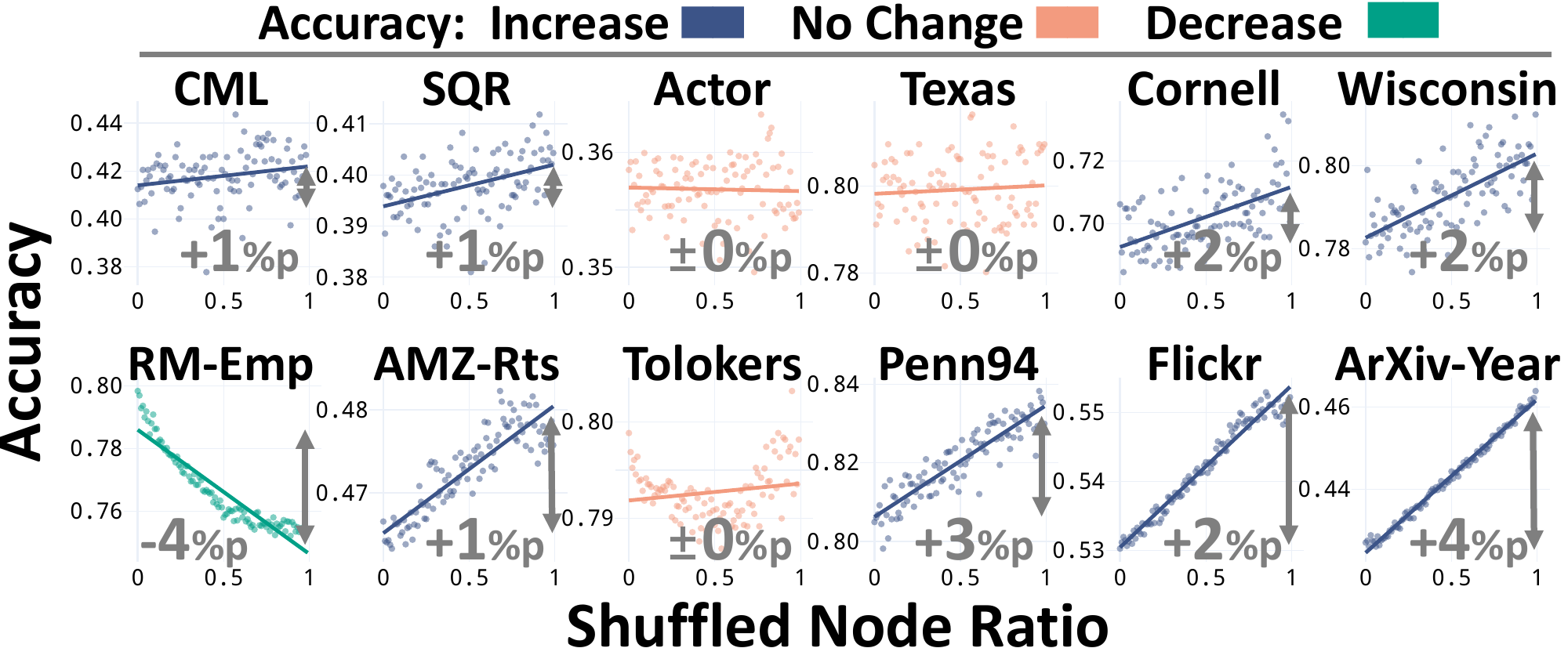}

    \caption{\label{fig:shuffle-heterophily}
    \bolden{GNN Performance After the Feature Shuffles: \textit{Low Class-Homophily}.}
    Consistent with the findings from \gmodel (Theorem~\ref{thm:main_result}; Fig.~\ref{fig:csbm-x-main-exp}; low \hc case), 
    the effect of the feature shuffle is smaller when class-homophily is low.
    \footnotemark
    }
\end{figure}
\footnotetext{CML, SQR, RM-Emp, and AMZ-Rts respectively stand for Chameleon, Squirrel, Roman-Empire, and Amazon-Ratings.}
\begin{figure}[t] 
    \centering
    
    \includegraphics[width=\linewidth]{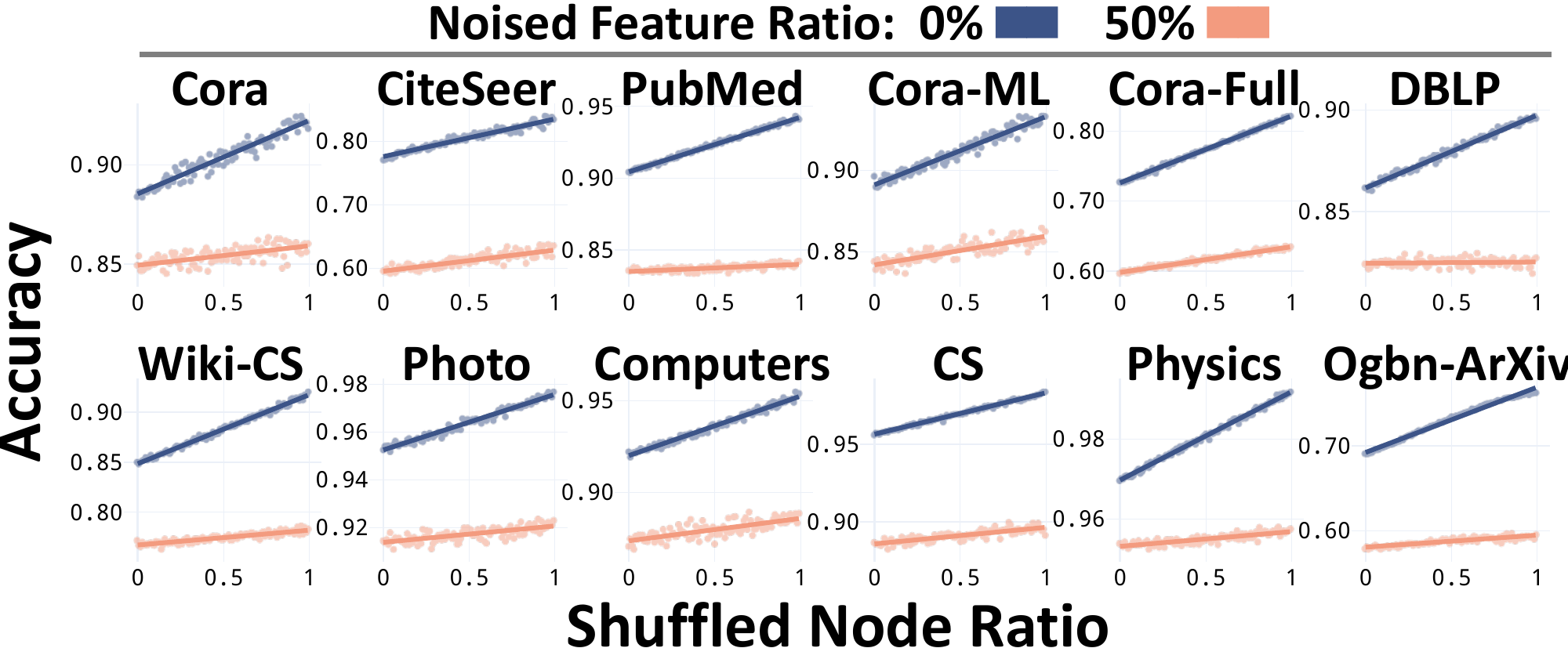}

    \caption{ \label{fig:feature-informativeness}
    \bolden{GNN Performance After the Feature Shuffles: \textit{Noisy Features}.}
    Consistent with the findings from \gmodel (Theorem~\ref{thm:main_result}; Fig.~\ref{fig:csbm-x-main-exp}; low FD case), 
    the effect of the feature shuffle is smaller with noisy features.
    }
    \vspace{3.3mm}
\end{figure}
\begin{figure}[t] 
    \centering
    \vspace{2mm}
    
    \includegraphics[width=\linewidth]{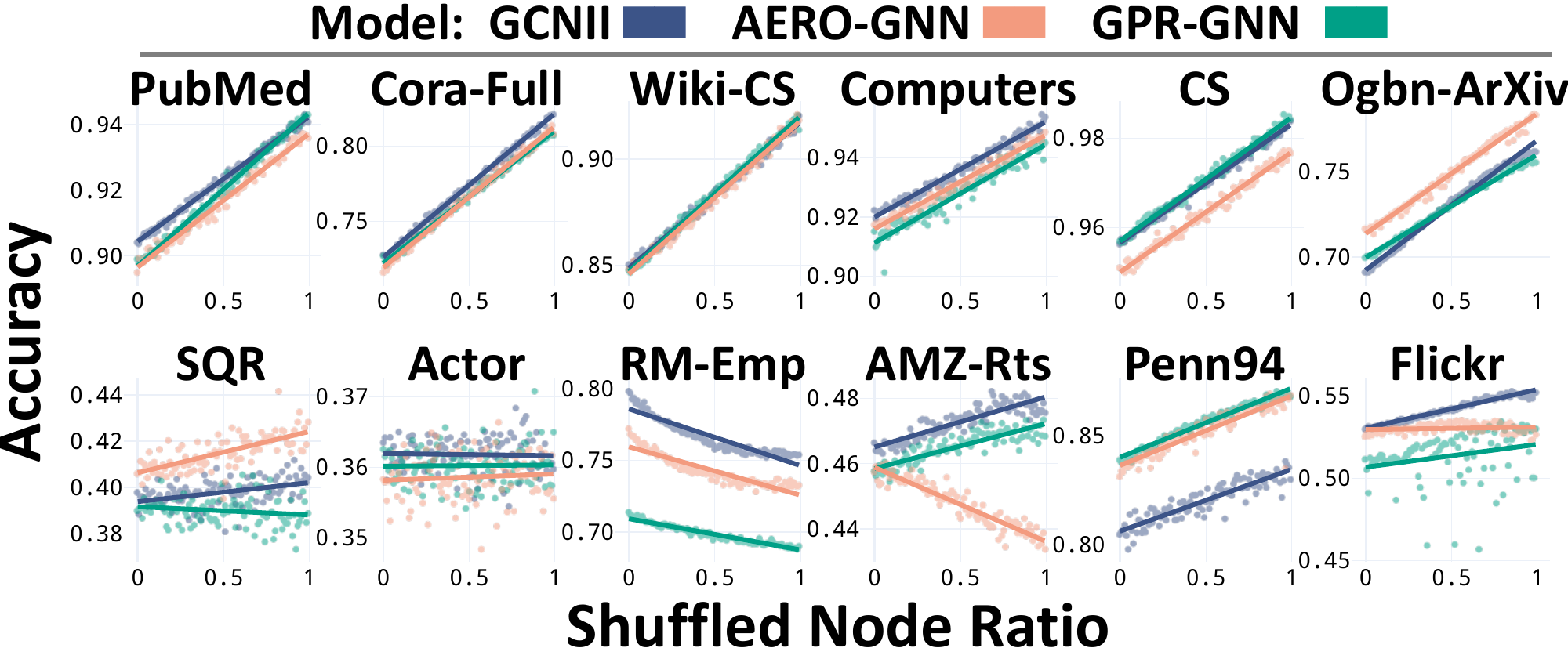}

    \caption{\label{fig:shuffle-model}
    \bolden{GNN Performance After the Feature Shuffles: \textit{Different Models}.}
    The adaptive convolution- and attention-based GNNs (GPR-GNN and AERO-GNN, respectively)
    generally show similar trends with GCNII.
    }
\end{figure}

\begin{figure}[t] 
    \centering
    
    \includegraphics[width=\linewidth]{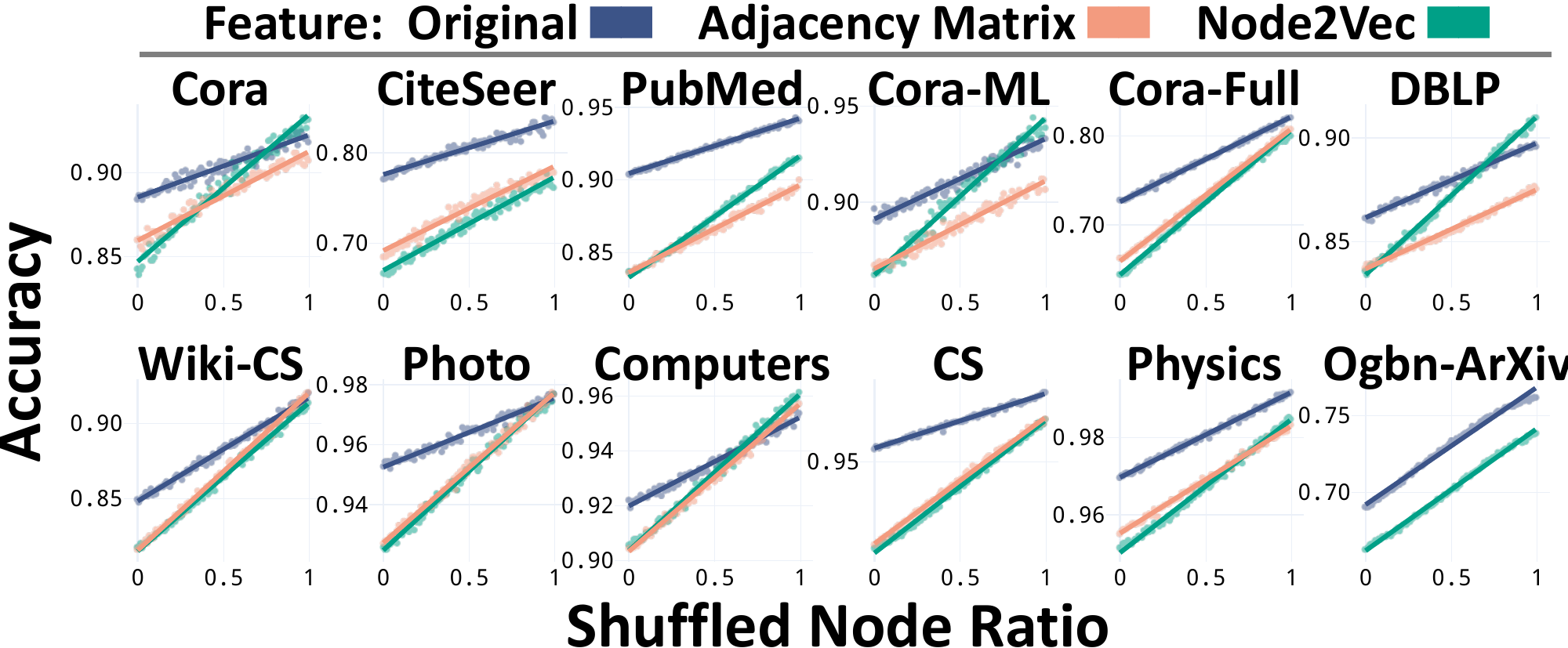}

    \caption{\label{fig:shuffle-feat}
    \bolden{GNN Performance After the Feature Shuffles: \textit{Proximity-Based Features}.}
    Regardless of the feature types, the feature shuffle improves GNN node classification performance.
    \footnotemark
    }    
    \vspace{3.5mm}
\end{figure}
\footnotetext{Out-of-memory occurs when using adjacency matrix as the node features for Ogbn-ArXiv.}

\section{Discussion}\label{sec:discussion}
In this work, we analyze the impact of \ax 
(i.e., dependence between graph topology and node features) on GNNs
with (\textit{i}) a class-controlled feature homophily (CFH) measure \hdot, 
(\textit{ii}) a random graph model \gmodel, and
(\textit{iii}) the feature shuffle.
In both \gmodel and the real-world graphs, 
we demonstrate that \ax, measured by CFH, significantly influences GNN performance.

\smallsection{GNN theory: the prior literature}.
The early studies found some failure cases of GNNs.
\citet{NT2019RevisitingFilters} claimed that a graph convolution layer is simply a low-pass filter for node features.
Under noisy features and non-linear feature spaces, they showed that GNN-based node classification may readily become ineffective.
\citet{Oono2020GraphClassification} further showed that over-smoothing of node features in GNNs may inevitably occur at infinite model depth.

The following works analyzed how GNNs behave at shallower model depths, 
demonstrating that the effect of graph convolution depends on feature informativeness, class-homophily, and node degree.
\citet{Baranwal2021GraphGeneralization} focused on how they let GNNs obtain more linearly separable features for each class. 
\citet{Wei2022UnderstandingPerspective} studied how the factors interact with GNNs' non-linearity,
and \citet{Wu2023ANetworks} investigated their role in triggering over-smoothing.

Aligned with the theory, low class-homophily (often just called heterophily) has received significant attention as GNNs' `nightmare.'
A stream of empirical findings continued to show that GNN performance drops significantly in low class-homophily benchmark datasets~\cite{Pei2020Geom-gcn:Networks, Zhu2020BeyondDesigns, Zhu2021GraphHeterophily, Chien2021AdaptiveNetwork}, and some works investigated the relationship between low class-homophily and over-smoothing~\cite{Bodnar2022NeuralGnns, Yan2022TwoNetworks}.

However, studies began to discover that low class-homophily, per se, does not deteriorate GNN performance.
\citet{Ma2022IsNetworks} and \citet{Platonov2023CharacterizingBeyond} demonstrated that as long as the class distribution is informative w.r.t. node class, GNNs can effectively perform node classification even with low class-homophily.

Recently, studies have delved into how mesoscopic patterns of class-homophily affect GNNs.
\citet{Luan2023WhenDistinguishability} (roughly) argued that, for GNNs to well-classify a node class, its `intra-class distance' should be smaller than the `inter-class distance' after graph convolution. 
That is, low class-homophily may trigger the `inter-class distance' to be smaller to degrade GNN performance.
\citet{Mao2023DemystifyingAll} investigated mixed patterns of class-homophily and heterophily,
showing that GNNs better classify the nodes with the majority pattern in the mixture. 
Lastly, \citet{wang2024understanding} investigated an array of low class-homophily patterns and showed that there exist good, mixed, and bad patterns for GNNs to learn from.

\smallsection{GNN theory: the present work}.
Not to mention that the role of \ax (i.e., CFH \hdot) has not been adequately addressed by the prior literature,
the present work can also be interpreted as an extension of the works on homophily-GNN connection to continuous feature domain.
Intuitively, a large homophily slows feature mixing by graph convolution, and a small homophily accelerates it. 
From such a perspective, our conclusion that CFH should ideally be small, while class-homophily be large, is an intuitive outcome.
To better classify node classes, the mixing between classes should occur at a slow rate, whereas the mixing within each class should occur faster.

To conclude, we argue that CFH mediates the effect of graph convolution by moderating the force to pull each node feature toward the feature mean of the respective node class.
From the node classification perspective,
even with high class-homophily and informative features, a large CFH can result in degraded GNN performance (Fig.~\ref{fig:csbm-x-main-exp}).

The central implications are two-fold.
In hindsight, our findings in concert suggest that the recent success of GNNs may have relied on the generally small CFH of the benchmark datasets.
Looking forward, investigating the role of CFH on GNNs is a promising research direction.

\smallsection{Limitations and future works}.
We close with our discussion on limitations and potential research directions.
First, we did not provide new benchmark datasets or GNNs that address varying levels of CFH.
The current benchmark datasets generally have low and positive CFH (see Observation~\ref{obs:absolte_homophily}).
According to our findings, the existing GNNs may significantly underperform in datasets with large CFH.
Thus, proposing new datasets with a large and/or negative CFH and designing methods for such datasets would be valuable contributions.

Second, we did not explore potential applications of our findings.
Low CFH can significantly contribute to GNN performance (see Fig.~\ref{fig:shuffle-main-exp}).
The feature shuffle algorithm in Sec.~\ref{sec:swapping_exp}, however, requires test labels, which are not known, and CFH values are also not known without test labels.
Thus, a method that estimates and adaptively lowers CFH, while keeping class-homophily and feature informativeness intact, may substantially improve GNN performance.
We explore a potential application in Appendix~\ref{app:applications}.

Last, generalization of our theoretical findings is limited since both CFH measure \hdot and \gmodel, implicitly and explicitly, 
assume that the relation between node features and class is linear (see Eq.~\eqref{eq.class_control}) and 
that the node-level CFH distribution is symmetric and unimodal (see Eq.~\eqref{eq.graph_homophily}).
However, the patterns in the real-world graphs should be more complex.
Exploring how more realistic patterns interact with GNNs would be a valuable next step.

\section*{Impact statement}
We do not expect any immediate, negative societal impact of the present work.
It delves into a theory of graph neural networks and, thereby, may indirectly benefit their applications to be more effective and reliable.

\section*{Acknowledgements}
This work was supported by Institute of Information \& Communications Technology Planning \& Evaluation (IITP) grant funded by the Korea government (MSIT) (No. 2022-0-00871, Development of AI Autonomy and Knowledge Enhancement for AI Agent Collaboration) (No. 2019-0-00075, Artificial Intelligence Graduate School Program (KAIST)). This work was partly supported by the National Research Foundation of Korea (NRF) grant funded by the Korea government (MSIT) (RS-2024-00341425).

\normalem
\bibliography{bib_final}
\bibliographystyle{icml2024}

\appendix   
\onecolumn

\section{Proofs and Additional Theoretical Results}\label{app:proofs}
\subsection{Proofs for Measure \hdot}

Throughout our proof w.r.t. measure \hdot,
let us assume that a graph $G$ has no isolated nodes.
Also, recall that if $b(\cdot) = 0$ (i.e., all nodes have the same class-controlled features), we define \hn and \hg as 0.


\smallsection{Proof of Lemma~\ref{lem:measure_bound} (Boundedness).}

\begin{proof}
\textbf{\textit{Bound of \hn}}.
The node-level CFH \hn is defined as follows:
\begin{align*}
    \mathbf{\Tilde{h}}^{(v)}_{i} 
    = \frac{\mathbf{h}^{(v)}_{i}} {\max(b(v_{i}) , \mathbf{d}(v_{i}, N_i))}
    = \frac{b(v_{i}) - \mathbf{d}(v_{i}, N_i)} {\max(b(v_{i}) , \mathbf{d}(v_{i}, N_i))}.
\end{align*}

L2 norm is non-negative, and thus, both $\mathbf{d(\cdot)}, b(\cdot)$ are non-negative.
Since $\abs{b(v_{i}) - \mathbf{d}(v_{i}, N_i)} \leq \max(b(v_{i}) , \mathbf{d}(v_{i}, N_i))$, 
$\mathbf{\Tilde{h}}^{(v)}_{i} \in [-1, 1]$ holds, completing the proof of bound for node-level CFH \hn.
\end{proof}

\begin{proof}
\textbf{\textit{Bound of \hg}}.
The graph-level CFH \hg can be rewritten as:
\begin{align*}
    \mathbf{\Tilde{h}}^{(G)} 
    &= \frac{\mathbf{h}^{(G)}} {\frac{1}{\abs{V}} \max(\sum_{v_i \in V} b(v_{i}),  \sum_{v_i \in V} \mathbf{d}(v_{i}, N_i))} \\
    &= \frac{ \frac{1}{\abs{V}} \sum_{v_i \in V}\mathbf{{h}}^{(v)}_i} {\frac{1}{\abs{V}} \max(\sum_{v_i \in V} b(v_{i}),  \sum_{v_i \in V} \mathbf{d}(v_{i}, N_i))} \\
    &= \frac{\sum_{v_i \in V}b(v_{i}) - \sum_{v_i \in V}\mathbf{d}(v_{i}, N_i)} {\max(\sum_{v_i \in V} b(v_{i}),  \sum_{v_i \in V} \mathbf{d}(v_{i}, N_i))}.
\end{align*}
For the same reason as \hn, $\mathbf{\Tilde{h}}^{(G)} \in [-1, 1]$, completing the proof of bound for graph-level CFH \hg.
\end{proof}

\begin{proof}
\textbf{\textit{Existence Claim}}.
We show that the upper/lower bound is achievable under a non-asymptotic/asymptotic setting.
First, we show that $\sup_{G} \mathbf{\Tilde{h}}^{(G)} = 1$ holds.
Consider a disconnected $G$ such that class-controlled features of neighboring nodes are all equal (i.e., $X_{i}|Y =X_{j}|Y, \forall (v_{i},v_{j}) \in E$), while that of disconnected nodes are different (i.e., $X_{k}|Y \neq X_{\ell}|Y$, where there does not exist a path between $v_{k}$ and $v_{\ell}$).
In such a case, $b({v_{i}}) \neq 0$ and $\mathbf{d}(v_{i},N_{i}) = 0$ hold $\forall v_{i} \in V$. 
Thus, $\mathbf{\Tilde{h}}^{(v)}_{i} = 1, \forall v_{i} \in V$ also holds, and consequently, $\max_{G} \mathbf{\Tilde{h}}^{(G)} = \sup_{G} \mathbf{\Tilde{h}}^{(G)} = 1$ holds.

Second, we show that $\inf_{G} \mathbf{\Tilde{h}}^{(G)} = -1$ holds.
Consider a case where $\mathbf{d}(v_{i}, N_{i}) \rightarrow \infty$ and $o(b(v_{i})) < o(\mathbf{d}(v_{i}, N_{i})), \forall v_{i} \in V$ hold.
In such a case, the following holds:
\begin{equation}
    \lim_{\mathbf{d}(v_{i}, N_{i}) \rightarrow \infty} \tilde{\mathbf{h}}^{(v)}_{i}= \frac{b(v_{i}) - \mathbf{d}(v_{i}, N_i)} {\mathbf{d}(v_{i}, N_i)} \equiv -\frac{\mathbf{d}(v_{i}, N_i)} {\mathbf{d}(v_{i}, N_i)} = -1, \forall v_{i} \in V.
\end{equation}
Consequently, $\inf_{G} \mathbf{\Tilde{h}}^{(G)} = -1$ hold.
Note that the second result is derived under the asymptotic scenario, and thus, the result does not indicate the exact minimum.
\end{proof}


\smallsection{Proof of Lemma~\ref{lem:measure_scale} (Scale-Invariance)}.
\begin{proof}
\textbf{\textit{Scale-Invariance of \hn}}. 
Denote the distance function (Eq.~\eqref{eq.distance_func}) with node feature $X$ as $\mathbf{d}'(v_{i},V'_{i}, X)$.
Then, for any $c\in \mathbb{R}\setminus \{0\}$, the following holds:
\begin{align*}
    \mathbf{d}'(v_{i}, V'_i, cX) 
    &\coloneqq \frac{1}{\vert V'_i\vert}\sum_{v_{j} \in V'_i}\lVert (c \cdot X_{i} \vert Y) - (c \cdot X_{j} \vert Y) \rVert_{2} \\
    &= \vert c \vert\cdot \left( \frac{1}{\vert V'_i\vert}\sum_{v_{j} \in V'_i}\lVert (X_{i} \vert Y) - (X_{j} \vert Y) \rVert_{2} \right) \\
    &= \vert c \vert \cdot \mathbf{d}'(v_{i}, V'_i, X).
\end{align*} 
Likewise, we denote a homophily baseline $b(v_{i})$ with a node feature $X$ as $b'(v_{i}, X)$.
Then, since $b(v_{i}, cX)$ is a special case of Eq.~\eqref{eq.distance_func}, the following holds: $b'(v_{i},cX) = \vert c \vert~b'(v_{i}, X)$.
Lastly, we denote \hn with a node feature $X$ as $\mathbf{\Tilde{h}}^{(v)}_{i} (X)$.
Then, by showing the below, we finalize the proof for node-level CFH \hn.
\begin{align*}
    \mathbf{\Tilde{h}}^{(v)}_{i}(cX) 
    &= \frac{b'(v_{i})(cX) - \mathbf{d}'(v_{i}, N_i, cX)} {\max(b'(v_{i}, cX) , \mathbf{d}'(v_{i}, N_i, cX))} \\
    &= \frac{\vert c\vert \cdot (b'(v_{i}, X) - \mathbf{d}'(v_{i}, N_i, X))} {\vert c\vert \cdot (\max(b'(v_{i}, X) , \mathbf{d}'(v_{i}, N_i, X)))} \\
    &= \frac{b'(v_{i}, X) - \mathbf{d}'(v_{i}, N_i, X)} {\max(b'(v_{i}, X) , \mathbf{d}'(v_{i}, N_i, X))} = \mathbf{\Tilde{h}}^{(v)}_{i}(X).
\end{align*}
\end{proof}

\begin{proof}
\textbf{\textit{Scale-Invariance of \hg}}. 
We denote \hg with a node feature $X$ as $\mathbf{\Tilde{h}}^{(G)} (X)$.
Then, we finalize the proof for graph-level CFH \hg by extending the above results. 
 \begin{align*}
     \mathbf{\Tilde{h}}^{(G)}(cX) 
     &= \frac{\sum_{v_i \in V}b(v_{i}, cX) - \sum_{v_i \in V}\mathbf{d}(v_{i}, N_i, cX)} {\max(\sum_{v_i \in V} b(v_{i}, cX),  \sum_{v_i \in V} \mathbf{d}(v_{i}, N_i, cX))} \\
     &= \frac{\vert c\vert \cdot (\sum_{v_i \in V}b(v_{i}, X) - \sum_{v_i \in V}\mathbf{d}(v_{i}, N_i, X))} {\vert c\vert \cdot (\max(\sum_{v_i \in V} b(v_{i}, X),  \sum_{v_i \in V} \mathbf{d}(v_{i}, N_i, X)))} \\
     &= \frac{\sum_{v_i \in V}b'(v_{i}, X) - \sum_{v_i \in V}\mathbf{d}'(v_{i}, N_i, X)} {\max(\sum_{v_i \in V} b'(v_{i}, X),  \sum_{v_i \in V} \mathbf{d}'(v_{i}, N_i, X))} = \mathbf{\Tilde{h}}^{(G)}(X) .
\end{align*}
\end{proof}


\smallsection{Proof of Lemma~\ref{lem:measure_mono} (Monotonicity)}.

\begin{proof}
First, since node features of $v_{k} \in V \setminus N_{i}$ are fixed, we rewrite homophily baseline $b(v_{i})$ as $b(v_{i}) = \frac{\vert N_{i}\vert}{\vert V'_{i}\vert}\mathbf{d}(v_{i}, N_i) + C$, where $C$ is a fixed constant.
For simplicity, denote $\mathbf{d}(v_{i}, N_i)$ and $\frac{\vert N_{i}\vert}{\vert V'_{i}\vert}$ as $K$ and $a$, respectively. 
Thus, the following holds: $b(v_{i}) := aK + C$.
We break the rest of the proof down into two parts.

\textbf{Case 1}: $b(v_i) \geq \mathbf{d}(v_{i}, N_i)$.
Node-level CFH \hn can be rewritten as
\begin{align}
    &\mathbf{\Tilde{h}}^{(v)}_{i} 
    = \frac{b(v_{i}) - \mathbf{d}(v_{i}, N_i)} {b(v_{i})} = \frac{(a - 1)K + C}{aK + C} \nonumber \\
    &\frac{\partial \mathbf{\Tilde{h}}^{(v)}_{i} }{\partial K} = \frac{-1}{(aK + C)^{2}} < 0.\label{eq:monoton1}
\end{align}

\textbf{Case 2}: $b(v_i) < \mathbf{d}(v_{i}, N_i)$.
Node-level CFH \hn can be rewritten as
\begin{align}
    &\mathbf{\Tilde{h}}^{(v)}_{i} 
    = \frac{b(v_{i}) - \mathbf{d}(v_{i}, N_i)} {\mathbf{d}(v_{i}, N_i)} = \frac{(a - 1)K + C}{K} \nonumber \\ 
    &\frac{\partial \mathbf{\Tilde{h}}^{(v)}_{i} }{\partial K} = \frac{a - 2}{K^{2}} < 0, \quad \because a < 1.\label{eq:monoton2}
\end{align}
By merging the result of Eq~\eqref{eq:monoton1} and Eq~\eqref{eq:monoton2}, the monotonic decreasing property is guaranteed. 
\end{proof}


\subsection{Proofs for \gmodel Properties}


\comment{
\smallsection{Proof of Lemma~\ref{lem:expected_snr_wrt_mu} (Independent control of FD)}.

\begin{proof}
    Assume a \gmodel graph with $q \in  (0,1)$ and $(\Sigma_{0} \Sigma_{1}) > 0$.
    the expected FD of its node classes $C_0$ and $C_1$ is:
    \begin{equation*}
        \bbE[\text{FD}] (C_0, C_1) \coloneqq  
        \sqrt{(\mu_0 - \mu_1)^\intercal \left( \frac{\Sigma_0 + \Sigma_1}{2} \right)^{-1}(\mu_0 - \mu_1)}.
    \end{equation*}
    It is trivial to see that, 
    for a given $(\Sigma_{0} \Sigma_{1})>0$, 
    $\bbE[\text{FD}](C_0, C_1)$ is a strictly increasing function of $(\mu_0 - \mu_1)$ and is invariant to changes in other parameters.
\end{proof}

\smallsection{Proof of Lemma~\ref{lem:expected_hc_wrt_d} (Independent control of \hc)}.

\begin{proof}

    Since we use weighted sampling without replacement to sample edges, the number of same- and different-class neighbors $d^+_i$ and $d^-_i$ are identical constants for all node $v_i \in V$ 
    (i.e., $d^+_i = d^+_j ~\text{and}~ d^-_i = d^-_j ~,~ \forall v_i, v_j \in V$). 
    Thus, the equation for \hc can be rewritten with \gmodel parameters as follows:
    \begin{equation*}
        \mathbf{h}_{c} = \frac{1}{c} \sum_{\ell \in [c]}
        \left[ 
        \frac
        {d^+ }
        {d^+ + d^-}
        - \frac{\abs{C^+_\ell}}{n}
        \right]_{+}.
    \end{equation*}

    Meanwhile, the number of nodes in each class $\abs{C^+_\ell}$ is determined by the Bernoulli distribution with the probability parameter $q$.
    Thus, \bbE[$\mathbf{h}_c$] can be expressed as follows:
    \begin{align*}
        \bbE[\mathbf{h}_{c}] 
        &= \frac{1}{c} \sum_{\ell \in [c]}
        \left[ \frac { d^+ } {d^+ + d^-}
        - \frac{ \abs{\ell-q} \times n} {n} \right]_{+} \\
        &= \frac{1}{c} \sum_{\ell \in [c]}
        \left[ \frac { d^+ } {d^+ + d^-}
        - \abs{\ell-q} \right]_{+} 
    \end{align*}
    Now, let $d^+ \geq d^-$ and $q \in (0,1)$.
    For a given $q$,
    it is straight-forward to see that \bbE[$\mathbf{h}_c$] is a strictly increasing function of $\frac { d^+ } {d^+ + d^-}$ and invariant of other parameters,
    completing the proof.
\end{proof}
}

\smallsection{Proof of Lemma~\ref{lem:relation_tau_hg} 
($\tau$ controls CFH $\mathbf{h}(\cdot)$ \textit{precisely})}.

\begin{proof}
    \textbf{Regarding claim (i).}
    When the other parameters are fixed, for each 
    $\calX = (X_i)_{i \in [n]} \in \bbR^{n \times k}$ and 
    $\calY = (Y_i)_{i \in [n]} \in [c]^n$.
    The joint probability $\Pr[(\calX, \calY)]$ is fixed regardless of the value of $\tau$.
    Moreover, for each node $v_i$, the numbers of same-class and different-class neighbors are fixed.    
    Now, let us fix any $\calX$ and $\calY$, it suffices to show for each node $v_i$,
    $\bbE[\mathbf{h}_i^{(v)} \mid \tau_1] < \bbE[\mathbf{h}_i^{(v)} \mid \tau_2]$.
    
    To see this, first,
    $\mathbf{h}_i^{(v)} = \sum_{v_j \in N_i} (\mathbf{d}(v_i, V'_i) - \mathbf{d}(v_i, \set{v_j}))$, where $\mathbf{d}(v_i, V'_i)$ is fixed when $\calX$ is fixed.
    Hence, we only need to show that
    \[
    \bbE[\sum_{v_j \in N_i} \mathbf{d}(v_i, \set{v_j})] = 
    \sum_{v_j \in V'_i} \Pr[v_j \in N_i] \mathbf{d}(v_i, \set{v_j})
    \]
    decreases as $\tau$ increases.
    Indeed, as $\tau$ increases,
    as long as 
    $(\mathbf{d}(v_i, \set{v_j}))$'s for $v_j \in N_i$ are not all identical (since $\Sigma_0, \Sigma_1 > 0$, there must be cases satisfying this),
    there exists a threshold $\mathbf{d}_{th}$ such that
    all the $v_j$'s with $(\mathbf{d}(v_i, \set{v_j})) < \mathbf{d}_{th}$ whose edge sampling weights (i.e., $\mathbb{P}_{ij}$'s) increase and 
    all the $v'_j$'s with $(\mathbf{d}(v_i, \set{v'_j})) > \mathbf{d}_{th}$ whose edge sampling weights decrease, which makes the ``weighted sum'' $\sum_{v_j \in V'_i} \Pr[v_j \in N_i] \mathbf{d}(v_i, \set{v_j})$ smaller.

    \textbf{Regarding claim (ii).}
    Above, we have proved that $\mathbf{h}(\cdot)$ is a strictly increasing function of $\tau$, which also means that $\mathbf{h}(\cdot)$ is an injective function of $\tau$.
    Hence, it suffices to show that if $\tau=0, ~\text{then}~ \bbE[\mathbf{{h}}(\cdot)=0]$.

    First, since we assume $\Sigma_0 = \Sigma_1 \neq 0$, the class controlled feature distributions of class-0 and class-1 are identical (i.e., $(X_i \vert Y) \sim \mathcal{N}(0,\Sigma_0), \forall v_i \in V$).
    Thus, the following holds:
    \begin{align} \label{eq:lemma_4.4_proof1}
        \bbE[\mathbf{d}(v_i, C_{Y_i}^+ \setminus \set{v_i})] = \bbE[\mathbf{d}(v_i, C_{Y_i}^-)] = \bbE[\mathbf{d}(v_i, V'_i)], \forall v_i \in V.
    \end{align}
    Recall that $C_{\ell}^+$ denotes the node set of class $\ell$, whereas $C_{\ell}^-$ denotes the set of the rest of the nodes.

    Second, if $\tau=0$, then  $\phi_{ij} = 1, \forall (i,j) \in V \times V$.
    This means that the edge sampling probabilities are identical for all the same-class node pairs and for all the different-class node pairs, respectively.
    Then, for each node $v_i$, the same-class neighbor set $N_i^+$ is chosen from $C^+_{Y_i} \setminus \set{v_i}$ uniformly at random.
    Likewise, the different-class neighbor set $N_i^-$ is chosen from $C^-_{Y_i}$ uniformly at random.
    Thus,
    \begin{align} \label{eq:lemma_4.4_proof2}
        \bbE[\mathbf{d}(v_i, N_i^+)] &= \bbE[\mathbf{d}(v_i, C_{Y_i}^+ \setminus \set{v_i})], \forall v_i \in V \nonumber \\
        \bbE[\mathbf{d}(v_i, N_i^-)] &= \bbE[\mathbf{d}(v_i, C_{Y_i}^-)], \forall v_i \in V.
    \end{align}

    Combining Eqs.~\eqref{eq:lemma_4.4_proof1} and~\eqref{eq:lemma_4.4_proof2}, the following holds if $\tau =0$: 
    \[
        \bbE[\mathbf{d}(v_i, N_i^+)] = \bbE[\mathbf{d}(v_i, N_i^-)] = \bbE[\mathbf{d}(v_i, N_i)], \forall v_i \in V.
    \]

    Since $\bbE[b(v_i)] = \bbE[\mathbf{d}(b_i, V'_i)]$ by definition, the following holds if $\tau =0$:
    \begin{align*}
        \bbE[\mathbf{h}^{(v)}_{i}] &= \bbE[b(v_i)] - \bbE[\mathbf{d}(v_i, N_i)] = 0, \\
        \bbE[\mathbf{h}^{(G)}] &= \frac{1}{\abs{V}} \sum_{v_j \in V} \bbE[\mathbf{h}^{v}_{i}] = 0.
    \end{align*}
\end{proof}

\smallsection{Proof of Lemma~\ref{lem:tau_indep_hdot} 
($\tau$ controls CFH $\mathbf{h}(\cdot)$ \textit{only})}.
\begin{proof}
    It is straightforward, 
    since the values of $\text{FD}(\calG)$     
    and $\mathbf{h}_c(\calG)$ are directly controlled by the other parameters and are independent of the value of $\tau$.    
\end{proof}


\subsection{Proofs for Graph Convolution in \gmodel Graphs}


\begin{reptheorem}{thm:main_result}
    Following the analysis setting, i.e., we
    assume 
    (\textit{i}) 1-dimensional node features $\mu_\ell, \Sigma_\ell, X_i \in \mathbb{R}$,
    (\textit{ii}) symmetric feature means $\mu_0 = -\mu_1 \neq 0$ with identical variances $\Sigma_0 = \Sigma_1 = 1$, and
    we focus on asymptotic setting with 
    (\textit{iii}) fixed $p^- \neq p^+ \in (0, \frac{1}{2})$ and
    (\textit{iv}) $n \rightarrow \infty$ with $d^+ = np^+$ and $d^- = np^-$.    
    Use the prior distribution $\Pr[Y_i = 0] = \Pr[Y_i = 1] = 1/2$
    and fix the other parameters except for $\tau$, 
    after a step of graph convolution $D^{-1}AX$,
    the Bayes error rate (BER) of $\calF$,
    denoted by $\mathcal{B}_{\mathcal{F}}(\mathcal{G}(\cdot, \tau))$
    is     
    minimized at $\tau = 0$ and 
    {strictly} increases as $\abs{\tau}$ increases, i.e.,
    $\argmin_\tau \mathcal{B}_{\mathcal{F}}(\mathcal{G}(\cdot, \tau)) = 0$;
    $\mathcal{B}_\mathcal{F}(\mathcal{G}(\cdot, \tau_0)) < \mathcal{B}_\mathcal{F}(\mathcal{G}(\cdot, \tau_1))$ for any $\tau_0$ and $\tau_1$ such that $\abs{\tau_0} < \abs{\tau_1}$ and $\tau_0 \tau_1 > 0$.
\end{reptheorem}
\begin{proof}    
    To provide a high-level idea, after a step of graph convolution, higher $\abs{\tau}$ makes more nodes have features far from the mean of its whole class and, thus, results in a higher error in classification.

    For the simplicity of presentation, {we assume $\mu_0 = -\mu_1$}.
    Also, we illustrate with one-dimension node features $X_i \in \bbR$ for each node $v_i$ here, but the reasoning can be extended to high-dimensional features in general.
    
    WLOG, we assume {$\Sigma_0 = \Sigma_1 = 1$}, which can be ensured by feature normalization.
    As $n \to \infty$, the sample mean and variance of the node features in each class approach $\pm \mu$ and $\Sigma$.
    For a node $v_i$, WLOG (due to the symmetry), we assume $Y_i = 1$, and let its feature be $\mu + x_i$ 
    (i.e., its class-controlled feature is $x_i$).
    Let $\varphi$ be the PDF of standard normal distribution $\calN(0, 1)$, then the homophily baseline $b(v_i)$ of node $v_i$ is
    \[
    b(v_i) = \int_{-\infty}^{\infty} \varphi(x) \abs{x - x_i} dx = 
    \frac{1}{2} e^{-\frac{x_i^2}{2}} \left(e^{\frac{x_i^2}{2}} x_i \text{erf}\left(\frac{x_i}{\sqrt{2}}\right)-e^{\frac{x_i^2}{2}} x_i \text{erfc}\left(\frac{x_i}{\sqrt{2}}\right)+e^{\frac{x_i^2}{2}} x_i+2 \sqrt{\frac{2}{\pi }}\right) =
    \exp({-\frac{x_i^2}{2}}) + x_i \text{erf}(\frac{x_i}{\sqrt{2}}),
    \]    
    where ``erf'' is the Gauss error function defined as 
    \[
    \text{erf}(z) = \frac{2}{\sqrt{\pi}} \int_0^z e^{-t^2} dt
    \]
    and
    ``erfc'' is the complementary error function defined as 
    \[\text{erfc}(z) = 1 - \text{erf}(z).\]
    Hence, the CFH between $x_i$ and another node $v_j$ with class-controlled feature $x_j$ (i.e., $v_j$ has 
    feature $-\mu + x_j$ if $Y_j = 0$, and it has
    feature $\mu + x_j$ if $Y_j = 1$) is
    \[
    \mathbf{{h}}_{ij}^{(p)} = b(v_i) - {\abs{x_i - x_j}} = 
    \exp({-\frac{x_i^2}{2}}) \sqrt{\frac{2}{\pi}} + x_i \text{erf}\left(\frac{x_i}{\sqrt{2}}\right) - \abs{x_i - x_j},
    \]
    which gives
    \[
    \phi_{ij} = \exp(\tau \mathbf{{h}}_{ij}^{(p)}) = 
    \exp \left(\tau \left(-| x_i-x_j| +x_i \text{erf}\left(\frac{x_i}{\sqrt{2}}\right)+\sqrt{\frac{2}{\pi }} \exp({-\frac{x_i^2}{2}})\right)\right)
    \]
    Since $n \to \infty$, weighted sampling without replacement approaches weighted sampling with replacement approaches, and the probability of $v_j$ being sampled as one of $v_i$'s neighbors is
    \[
    \Pr[v_j \in N_i] = \frac{\phi_{ij}}{\int_{-\infty}^{\infty} \phi_{ij'} \varphi(x_{j'}) dx_{j'}} 
    = \frac{2 \exp({-\frac{1}{2} \tau (2 | x_i-x_j| +\tau-2 x_i)})}{\text{erfc}\left(\frac{\tau-x_i}{\sqrt{2}}\right)+\exp({2 \tau x_i}) \text{erfc}\left(\frac{\tau+x_i}{\sqrt{2}}\right)},
    \]
    and in each sampling step, 
    the sampled neighbor has a class-controlled feature equal to $x_j$ is
    \[
    \Pr[v_j \in N_i] \varphi(x_j).
    \]
    In other words, let $x_{nbr}$ denote the random variable of the class-controlled feature of a sampled neighbor, we have
    \[
    \Pr[x_{nbr} = x^*] 
    = \frac{2 \exp({-\frac{1}{2} \tau (2 | x_i-x^*| +\tau-2 x_i)})}{\text{erfc}\left(\frac{\tau-x_i}{\sqrt{2}}\right)+\exp({2 \tau x_i} )\text{erfc}\left(\frac{\tau+x_i}{\sqrt{2}}\right)} \varphi(x^*)
    = 
    \frac{\sqrt{\frac{2}{\pi }} \exp{(-\frac{1}{2} \tau (2 | x_i-x^*| +\tau-2 x_i)-\frac{x_i^2}{2})}}{\text{erfc}\left(\frac{\tau-x_i}{\sqrt{2}}\right)+\exp({2 \tau x_i}) \text{erfc}\left(\frac{\tau+x_i}{\sqrt{2}}\right)}    
    , \forall x^* \in \bbR.
    \]
    We can compute the closed-form expectation of $x_{nbr}$, which is
    \[
    \bbE[x_{nbr}] =     
    \int_{-\infty}^{\infty} x^* \Pr[x_{nbr} = x^*] dx^* =
    \frac{\tau \left(\text{erfc}\left(\frac{\tau-x_i}{\sqrt{2}}\right)-\exp{(2 \tau x_i)} \text{erfc}\left(\frac{\tau+x_i}{\sqrt{2}}\right)\right)}{\text{erfc}\left(\frac{\tau-x_i}{\sqrt{2}}\right)+\exp{(2 \tau x_i)} \text{erfc}\left(\frac{\tau+x_i}{\sqrt{2}}\right)},
    \]
    and its variance $\Var[x_{nbr}]$ does not depend on the value of $n$.
    After one step of graph convolution, the new node feature of $v_i$ would be
    \[
    \hat{x}_i = \frac{d^+ \mu - d^- \mu + \sum_{t = 1}^{d^+ + d^-} x_{t}}{d^+ + d^-},
    \]   
    where each $x_t$ i.i.d. follows $x_{nbr}$.
    By the central limit theorem, as $n \to \infty$ and, thus, $d^+$ and $d^-$ approaches infinity,
    $\hat{x}_i$ asymptotically follows
    $\calN(\frac{d^+ \mu - d^- \mu}{d^+ + d^-} + \bbE[x_{nbr}], \Var[x_{nbr}] / \sqrt{d^+ + d^-})$.
    WLOG, we assume $d^+ > d^-$ here (when $d^+ < d^-$, the classifier is flipped in a symmetric manner).
    The classifier would be equivalent to
    \[
    \calF(x) = 
    \begin{cases}
    1 &\quad\text{if } x \geq 0  \\
    0 &\quad\text{otherwise }
    \end{cases},
    \]
    and the probability that $\calF$ misclassifies $v_i$ is
    $\Pr[\hat{x}_i < 0]$, which would approach a binary function (since $\Var[x_{nbr}] / \sqrt{d^+ + d^-}$ approaches zero)
    $\bm{1}[\frac{d^+ \mu - d^- \mu}{d^+ + d^-} + \bbE[x_{nbr}] < 0]$.

    Now, we first claim that $\tau = 0$ asymptotically gives the lowest error probability $0$.
    Indeed, when $\tau = 0$, $\bbE[x_{nbr}] = 0$ regardless of the value of $x_i$, and $\bm{1}[\frac{d^+ \mu - d^- \mu}{d^+ + d^-} + \bbE[x_{nbr}] < 0] \to \bm{1}[\frac{d^+ \mu - d^- \mu}{d^+ + d^-} < 0] = 0$.

    Then, we claim that in both directions, the Bayes error rate (BER) of $\calF$ increases as $\abs{\tau}$ increases.
    First, by the symmetric prior, the BER can be written as
    \begin{equation*}
        \mathcal{B}_{\mathcal{F}}(\mathcal{G}(\cdot, \tau))
        = \frac{1}{2} \left( 
    \Pr[\calF(x_i) = 1 \mid Y_i = 0] +
    \Pr[\calF(x_i) = 0 \mid Y_i = 1] 
    \right)
    \end{equation*}
    Again, due to the symmetry, it is equal to
    \begin{equation*}
    \Pr[\calF({x}_i) = 0 \mid Y_i = 1]
    = \int \Pr[\calF({x}_i) = 0] \Pr[{x}_i \mid Y_i = 1]  d {x}_i.
    \end{equation*}
    By the above analysis, after a step of graph convolution, the BER is
    \[
    \int \Pr[\calF({x}_i) = 0] \Pr[{x}_i \mid Y_i = 1] d {x}_i 
    = \int_{-\infty}^{\infty} \Pr[\hat{x}_i < 0] \varphi[{x}_i] d {x}_i        
    \]
    approaching
    \[
    \int_{-\infty}^{\infty} \bm{1}[\frac{d^+ \mu - d^- \mu}{d^+ + d^-} + \bbE[x_{nbr}] < 0] \varphi[{x}_i] d {x}_i.
    \]
    
    When $\tau > 0$, $\bbE[x_{nbr}]$ has the same sign as $x_i$. 
    In such case, we only need to consider $x_i < 0$, since $\bbE[x_{nbr}] > 0$ when $x_i > 0$.
    We claim that for any fixed $x_i < 0$,
    $\bbE[x_{nbr}]$ (w.r.t. $x_i$ and $\tau$) is decreasing w.r.t $\tau > 0$,
    and thus 
    $\bm{1}[\frac{d^+ \mu - d^- \mu}{d^+ + d^-} + \bbE[x_{nbr}] < 0]$ is non-decreasing for all $x_i < 0$, which implies the increase in the BER.
    Indeed,
    
    \resizebox{\textwidth}{!}{
    \begin{minipage}{1.7\textwidth}
    \[    
    \frac{\partial}{\partial \tau} \bbE[x_{nbr}] = 
    \frac{\exp{(\tau x_i)} \left(2 \tau \exp({\tau^2+x_i^2}) \left(\sqrt{\frac{2}{\pi }}-2 x_i \exp({\frac{1}{2} (\tau+x_i)^2}) \text{erfc}\left(\frac{\tau+x_i}{\sqrt{2}}\right)\right) \text{erfc}\left(\frac{\tau-x_i}{\sqrt{2}}\right)+\exp({(\tau-x_i)^2+\frac{1}{2} (\tau+x_i)^2}) \text{erfc}\left(\frac{\tau-x_i}{\sqrt{2}}\right)^2-\exp({(\tau+x_i)^2}) \text{erfc}\left(\frac{\tau+x_i}{\sqrt{2}}\right) \left(\exp({\frac{1}{2} (\tau+x_i)^2}) \text{erfc}\left(\frac{\tau+x_i}{\sqrt{2}}\right)+2 \sqrt{\frac{2}{\pi }} \tau\right)\right)}{\left(\text{erfc}\left(\frac{\tau-x_i}{\sqrt{2}}\right)+\exp({2 \tau x_i}) \text{erfc}\left(\frac{\tau+x_i}{\sqrt{2}}\right)\right)^2},
    \]     
    \end{minipage}
    }
    
    which is negative for all $x_i < 0$ (the denominator is always positive and the numerator is negative when $\tau > 0$ and $x_i < 0$).

    Similarly, we claim that for any fixed $x_i > 0$,
    $\bbE[x_{nbr}](x_i; \tau)$ is decreasing w.r.t $\tau < 0$.
    When $\tau < 0$, $\bbE[x_{nbr}]$ has the opposite sign as $x_i$ and we only need to consider $x_i > 0$ since $\bbE[x_{nbr}] > 0$ when $x_i < 0$.
    We claim that for any fixed $x_i > 0$,
    $\bbE[x_{nbr}]$ is decreases as $\tau < 0$ decreases (i.e., $\tau$ moves from $0$ to $-\infty$), and thus
    $\bm{1}[\frac{d^+ \mu - d^- \mu}{d^+ + d^-} + \bbE[x_{nbr}] < 0]$ is non-decreasing for all $x_i$ values, which implies the increase in the BER.
    Indeed, the partial derivative is the same as above, where
    the denominator is always positive and the numerator is positive when $\tau < 0$ and $x_i > 0$.

    When node features have higher dimensions, 
    obtaining elegant closed-form equations as above would be challenging, but we still have the property that
    $\bbE[\bm{x}_{nbr}] = \bm{0}$ if and only if $\tau = 0$.    
    Moreover, $\bm{x}_{nbr}$ moves further from $\bm{0}$ as $\abs{\tau}$ increases, which increases the BER.
    Specifically, in the above reasoning, one needs to replace 
    $x > 0$ with $\Tilde{\bm{\mu}}^\top \bm{x} > 0$ with $\tilde{\bm{\mu}} 
    = \frac{d^+ \bm{\mu} - d^- \bm{\mu}}{d^+ - d^-}    
    = \frac{p^+ \bm{\mu} - p^- \bm{\mu}}{p^+ - p^-}$ (features that would be classified as the positive class, class-$1$), and similarly replace
    $x < 0$ with $\Tilde{\bm{\mu}}^\top \bm{x} < 0$.
\end{proof}

\clearpage

\section{In-Depth Analysis of Measure}\label{app:measure}

\subsection{\hdot Interpretation} \label{app:hdot_interpretation}
In this subsection, we discuss the details of \hdot interpretation. 
For high-level ideas, refer to Sec.~\ref{sec:measure_interpretation}.

{\textbf{\textit{Magnitude: node-level CFH}}}.
We first rephrase node-level CFH \hn:
\begin{align*}
    \mathbf{\Tilde{h}}^{(v)}_{i} 
    = \frac{\mathbf{h}^{(v)}_{i}} {\max(b(v_{i}) ~,~ \mathbf{d}(v_{i}, N_i))} 
    = \frac{b(v_{i}) - \mathbf{d}(v_{i}, N_i)} {\max(b(v_{i}) ~,~ \mathbf{d}(v_{i}, N_i))} 
    =
    \begin{cases}
        1 - \frac{\mathbf{d}(v_{i}, N_i)} {b(v_{i})} & \text{, if}~ \mathbf{\Tilde{h}}_{i}^{(v)} \geq 0\\
        \frac{b(v_{i})}{\mathbf{d}(v_{i}, N_i)} - 1 & \text{, otherwise}
    \end{cases}     
\end{align*}

For positive (or negative) \hn, the node $v_i$ has 
$\frac{ \vert {\mathbf{\Tilde{h}}^{(v)}_i} \vert }{1- \vert {\mathbf{\Tilde{h}}^{(v)}_i} \vert}$
times smaller (or larger) distance to neighbors ${\mathbf{d}(v_{i}, N_i)}$ than its homophily baseline $b(v_i)$.
For example, if ${\mathbf{d}(v_{i}, N_i)} = 1$ and $b(v_i) = 10$, then \hn $=0.9$, indicating that ${\mathbf{d}(v_{i}, N_i)}$ is 9 times smaller than $b(v_i)$.

{\textbf{\textit{Magnitude: graph-level CFH}}}.
We also rephrase graph-level CFH \hg:
\begin{align*}
    \mathbf{\Tilde{h}}^{(G)} 
    = \frac{\mathbf{h}^{(G)}} {\frac{1}{\abs{V}} \max(\sum_{v_i \in V} b(v_{i}) ~,~  \sum_{v_i \in V} \mathbf{d}(v_{i}, N_i))}
    = 
    \begin{cases}
        1 - \frac{\sum_{v_i \in V} \mathbf{d}(v_{i}, N_i)} {\sum_{v_i \in V} b(v_{i})}  
        & \text{, if}~ \mathbf{\Tilde{h}}^{(G)} \geq 0\\
        \frac{\sum_{v_i \in V} b(v_{i})}{\sum_{v_i \in V} \mathbf{d}(v_{i}, N_i)} - 1
        & \text{, otherwise}
    \end{cases}       
\end{align*}
Like in node-level interpretation, for positive (or negative) \hg, the graph $G$ has $\frac{\vert {\mathbf{\Tilde{h}}^{(G)}} \vert }{1- \vert {\mathbf{\Tilde{h}}^{(G)}} \vert}$
times smaller (or larger) mean distance to neighbors $\frac{1}{\abs{V}} \sum_{v_i \in V}{\mathbf{d}(v_{i}, N_i)}$ than the mean homophily baseline $\frac{1}{\abs{V}}\sum_{v_i \in V} b(v_i)$.
For example, if $\frac{1}{\abs{V}} \sum_{v_i \in V}{\mathbf{d}(v_{i}, N_i)} = 1$ and $\frac{1}{\abs{V}} \sum_{v_i \in V} b(v_i) = 10$, then \hg $=0.9$, indicating that $\frac{1}{\abs{V}} \sum_{v_i \in V}{\mathbf{d}(v_{i}, N_i)}$ is 9 times smaller than $\frac{1}{\abs{V}} \sum_{v_i \in V} b(v_i)$.

{\textbf{\textit{Zero}}}.
If node $v_i$'s feature is identical to all other nodes (i.e., $X_i = X_j,\forall v_j \in V$), $\mathbf{\Tilde{h}}^{(v)}_i = 0$,
because its $b(v_i) = \mathbf{d}(v_i, N_i) = 0$.
\footnote{Recall that we define \hn and \hg to be 0, if $b(\cdot)=0$.}
A fully connected node $v_i$ has $\mathbf{\Tilde{h}}^{(v)}_i = 0$, 
because its $b(v_i) = \mathbf{d}(v_i, N_i)$.
For the same reason, a graph $G$ has $\mathbf{\Tilde{h}}^{(G)} = 0$ if 
(\textit{i}) it is fully connected and/or
(\textit{ii}) has all identical node features.

If a node $v_i$ chooses the non-zero number of neighbors by a random probability, 
$\mathbb{E}[\mathbf{\Tilde{h}}^{(v)}_i] = 0$.
For the same reason, a graph $G$ has $\mathbb{E}[\mathbf{\Tilde{h}}^{(G)}] = 0$ if 
each node $v_i \in V$ chooses a non-zero number of neighbors by a random probability.

It is important to note that there are many other conditions in which \hn and \hg become 0.
That is, while \hn and \hg being 0 may suggest no \ax, 
they are not conclusive.
In-depth analysis of the microscopic patterns, such as distributions of \hn and \hp, may better elucidate the levels of \ax.


\subsection{On Class Control} \label{app:class-control}

{\textbf{\textit{Connection to Part and Partial Correlation}}}.
The class control mechanism in Eq.~\eqref{eq.class_control} is analogous to the variable control method of part and partial correlation.
We focus on part correlation here.

The goal of its variable control is to control the effect of the third variable when analyzing the correlation between two variables.
Let $X^{(P)}, A^{(P)} \in \mathbb{R}^N$ be two variables of interest and $Y^{(P)} \in \mathbb{R}^{N \times d}$ be the third variable, where $N$ is the number of observed samples.
\begin{align}
    \beta^{*} &= \argmin_\beta \lVert (X^{(P)} - Y^{(P)}\beta)\rVert^{2}_{2} \label{eq:optimization_goal}\\
    X \vert Y &= X^{(P)} - (Y^{(P)}\beta^{*}) \label{eq:optimization_result},
\end{align}
where $\beta \in \mathbb{R}^{d}$ is a regression coefficient.
Geometric interpretation of this mechanism is the projection of original $X^{(P)}$ onto the orthogonal space of $Y^{(P)}$ with the least approximation L2-error, expecting that the information of $X^{(P)}$ is maximally maintained given the removal of $Y^{(P)}$ intervention.
In part correlation, correlation is measured between $X \vert Y$ and $A$.

{Now, we show how Eq.~\eqref{eq.class_control} relates to the above equation.}
Let $Y^{(P)} \in \{0,1\}^{\vert V\vert \times c}$ be the one-hot labeled class matrix for each node (i.e., $Y^{(P)}_{ij} = 1$ for $y_{i} = j, \forall v_{i}\in V$, 0 otherwise).
Let $X^{(P)} \in \mathbb{R}^{\vert V\vert}$ be the original node feature.
Now, in this analysis, we let $X^{(P)} \coloneqq X$ and $Y^{(P)} \coloneqq Y$ for notational simplicity.
We optimize Eq.~\eqref{eq:optimization_goal} as below:
\begin{align}
    \beta^{*} &= \argmin_\beta \lVert X - Y\beta\rVert^{2}_{2} = \argmin_\beta (X - Y\beta)^{T}(X - Y\beta) \coloneqq \argmin_\beta \mathcal{L} \\ 
    \frac{\partial \mathcal{L}}{\partial \beta} &= -2Y^{T}X + 2(Y^{T}Y)\beta = 0 \equiv \beta^{*} = (Y^{T}Y)^{-1}Y^{T}X. \label{eq:optimal_derive}
\end{align}
As we take a closer look at the form of $\beta^{*}$ in Eq.~\eqref{eq:optimal_derive}: 
\begin{itemize}
    \item $Y^{T}Y \in \mathbb{R}^{c\times c}$ is a diagonal matrix where each $i-$th diagonal entry indicates the number of nodes belonging to the class $i$ (i.e., $(Y^{T}Y)_{ii} = \vert C^{+}_{i}\vert, \forall i \in [c]$).
    \item $Y^{T}X \in \mathbb{R}^{c}$ is a vector where $j-$th entry indicates the sum of node features that belong to the class $i$ (i.e., $(Y^{T}X)_{i} = \sum_{v_{k} \in C^{+}_{i}}X_{k}, \forall i \in [c]$).
\end{itemize}
Thus, $\beta^{*} \in \mathbb{R}^{c}$ is a vector where $k-$ entry indicates the mean of node features that belong to the class $i$. 
In the given setting, by applying obtained $\beta^{*}$, Eq.~\eqref{eq:optimization_result} is equivalent to $(X\vert Y)_{i} = X_{i} - \frac{1}{\vert C^{+}_{y_{i}}\vert}\sum_{v_{k} \in C^{+}_{y_{i}}}X_{k}$, which is equal to Eq.~\eqref{eq.class_control}.
Therefore, we conclude that Eq.~\eqref{eq.class_control} is a special case of the variable control method of part correlation.


\subsection{Generalizing \hdot}
CFH \hdot measures \ax, while controlling for potential confounding by node class.
However, with its good properties, 
we can generalize it to measure topology-feature and topology-class dependence \textit{without} confound control. 

{\textbf{\textit{Generalized distance function}}}.
Denote the distance function (Eq.~\eqref{eq.distance_func}) with a matrix $\mathbf{X} \in \mathbb{R}^{n \times k}$ as 
\begin{align}\label{eq.distance_func_generalized}
    \mathbf{d}^{*}(v_{i}, V'_{i}, \mathbf{X}) \coloneqq \frac{1}{\vert V'_i\vert}\sum_{v_{j} \in V'_i}\lVert \mathbf{X}_{i} - \mathbf{X}_{j} \rVert_{2}.    
    \vspace{-3mm}
\end{align}
Eq.~\eqref{eq.distance_func} is a special case of Eq.~\eqref{eq.distance_func_generalized}, where $\mathbf{X} = X \vert Y$.
Likewise, we generalize homophily baseline as $b^{*}(v_i) = \mathbf{d}^{*}(v_i, V'_i, \mathbf{X})$. 

{\textbf{\textit{Generalized homophily measure}}}.
Based on $\mathbf{d}^{*}(\cdot)$, we propose a generalized homophily measure $\Tilde{H}(\cdot)$.

\begin{enumerate}[start=1,label={\bfseries G\arabic*)}]
    \item Generalized node pair-level homophily ${H}^{(p)}_{ij}$: 
    \begin{equation}\label{eq.edge_homophily_generalized}
        H^{(p)}_{ij}(\mathbf{X}) = H((v_{i}, v_{j}) ~\vert~ E, \mathbf{X}) \coloneqq b^{*}(v_{i}) - \mathbf{d}^{*}(v_{i}, \{v_{j}\}, \mathbf{X}) 
    \end{equation}
    \vspace{-3mm}
    
    \item Generalized node-level homophily ${H}^{(v)}_{i}$: 
    \begin{equation}\label{eq.node_homophily_generalized}
        H^{(v)}_{i}(\mathbf{X}) = H(v_{i}~\vert~E, \mathbf{X}) \coloneqq \frac{1}{\vert N_i \vert} \sum_{v_{j} \in N_i} H^{(p)}_{ij}
    \end{equation}
    \vspace{-3mm}

    \item Generalized graph-level homophily ${H}^{(G)}$: 
    \begin{equation}\label{eq.graph_homophily_generalized}
        H^{(G)}(\mathbf{X}) = H(G~\vert~E, \mathbf{X}) \coloneqq \frac{1}{\vert V\vert} \sum_{v_{j} \in V} H^{(v)}_{j}
    \end{equation}
    \vspace{-3mm}
    
    \item Generalized node-level normalization: 
    \begin{equation}\label{eq.node_norm_generalized}
        \Tilde{H}^{(v)}_{i}(\mathbf{X}) = 
        \frac{H^{(v)}_{i}}
        {\max(b^{*}(v_{i}) , \mathbf{d}^{*}(v_{i}, N_i, \mathbf{X}))}.
    \end{equation}
    \vspace{-3mm}
    
    \item Generalized graph-level normalization:
    \footnote{
        For completeness, if $b^{*}(\cdot) = 0$, we let $\Tilde{H}^{(v)}_{i}(\cdot), \Tilde{H}^{(G)}(\cdot)=0$.
    }
    \begin{equation}\label{eq.graph_norm_generalized}
        \Tilde{H}^{(G)}(\mathbf{X}) = 
        \frac{H^{(G)}}
        {\frac{1}{\abs{V}} 
        \max(\sum_{v_i \in V} b^{*}(v_{i}),  \sum_{v_i \in V} \mathbf{d}^{*}(v_{i}, N_i, \mathbf{X}))}.
    \end{equation} 
    \vspace{-3mm}
\end{enumerate}

CFH measure \hdot is a special case of the proposed generalized homophily measure \Hdot.
With the generalized homophily measure, we can measure feature homophily $\Tilde{H}^{(G)}(X)$ and class homophily $\Tilde{H}^{(G)}(Y)$, where $Y \in \mathbb{R}^{n \times c}$ is a node class matrix.


\clearpage

\begin{table*}[t]
\begin{center}
\caption{Comparison of Graph-Level CFH Scores with Different Features} \label{tab:CFH_score_comparison}
    \resizebox{\textwidth}{!}{
    \renewcommand{\arraystretch}{1.0}
        \centering
        \begin{tabular}{*{13}{c}}
            \toprule            
            \textbf{Dataset} & Cora & CiteSeer & PubMed & Cora-ML & Cora-Full & DBLP & Wiki-CS & CS & Physics & Photo & Computers & Ogbn-ArXiv \\
            \midrule
            $X^{(orig)}$ & 0.0562 & 0.0802 & 0.1072 & 0.0390 & 0.0388 & 0.0786 & 0.2182 & -0.0042 & 0.0760 & -0.0150 & -0.0207 & 0.0755 \\
            $X^{(rand)}$ & 0.0204 & 0.0151 & -0.0061 & 0.0036 & 0.0053 & 0.0031 & -0.0018 & 0.0128 & 0.0031 & 0.0131 & 0.0067 & 0.0036  \\
            $X^{(conv)}(1)$ & 0.4612 & 0.5706 & 0.4399 & 0.4217 & 0.3991 & 0.4605 & 0.4442 & 0.3991 & 0.4603 & 0.3421 & 0.3278 & 0.3855  \\
            $X^{(conv)}(2)$ & 0.6238 & 0.7177 & 0.6095 & 0.5956 & 0.5621 & 0.6227 & 0.4956 & 0.5326 & 0.5835 & 0.5214 & 0.4878 & 0.4768  \\
            $X^{(conv)}(4)$ & 0.7221 & 0.8003 & 0.7181 & 0.7017 & 0.6513 & 0.7271 & 0.5120 & 0.5848 & 0.6479 & 0.6515 & 0.5995 & 0.5242  \\

            \midrule
            \multicolumn{13}{c}{}\\[0.3em]
            \midrule
            
            \textbf{Dataset} & Chameleon & Squirrel & Actor & Texas & Cornell & Wisconsin & RM-Emp & AMZ-Rts & Tolokers & Penn94 & Flickr & ArXiv-Year \\
            \midrule
            $X^{(orig)}$ & -0.0714 & -0.0538 & -0.0199 & -0.0803 & 0.0041 & -0.0324 & 0.0199 & 0.1266 & 0.1296 & 0.0870 & 0.0018 & 0.1206 \\
            $X^{(rand)}$ & -0.0368 & 0.0136 & 0.0060 & -0.0398 & 0.0390 & 0.0165 & 0.0020 & -0.0003 & -0.0131 & -0.0013 & -0.0023 & 0.0037 \\
            $X^{(conv)}(1)$  & 0.3835 & 0.3529 & 0.2745 & 0.2279 & 0.2121 & 0.2590 & 0.4165 & 0.5893 & 0.5162 & O.O.M. & 0.2052 & 0.4709 \\
            $X^{(conv)}(2)$  & 0.5299 & 0.5316 & 0.3983 & 0.3111 & 0.3017 & 0.3584 & 0.5899 & 0.7667 & 0.6524 & O.O.M. & 0.2097 & 0.5784 \\
            $X^{(conv)}(4)$  & 0.6404 & 0.6227 & 0.4974 & 0.3704 & 0.3476 & 0.4237 & 0.6852 & 0.8563 & 0.6881 & O.O.M. & 0.3185 & 0.6386 \\

            \bottomrule
        \end{tabular}
        }
\end{center}
\begin{tablenotes}
    \footnotesize
    \item ($\ast$) $X^{(orig)}$ denotes the original node features. RM-Emp stands for Roman-Empire, and AMZ-Rts stands for Amazon-Ratings. O.O.M. denotes out-of-memory.
\end{tablenotes}
\vspace{3mm}
\end{table*}

\section{In-Depth Analysis of the Benchmark Datasets}\label{app:patterns}
We further analyze the benchmark datasets.
Specifically, we buttress Observations~\ref{obs:absolte_homophily}-\ref{obs:shuffle_homophily} with additional results. 
We also briefly discuss the Roman-Empire dataset, delving into why GNN performance degrades consistently over the feature shuffles.

\subsection{Observation 1: The Full Results} \label{app:obs1}

We focus on the lowness of CFH \hdot in the benchmark graphs. Specifically, we further support Observation~\ref{obs:absolte_homophily} with 
(\textit{i}) comparison of CFH scores with different features and 
(\textit{ii}) node-level analysis.

{\textbf{\textit{Comparison to different features}}}.
First, we investigate how low the CFH \hdot scores are for the benchmark graphs, compared to different features.
Recall that their mean $\abs{\mathbf{\Tilde{h}}^{(G)}} = 0.06$.
For comparison, we consider two other node features.
\begin{itemize}
    \item Random Baseline: Random node features $X_i^{(rand)} \sim \mathcal{N}(0,1), \forall v_i \in V.$
    \item Homophilic Baseline: Convoluted node features $X^{(conv)}(l) = ((D+I)^{-1}(A+I))^{l}X, $
\end{itemize}
where $I \in \mathbb{R}^{n \times n}$ is an identity matrix and $l \in \set{1,2,4}$.

In Table~\ref{tab:CFH_score_comparison}, we report the \hg for each feature type and dataset.
Averaging the scores for all 24 datasets, 
$X^{(conv)}(4)$ has the mean \hg score of 0.61, 
while $X^{(rand)}$ has the mean near 0. 
The mean \hg of the original node feature $X^{(orig)}$ is much closer to that of the random features $X^{(rand)}$, further supporting Observation~\ref{obs:absolte_homophily}.

{\textbf{\textit{Node-level analysis}}}.
Second, we report that node-level CFH \hn scores also tend to be positive and low.
As shown in Fig.~\ref{fig:hn_over_shuffle(full)}(a), most nodes in most graphs have $\vert \mathbf{h}^{(v)}_i \vert < 0.3$.
As observed at the graph level, each node's distance to the neighbors is close to its homophily baseline.

\textbf{\textit{Conclusion}}.
From the analyses, we conclude again that CFH \hdot are generally positive and low in the benchmark datasets. 

\begin{figure}[t] 
    \centering
    
    \begin{subfigure}[b]{\linewidth}
    \centering
    \includegraphics[width=\textwidth]{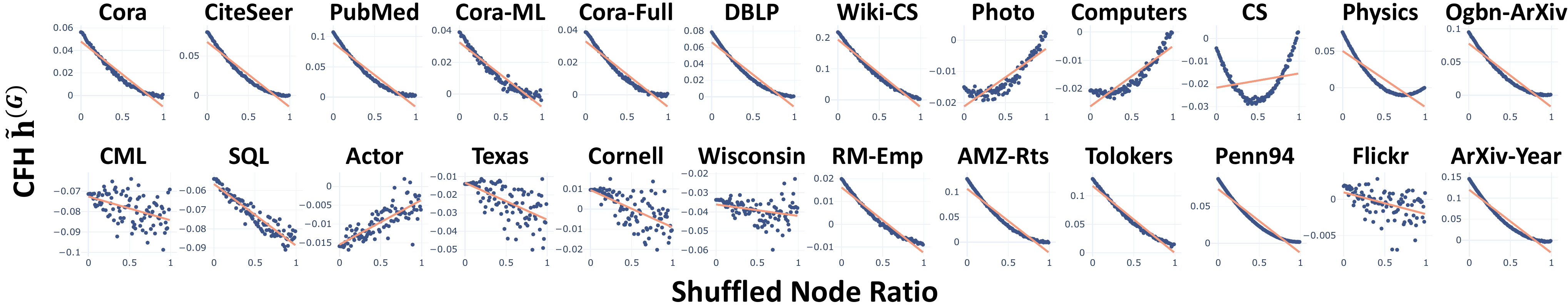}
    \caption{Graph-level CFH score \hg's over the \textit{class-wise} feature shuffles.}
    \end{subfigure}

    \vspace{2mm}
    
    \begin{subfigure}[b]{\linewidth}
    \centering
    \includegraphics[width=\textwidth]{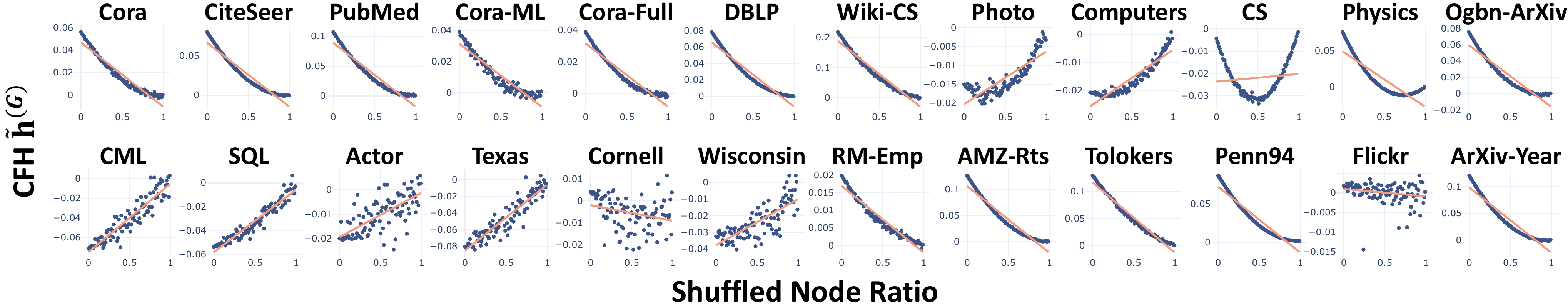}
    \caption{Graph-level CFH score \hg's over the \textit{non-class-wise} feature shuffles.}
    \end{subfigure}

    \caption{\label{fig:hg_over_shuffle(full)}
    \bolden{Graph-Level CFH Statistics in Real-World Graphs and their Relationship to the Feature Shuffle.}
    }
    \vspace{2mm} 
\end{figure}

\begin{figure}[t] 
    \centering
    
    \begin{subfigure}[b]{\linewidth}
    \centering
    \includegraphics[width=\textwidth]{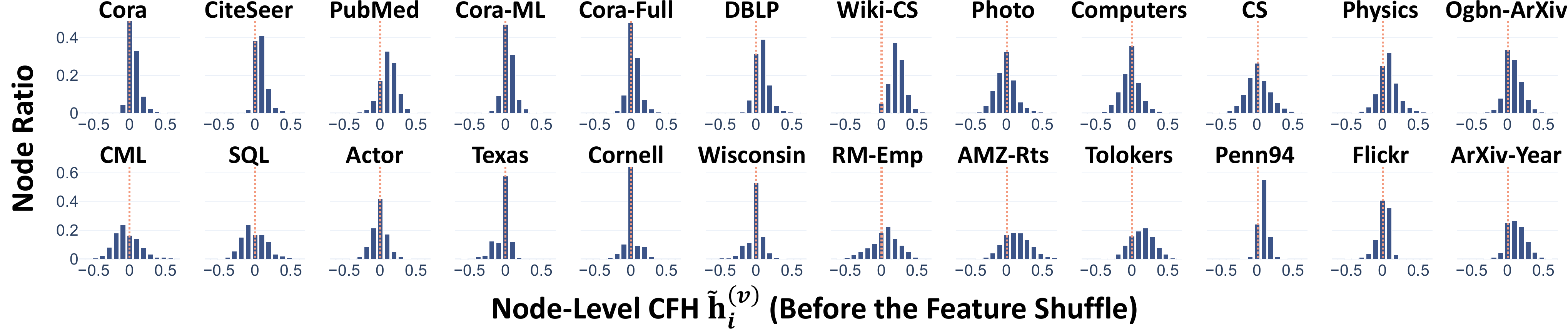}
    \caption{Histogram of node-level CFH score \hn's before the feature shuffle.}
    \end{subfigure}

    \vspace{2mm}

    \begin{subfigure}[b]{\linewidth}
    \centering
    \includegraphics[width=\textwidth]{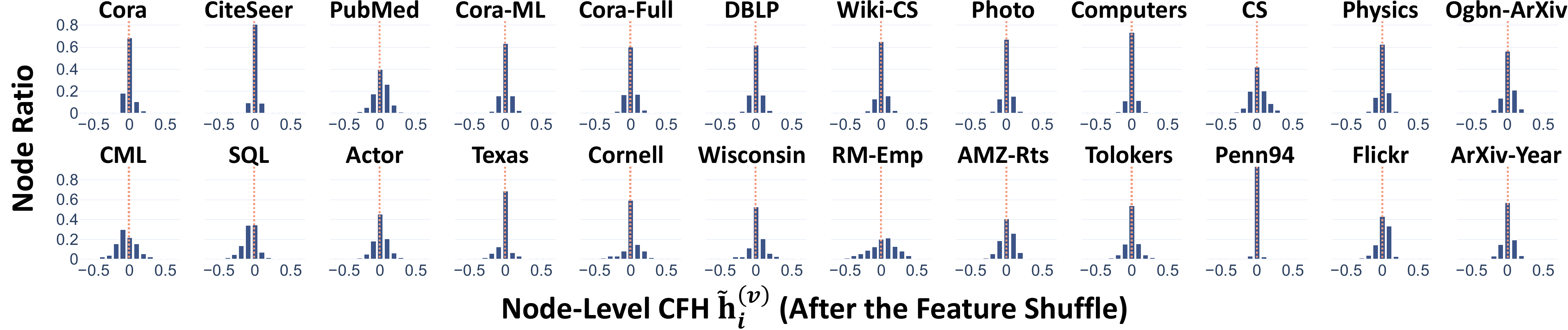}
    \caption{Histogram of node-level CFH score \hn's after the feature shuffle.}
    \end{subfigure}

    \vspace{2mm}

    \begin{subfigure}[b]{\linewidth}
    \centering
    \includegraphics[width=\textwidth]{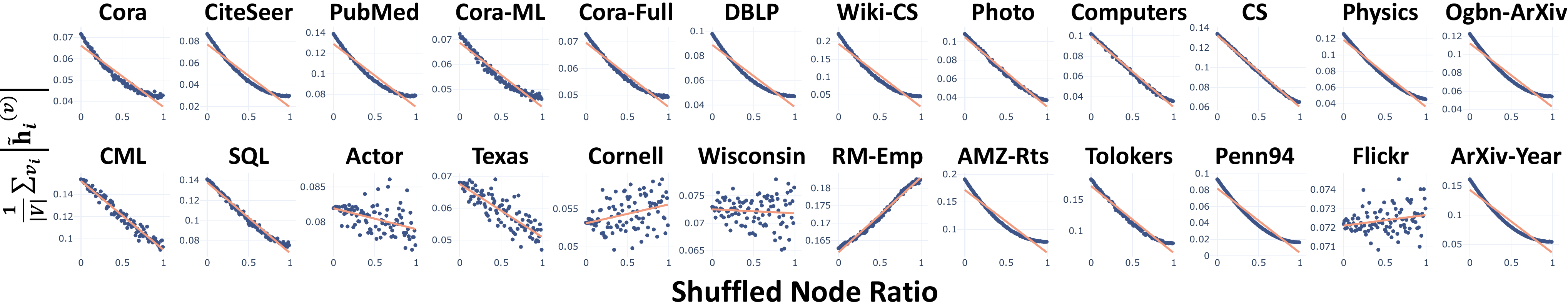}
    \caption{The mean node-level CFH magnitude $\vert {\mathbf{h}^{(v)}_i} \vert$'s over the feature shuffles.}
    \end{subfigure}

    \caption{\label{fig:hn_over_shuffle(full)}
    \bolden{Node-Level CFH Statistics in Real-World Graphs and Their Relationship to the Feature Shuffle.}
    }

    \vspace{2mm} 
\end{figure}



\subsection{Observation 2: The Full Results} \label{app:obs2}

We further demonstrate that CFH and class-homophily have a small, positive correlation with two additional evidence.
We (\textit{i}) measure correlations between CFH and the other class-homophily measures and 
(\textit{ii}) conduct node-level correlation analysis.

\textbf{\textit{Other class-homophily measures}}.
First, we complement Observation~\ref{obs:relative_homophily} by measuring correlations between CFH and different measures of class-homophily, defined by~\citet{Pei2020Geom-gcn:Networks} and~\citet{Zhu2020BeyondDesigns}. 
Class-homophily defined by~\citet{Zhu2020BeyondDesigns} and CFH have correlation coefficients of 0.403 (Pearson's $r$) and 0.196 (Kendall's $\tau$).
Class-homophily defined by~\citet{Pei2020Geom-gcn:Networks} and CFH have correlation coefficients of 0.401 (Pearson's $r$) and 0.225 (Kendall's $\tau$).
While we find slightly stronger correlations between CFH and the other measures, the correlations are not consistently strong, such that there exist non-negligible gaps between Pearson's $r$ and Kendall's $\tau$.
We, thus, do not find counter-evidence for Observation~\ref{obs:relative_homophily}.

\textbf{\textit{Node-level analysis}}.
Now, we complement Observation~\ref{obs:relative_homophily} with node-level analysis.
Specifically, we analyze the correlation between node-level CFH \hn and node-level class-homophily ${\Tilde{H}}^{(v)}_i(Y)$ (Eq.~\eqref{eq.node_norm_generalized}).
\footnote{We use ${\Tilde{H}}^{(v)}_i(Y)$ as the node-level class-homophily measure because \hc has not been defined at node-level.}
We find that the Pearson correlations in most of the 24 graphs are very low, such that the absolute values of their Pearson's $r$ scores are below 0.2 in 22/24 graphs (Table~\ref{tab:data_stat}).
Also, 19/24 have positive Pearson correlations.

\textbf{\textit{Conclusion}}.
From the analyses, we conclude again that CFH \hdot has a small, positive correlation with class homophily.



\subsection{Observation 3: The Full Results} \label{app:obs3}
Here, we report how the feature shuffle affects CFH in all 24 benchmark datasets (Figs.~\ref{fig:hg_over_shuffle(full)}-\ref{fig:hn_over_shuffle(full)}).
While most datasets follow the pattern reported in Observation~\ref{obs:shuffle_homophily}, 
few of them (Chameleon, Squirrel, Texas, Wisconsin, and Cornell) do not fully obey it.
Specifically, their graph-level CFH \hg score moves away from 0 over increasing shuffled node ratio.
We reason their deviation by answering two questions: 
(\textit{i}) Why do the \hg scores become larger after the feature shuffle?;
(\textit{ii}) Why do the \hg scores not approach 0 after the feature shuffle?

\textbf{\textit{Answer to question (\textit{i})}}.
Node-level CFH \hn distributions before and after the feature shuffle (Fig.~\ref{fig:hn_over_shuffle(full)}) reveals that the mean $\vert \mathbf{\Tilde{h}}^{(v)}_i \vert$ decreases after the feature shuffle in all five datasets.
The finding indicates that the magnitude in which the distance to neighbors (i.e., $\mathbf{d}(v_i, N_i)$) deviates from the homophily baseline (i.e., $b(v_i)$) becomes smaller after the feature shuffle.
In short, the finding demonstrates (\textit{i}) that \ax is perturbed after the feature shuffle and (\textit{ii}) an in-depth analysis of \hdot is necessary to reveal the pattern.
The graph-level CFH \hg does not fully capture the subtlety as it mean-aggregates the positive and negative \hn scores.

\textbf{\textit{Answer to question (\textit{ii})}}.
The \hg scores may not approach 0 due to the imperfect class-control.
We evidence our claim with \textit{non-class-wise} feature shuffle, which means that the feature vectors of all nodes, irrespective of their class membership, are shuffled together.
After \textit{non-class-wise} feature shuffle, we find that the graph-level CFH \hg's approach 0 in 23/24 datasets (Fig.~\ref{fig:hg_over_shuffle(full)}(b)).
The finding suggests that feature distribution difference between node classes hinders CFH \hg from approaching 0.
An advanced class-control method may mitigate such a problem, and we leave it up to future studies.

\textbf{\textit{Conclusion}}. 
The series of analyses underscore the complexity of quantifying \ax.
We claim that while the feature shuffle effectively perturbs \ax, \hg may not approach 0 due to 
(\textit{i}) node-level discrepancies
and (\textit{ii}) the complex nature of feature distribution.
Therefore, we further argue that a few datasets' deviations from Observation~\ref{obs:shuffle_homophily} do not undermine the integrity of our conclusion that \ax mediates the effect of graph convolution.


\subsection{The Roman-Empire Dataset} \label{app:roman_empire}
The Roman-Empire dataset has an unusual, chain-like graph topology. 
Its number of nodes is 22,662, with a diameter of 6,824.
In short, there is no small world effect observed, 
making the effect of the feature shuffle different from the rest of the datasets.
A node-level analysis reveals its unique patterns of CFH \hdot over the feature shuffle.
In Fig.~\ref{fig:hn_over_shuffle(full)}(a,b), we \textit{uniquely} observe that its histograms of \hn before and after the feature shuffle are highly similar.
In Fig.~\ref{fig:hn_over_shuffle(full)}(c), we further \textit{uniquely} observe that its mean $\vert \mathbf{\Tilde{h}}^{(v)}_i \vert$ increase significantly (2\%p) after the feature shuffle.
The qualitative and quantitative uniqueness of the Roman-Empire dataset may have contributed to the degrading GNN performance over the feature shuffles.

\begin{table*}[t]
\begin{center}
\caption{Statistics of the Benchmark Datasets} \label{tab:data_stat}
    \resizebox{\textwidth}{!}{
    \renewcommand{\arraystretch}{0.9}
        \centering
        \begin{tabular}{*{13}{c}}
            \toprule            
            \textbf{Dataset} & Cora & CiteSeer & PubMed & Cora-ML & Cora-Full & DBLP & Wiki-CS & CS & Physics & Photo & Computers & Ogbn-ArXiv \\
            \midrule
            \textbf{\# Nodes} & 2,708 & 3,327 & 19,717 & 2,995 & 19,793 & 17,716 & 11,701 & 18,333 & 34,393 & 7,650 & 13,752 & 169,343\\
            \textbf{\# Edges} & 10,556 & 9,104 & 88,648 & 16,316 & 126,842 & 105,734 & 431,726 & 163,788 & 495,924 & 238,162 & 491,722 & 1,166,243 \\
            \textbf{\# Features} & 1,433 & 3,704 & 500 & 2,879 & 8,710 & 1,639 & 300 & 6,805 & 8,415 & 745 & 767 & 128 \\
            \textbf{\# Class} & 7 & 6 & 3 & 7 & 70 & 4 & 10 & 15 & 5 & 8 & 10 & 40 \\
            \textbf{Class Homophily \hc} &  0.7657 & 0.6267 & 0.6641 & 0.7401 & 0.4959  & 0.6522 & 0.5681 & 0.7547 & 0.8474 & 0.7722 & 0.7002 & 0.4445 \\
            \textbf{CFH \hg} & 0.0562 & 0.0802 & 0.1072 & 0.0390 & 0.0388 & 0.0786 & 0.2182 & -0.0042 & 0.0760 & -0.0150 & -0.0207 & 0.0755 \\
            \textbf{Pearson(\hn, $\Tilde{H}^{(v)}_i(Y)$)} & 0.1158 & 0.1907 & 0.0660 & 0.1602 & 0.1090 & 0.1015 & 0.2993 & 0.1938 & 0.1344 & 0.1332 & 0.0287 &  0.2535\\

            \midrule
            \multicolumn{13}{c}{}\\[0.3em]
            \midrule
            
            \textbf{Dataset} & Chameleon & Squirrel & Actor & Texas & Cornell & Wisconsin & RM-Emp & AMZ-Rts & Tolokers & Penn94 & Flickr & ArXiv-Year \\
            \midrule
            \textbf{\# Nodes} & 890 & 2,223 & 7,600 & 183 & 183 & 251 & 22,662 & 24,292 & 11,758 & 41,554 & 89,250 & 169,343  \\
            \textbf{\# Edges} & 17,708 & 93,996 & 30,019 & 325 & 298 & 515 & 65,854 & 186,100 & 1,038,000 & 2,724,458 & 899,756 & 1,166,243  \\
            \textbf{\# Features} & 2,325 & 2,089 & 932 & 1,703 & 1,703 & 1,703 & 300 & 300 & 10 & 4814 & 500 & 128 \\
            \textbf{\# Class} & 5 & 5 & 5 & 5 & 5 & 5 & 18 & 5 & 2 & 2 & 7 & 5  \\
            \textbf{Class Homophily \hc} & 0.0444 & 0.0398 & 0.0061 & 0.0000 & 0.1504 & 0.0839 & 0.0208 & 0.1266 & 0.1801 & 0.0460 & 0.0698 & 0.1910  \\
            \textbf{CFH \hg} & -0.0714 & -0.0538 & -0.0199 & -0.0803 & 0.0041 & -0.0324 & 0.0199 & 0.1266 & 0.1296 & 0.0870 & 0.0018 & 0.1206 \\
            \textbf{Pearson(\hn, $\Tilde{H}^{(v)}_i(Y)$)} & 0.1390 & -0.0759 & -0.0272 & 0.0178 &-0.1718 &0.1539 &0.1308 &-0.0697 &-0.0715 &0.1523 &0.0217 &0.1721 \\
            
            \bottomrule
        \end{tabular}
        }
\end{center}

\footnotesize

($\ast$) For undirected graphs, their edges are counted as two directed edges.    

($\ast \ast$) $\Tilde{H}^{(v)}_i(Y)$ is a node-level class-homophily measure, defined in Eq.~\eqref{eq.graph_norm_generalized} of Appendix~\ref{app:measure}.
\vspace{1.5mm}

\end{table*}

\subsection{Dataset Description} \label{app:data_stat}
We provide a comprehensive description of the benchmark datasets, with their statistics in Table~\ref{fig:data_overall_statistics}.
\begin{itemize}
    \item The \textit{Cora}, \textit{CiteSeer}, \textit{PubMed}, \textit{Cora-ML}, \textit{Cora-Full}, \textit{DBLP}, and \textit{Ogbn-ArXiv}~\cite{planetoid, citataion_full, ogb_graphs} 
    datasets are citation networks.
    Each node represents a document, and two nodes are adjacent if a citation exists between the two corresponding articles.
    For each node, the features are the text features of the corresponding article,
    and the node class is the category of the research/subject domain of the document.    

    \item The \textit{Wiki-CS} 
    dataset is a webpage network of Wikipedia~\cite{Mernyei2020Wiki-cs:Networks}.
    Each node represents a Wikipedia webpage,
    and two nodes are adjacent if a hyperlink exists between the two webpages. 
    For each node, the features are the GloVe word embeddings of the webpage, 
    and the node class represents the article category of the webpage.

    \item The \textit{Computer} and \textit{Photo} 
    datasets are Amazon co-purchase networks~\cite{coauthor_copurchase}.
    Each node represents a product, and two nodes are adjacent if the two products are frequently purchased together. 
    For each node, the features are the bag-of-words of its customer reviews,
    and the node class is the product category.

    \item The \textit{CS} and \textit{Physics} 
    datasets are coauthor networks~\cite{coauthor_copurchase}.
    Each node represents an author, and two nodes are adjacent if the two corresponding authors have coauthored a paper together. 
    For each node, the features are the author's paper keywords,
    and the class is the most active field of the author's study.

    \item The \textit{Chameleon} and \textit{Squirrel} 
    datasets are webpage networks of Wikipedia~\cite{Pei2020Geom-gcn:Networks}.
    Each node represents a webpage on Wikipedia, and two nodes are adjacent if mutual links exist between the two corresponding web pages.
    For each node, the features are informative nouns on the corresponding webpage,
    and the node class represents the category of the average monthly traffic of the corresponding webpage.
    We use the version of the datasets provided by~\citet{Platonov2023AProgress}, which has filtered the possible duplicate nodes.

    \item The \textit{Actor} 
    dataset is the actor-only induced subgraph of a film-director-actor-writer network obtained from Wikipedia webpages~\cite{actor_graph, Pei2020Geom-gcn:Networks}.
    Each node represents an actor, and two nodes are adjacent if the two actors appear on the same Wikipedia webpage.
    For each node, the features are derived from the keywords on the Wikipedia webpage of the corresponding actor,
    and the node class is determined by the words on the webpage.
    
    \item The \textit{Texas}, \textit{Cornell}, and \textit{Wisconsin} 
    datasets are extracted from the WebKB dataset~\cite{Pei2020Geom-gcn:Networks}.
    Each node represents a webpage, and two nodes are adjacent if a hyperlink between the two webpages.
    For each node, the features are the bag-of-words features of the corresponding webpage,
    and the node class is the category of the webpage.    

    \item The \textit{Roman-Empire} 
    dataset is a network of texts in a Wikipedia article~\cite{Platonov2023AProgress}.
    Each node represents each word in the article, 
    and two nodes are adjacent if the words follow each other in the text or if one word depends on the other.
    For each node, the features is word embedding of the text,
    and the node class is the syntactic role of the text. 

    \item The \textit{Amazon-Ratings} 
    dataset is a co-purchase network of Amazon products~\cite{Platonov2023AProgress}.
    Each node represents a product, 
    and two nodes are adjacent if they are frequently purchased together.
    For each node, the features are text embedding of the product description,
    and the node class is its ratings. 

    \item The \textit{Tolokers} 
    dataset is an online social network of Toloka crowdsourcing platform~\cite{Platonov2023AProgress}.
    Each node represents a worker, 
    and two nodes are adjacent if they have worked on the same task.
    For each node, the features consist of the worker profile and task performance,
    and the node class is whether or not the worker has been banned. 

    \item The \textit{Penn94} 
    dataset is a social network on Facebook~\cite{Lim2021LargeMethods}.
    Each node represents a user, 
    and two nodes are adjacent if they are friends.
    For each node, the features are the user profile,
    and the node class is the reported gender. 

    \item The \textit{Flickr} 
    dataset is a network of images to Flickr website~\cite{flickr}.
    Each node represents an image, 
    and two nodes are adjacent if the two share some common properties (e.g. the same geographic location). 
    For each node, the features are bag-of-word representations of the image,
    and the class is the image's tag.
    
    \item The \textit{ArXiv-Year} 
    dataset~\cite{Lim2021LargeMethods} is a version of the \textit{Ogbn-ArXiv} dataset,
    where the original node class is replaced with the article publication year.
\end{itemize}

\clearpage
\begin{figure*}[t]
\begin{center}
    
    \includegraphics[width=\textwidth]{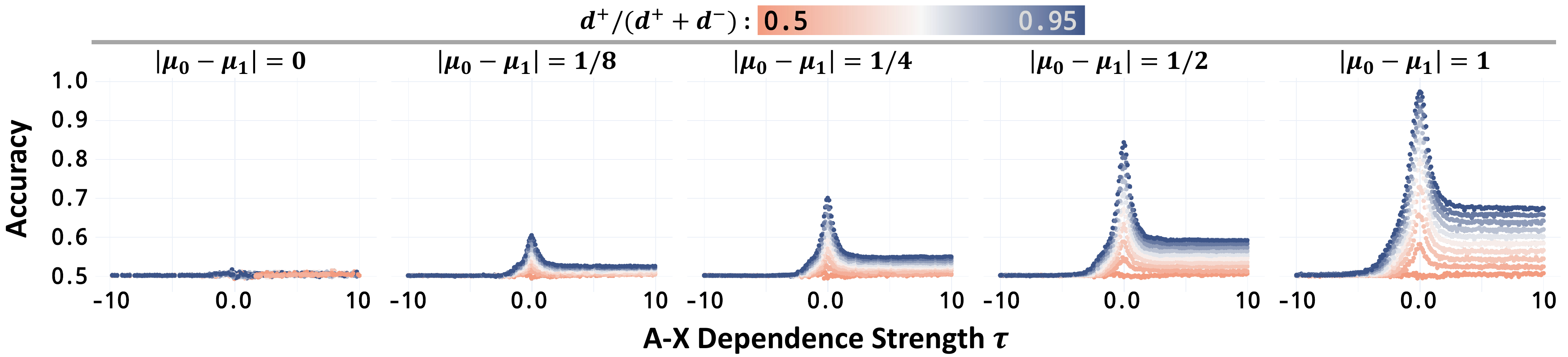}

    \caption{\label{fig:csbm-x-tau-exp}
    \bolden{The Simplified GNN Performance in \gmodel Graphs: \textit{Wider $\tau$ Range}}.
    With $\tau \in \set{-10, -9.9, \ldots, 9.9, 10}$, 
    the findings are consistent with those from Fig.~\ref{fig:csbm-x-main-exp}.
    }
\end{center}
\end{figure*}

\section{In-Depth Analysis of \gmodel}\label{app:csbmx}

In this section, we provide the full experimental result with \gmodel, together with its formal description. 
We use the same setting as in Sec.~\ref{sec:graph_model}, unless otherwise specified.

\subsection{Full Experiments: Large Range of $\tau$}

Fig.~\ref{fig:csbm-x-tau-exp} shows the experimental results with larger range of $\tau \in \set{-10, -9.9, \ldots, 9.9, 10}$.
That is, the \gmodel graph $\mathcal{G}$ has extremely large CFH \hdot.
Still, the results are consistent with the conclusion of Sec.~\ref{sec:graph_model}.

\subsection{Full Experiments: Feature Parameter Variations}

\textbf{\textit{{High-dimensional features}}}.
Fig.~\ref{fig:csbm-x-param-exp} shows the experimental results with feature dimension $k \in \set{4,16}$.
Specifically, we let $\mu_0 = - \mu_1$, and all elements within each mean vector are identical (i.e., $\mu_0 = [c,c, \ldots, c] \in \mathbb{R}^k$ and $\mu_1 = [-c,-c, \ldots ,-c] \in \mathbb{R}^k$, where $c$ is a constant). 
To control FD, we generate \gmodel graph $\mathcal{G}$'s with 
(\textit{i}) $2c \in \set{0,~ 1/8,~ 1/4,~ 1/2,~ 1}$ and 
(\textit{ii}) $\Sigma_0 = \Sigma_1 = \operatorname{diag}(\mathbf{1})$.
The results are consistent with the conclusion of Sec.~\ref{sec:graph_model}.

\textbf{\textit{{Imbalanced feature variances}}}.
Fig.~\ref{fig:csbm-x-param-exp} shows the experimental results with imbalanced feature variances (i.e., $\Sigma_0 \neq \Sigma_1$).
To control FD with imbalanced feature variances, 
we generate \gmodel graph $\mathcal{G}$'s with $\Sigma_0 = 1$ and $\Sigma_1 \in \set{0.5, 0.25}$.
The results are consistent with the conclusion of Sec.~\ref{sec:graph_model}.

\subsection{Full Experiments: Graph Convolution Variations}

\textbf{\textit{{The number of graph convolution layers}}}.
Fig.~\ref{fig:csbm-x-model-exp} shows the experimental results with two graph convolution layers.
Specifically, we use $(D^{-1}A)^2X$ as the simplified GNN model.
We find that with 2 layers, the beneficial effect of small $\tau$ is larger. 
The finding possibly relates to the sparse topology of the generated \gmodel graph $\mathcal{G}$'s,
\footnote{Recall that the number of nodes is 10,000, whereas the node degree is 20 for all nodes.}
such that two convolution layers do not trigger over-smoothing.
Overall, the results are consistent with the conclusion of Sec.~\ref{sec:graph_model}.

\textbf{\textit{{Symmetric normalized graph convolution}}}.
Fig.~\ref{fig:csbm-x-model-exp} shows the experimental results with symmetrically normalized graph convolution.
Specifically, we use $((D+I)^{-\frac{1}{2}}(A+I)(D+I)^{-\frac{1}{2}})X$ as the simplified GNN model, where $I \in \mathbb{R}^{n \times n}$ is the identity matrix.
To do so, we conduct two pre-processing for \gmodel graph $\mathcal{G}$.
First, since the symmetric normalization assumes an undirected graph, we transform all its directed edges into (unweighted) undirected edges.
Second, all nodes have added self-loops.
Even with symmetric normalization, the results are consistent with the conclusion of Sec.~\ref{sec:graph_model}.

The extensive experiments empirically support our conclusion that \ax mediates the effect of graph convolution.

\begin{figure*}[t]
\begin{center}
    
    \includegraphics[width=\textwidth]{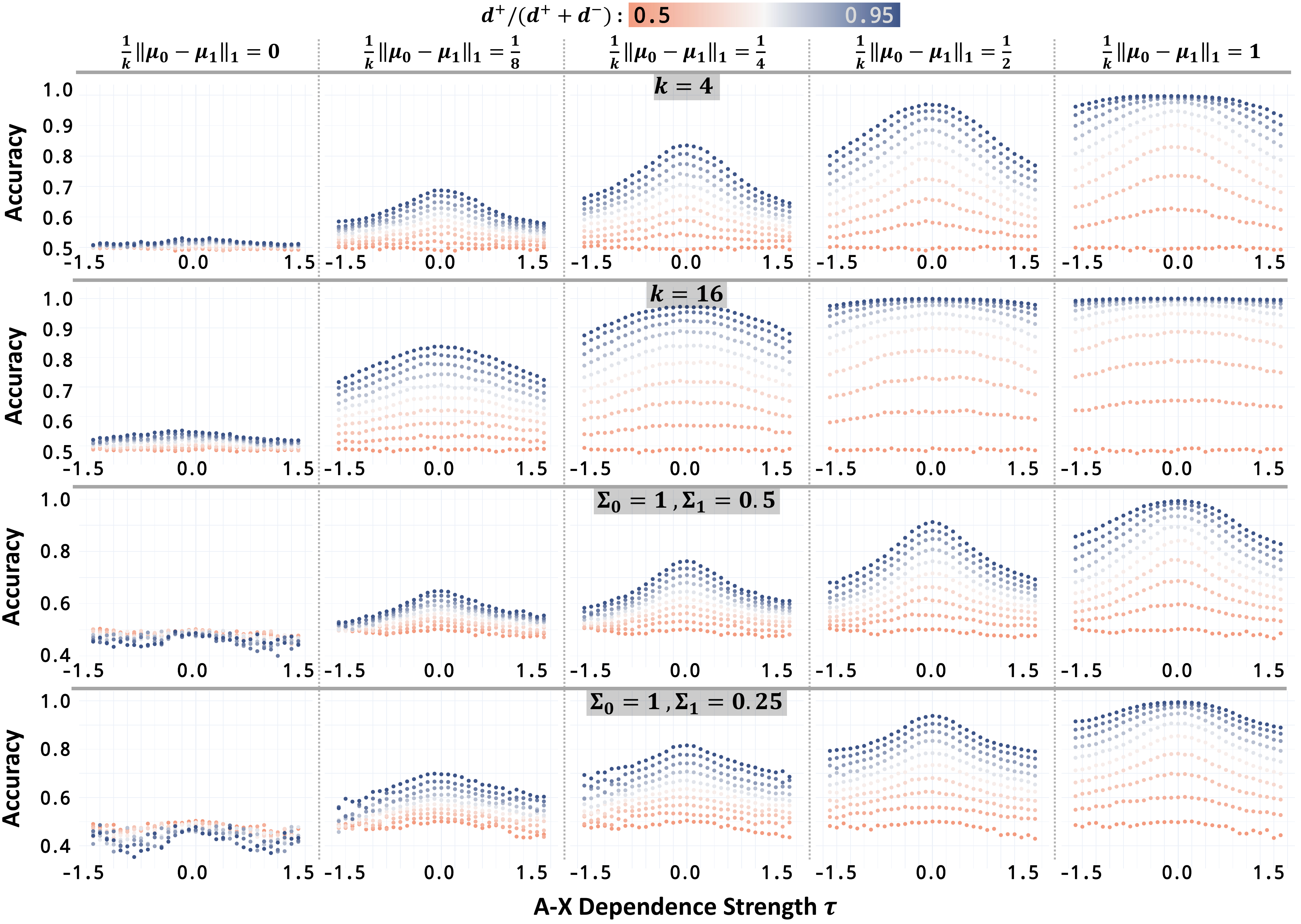}

    \caption{\label{fig:csbm-x-param-exp}
    \bolden{The Simplified GNN Performance in \gmodel Graphs: \textit{Feature Variations}.}
    With different feature parameter configurations, 
    including (\textit{i}) high-dimensional features (i.e. $k>1$) and 
    (\textit{ii}) imbalanced variances (i.e. $\Sigma_0 \neq \Sigma_1$), 
    the findings are consistent with those from Fig.~\ref{fig:csbm-x-main-exp}.
    }
\end{center}
\end{figure*}
\begin{figure*}[t]
\begin{center}
    
    \includegraphics[width=\textwidth]{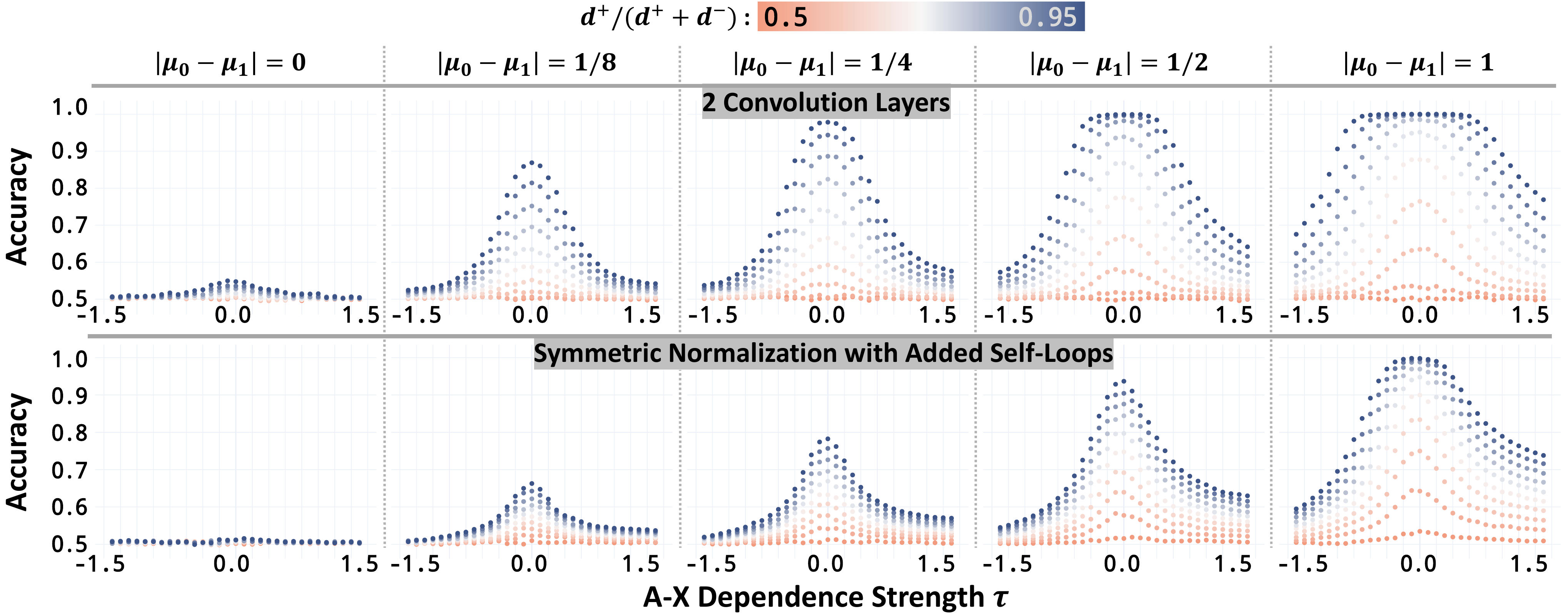}

    \caption{\label{fig:csbm-x-model-exp}
    \bolden{The Simplified GNN Performance in \gmodel Graphs: \textit{GNN Variations}.}
    With two simplified GNN variations, including graph convolution with (\textit{i}) two layers and (\textit{ii}) symmetrically normalized adjacency matrix with self-loops, the findings are consistent with those from Fig.~\ref{fig:csbm-x-main-exp}.
    }
\end{center}
\end{figure*}

\subsection{CSBM-X: Formal Description}
Here, we provide a formal mathematical description of CSBM-X. Note that we use slightly different notations from Sec.~\ref{sec:graph_model}.

{\textbf{\textit{Input.}}} 
{We consider a binary class setting (WLOG class 0 and 1).
Each input parameter is as:}
number of nodes $n$ (we assume that $n$ is even), 
feature mean vector $(\mu_{0}, \mu_{1})$ and feature covariance matrix $(\Sigma_0, \Sigma_1)$, {each corresponds to the class $\ell$,} 
same- and different-class degree $(d^+, d^-)$, respectively,
and \ax strength $\tau$.
Let $\mathcal{I} \coloneqq (n, \mu_0, \mu_1, \Sigma_0, \Sigma_1, d^+, d^-)$ denotes the set of input parameters.

\textbf{\textit{{Node classes.}}}
Given input $\mathcal{I} = (n, \mu_0, \mu_1, \Sigma_0, \Sigma_1, d^+, d^-, \tau)$, the node set is (deterministically) $V = V(\mathcal{I}) = V(n) = [n]$, determined by $n$ only.
Hence, $v_i = i, \forall i \in [n]$.
We assume that the numbers of nodes in two classes are the same, i.e., $\frac{n}{2}$.
The node classes is represented by a vector 
\begin{itemize}
    \item $Y \in \mathcal{Y}_n \coloneqq \{\set{0,1}^n \colon \sum_{i \in [n]} Y_i = \frac{n}{2}\}$, 
\end{itemize}
where $\mathcal{Y}_n$ is the possible set of node-class vectors.
For each $Y \in \mathcal{Y}_n$,
\begin{itemize}
    \item $\Pr[Y | \mathcal{I}] = \Pr[Y | n] = \frac{1}{\abs{\mathcal{Y}_n}} = \frac{1}{\binom{n}{\frac{n}{2}}}$, 
\end{itemize}
where the probability of $Y$ is only decided by $n$, independent of the other parameters in the input $\mathcal{I}$, and all the possible $Y$'s have the same probability (i.e., follow a uniform distribution on $\mathcal{Y}_n$).

\textbf{\textit{{Node features.}}}
Assume the feature dimension is $k \in \bbN$.
Conditioned on the node classes $Y$, the features $X \in \bbR^{n \times k}$ follow the corresponding Gaussian distributions, where
each node feature $X_i \in \bbR^k$ {is an i.i.d. sample} from a Gaussian with mean $\mu_{Y_i}$ and variance $\Sigma_{Y_i}$.
Specifically,
\begin{itemize}
    \item $\Pr[X_i | Y, \calI] = \Pr[X_i | Y_i, \mu_{Y_i}, \Sigma_{Y_i}] = (2\pi)^{-k/2}\det (\Sigma_{Y_i})^{-1/2} \, \exp \left( -\frac{1}{2} (X_i - \mu_{Y_i})^\mathrm{T} \Sigma_{Y_i}^{-1}(X_i - \mu_{Y_i}) \right)$, 
\end{itemize}
which is the PDF of multivariate Gaussian $\calN(\mu_{Y_i}, \Sigma_{Y_i})$.

\textbf{\textit{{Edges.}}}
Conditioned on node classes and features, directed edges are sampled by weighted sampling without replacement.
For each node $i \in [n]$, we define
\begin{itemize}
    \item $\mathbf{C}^{+}_{i} = \mathbf{C}^{+}_{i}(Y) = {C}^{+}_{Y_{i}} \setminus \set{v_i} = \set{\mathbf{C}^{+}_{i, 1}, \mathbf{C}^{+}_{i, 2}, \ldots, \mathbf{C}^{+}_{i, \abs{\mathbf{C}^{+}_i}}}$ (fix order of the nodes in $\mathbf{C}^{+}_{i}$),
    \item $\mathbf{C}^{-}_{i} = \mathbf{C}^{-}_{i}(Y) = {C}^{-}_{Y_{i}} = \set{\mathbf{C}^{-}_{i, 1}, \mathbf{C}^{-}_{i, 2}, \ldots, \mathbf{C}^{-}_{i, \abs{\mathbf{C}^{-}_i}}}$,
    \item $\mathbf{\Phi}^+_i = \mathbf{\Phi}^+_i(\mathbf{C}^{+}_{i}, X) = (\operatorname{exp}({\tau \mathbf{h}^{(p)}_{ij}}), j = \mathbf{C}^{+}_{i, t} \colon t \in [\abs{\mathbf{C}^{+}_i}]) \in \bbR^{\abs{\mathbf{C}^{+}_{i}}}$,
    \item $\mathbf{\Phi}^-_i = \mathbf{\Phi}^-_i(\mathbf{C}^{-}_{i}, X) = (\operatorname{exp}({\tau \mathbf{h}^{(p)}_{ij}}), j = \mathbf{C}^{-}_{i, t} \colon t \in [\abs{\mathbf{C}^{-}_i}]) \in \bbR^{\abs{\mathbf{C}^{-}_{i}}}$.
\end{itemize}
Note that $X$ is used here to compute $\mathbf{h}^{(p)}_{ij}$'s.
Let $\mathbf{C}^{+}$ denote $(\mathbf{C}^{+}_i \colon i \in [n])$, and $\mathbf{C}^{-}$, $\mathbf{\Phi}^+$, and $\mathbf{\Phi}^-$ are similarly defined.

For each node $i \in [n]$, 
let $N_i$ denote its neighbor set, which is a random variable here.
Recall that $N_i = \set{j \in [n] \colon (i, j) \in E}$.
The set of all possible $N_i$'s are 
\begin{itemize}
    \item $\calB_i = \calB_i(\mathbf{C}^{+}_{i}, \mathbf{C}^{-}_{i}, d^+, d^-) \coloneqq \set{N^+_i \cup N^-_i \colon N^+_i \in \calB^+_i, N^-_i \in \calB^-_i}$, where
    \item $\calB^+_i = \calB^+_i(\mathbf{C}^{+}_{i}, d^+) = \set{N^+_i \colon N^+_i \subseteq \mathbf{C}^{+}_{i}, \abs{N^+_i} = d^+}$,
    \item $\calB^-_i = \calB^-_i(\mathbf{C}^{-}_{i}, d^-) = \set{N^-_i \colon N^-_i \subseteq \mathbf{C}^{-}_{i}, \abs{N^-_i} = d^-}$.
\end{itemize}

For each possible $N^+_i =\set{N^+_{i, 1}, N^+_{i, 2}, \ldots, N^+_{i, d^+}} \in \calB^+_i$,
\begin{itemize}
    \item $\Pr[N^+_i | \mathbf{C}^{+}_i, \mathbf{\Phi}^{+}_i, d^+] = \sum_{\pi \in S_{d^+}} \prod_{t=1}^{d^+} {(\mathbf{\Phi}^{+}_{N^+_{\pi(i, t)}})} / {(1-
\sum_{t' = 1}^{t - 1} \mathbf{\Phi}^{+}_{N^+_{\pi(i, t')}})}$,
    \item $\Pr[N^-_i | \mathbf{C}^{-}_i, \mathbf{\Phi}^{-}_i, d^-] = \sum_{\pi \in S_{d^-}} \prod_{t=1}^{d^-} {(\mathbf{\Phi}^{-}_{N^-_{\pi(i, t)}})} / {(1-
\sum_{t' = 1}^{t - 1} \mathbf{\Phi}^{-}_{N^-_{\pi(i, t')}})}$,
\end{itemize}
where $S_{d^+}$ and $S_{d^-}$ are the sets of all permutations on $[d^+]$ and $[d^-]$, respectively.

For each possible $N_i = N^+_i \cup N^-_i \in \calB_i$,
\begin{itemize}
    \item $\Pr[N_i | \mathbf{C}^{+}_i, \mathbf{\Phi}_i^{+}, \mathbf{C}^{-}, \mathbf{\Phi}_i^{-}, d^+, d^-] = \Pr[N^+_i | \mathbf{C}^{+}_i, \mathbf{\Phi}_i^{+}, d^+] \Pr[N^-_i | \mathbf{C}^{-}, \mathbf{\Phi}_i^{-}, d^-]$.
\end{itemize}

The neighbor set of each node is sampled independently, i.e.,
\begin{itemize}
    \item $\Pr[(N_i \colon i \in [n]) | \mathbf{C}^{+}_i, \mathbf{\Phi}_i^{+}, \mathbf{C}^{-}, \mathbf{\Phi}_i^{-}, d^+, d^-] = \prod_{i \in [n]} \Pr[N_i | \mathbf{C}^{+}_i, \mathbf{\Phi}_i^{+}, \mathbf{C}^{-}, \mathbf{\Phi}_i^{-}, d^+, d^-]$.
\end{itemize}
Here, $N_i, \forall i \in [n]$ fully determine the topology of each generated graph.

\subsection{CSMB-X2: a Multi-Community, Degree-Preserving CSBM-X}
While the proposed \gmodel can capture many properties of the real-world graphs, such as sparsity, class-homophily, and CFH, it does not reflect some of their key properties.
Thus, we further propose CSBM-X2 to reflect a multi-community structure and varying degree distributions.

\smallsection{CSBM-X2 design.} 
Two major differences are noticeable in comparison to \gmodel.
First, the number of classes can be larger than 2, allowing for a multi-community structure.
Second, the vector $\mathbf{d} \in \mathbb{R}^n$ representing node degrees for each node serves as the degree parameter.

We describe the key differences of the CSBM-X2.
Specifically, to allow for a multiple community structure, the nodes are equally divided into $c$ classes (instead of 2 in CSBM-X).
Also, for each node $v_i \in V$, CSBM-X2 assigns its node degree $\mathbf{d}_i$.
By the same-class degree ratio parameter $r$, the node $v_i$'s degree $\mathbf{d}_i$ is divided into the same-class and different-class degrees $(d_i^+, d_i^-)$.
Then, CSBM-X2 samples same- and different-class neighbors for each node by their sampling weights, like in CSBM-X.

\smallsection{Experiment setting.}
We generate CSBM-X2 graphs with various parameter configurations.
The setting generally follows the one in Sec.~\ref{sec:graph-model-exp}.
We fix the number of nodes $n=10,000$ and number of classes $c=10$. 
We use input node degrees $\mathbf{d}$ sampled from Pareto distribution (a power-law distribution), using Pareto's $\alpha=1.5$. 
Since the sampled values are bounded by $[0, \infty)$, 
$d_{min} = 20$ is added to the sampled value, and $d_{max} = 1,000$ is used to clip it.
Thus, the node degree is bounded $d_i \in [d_{min}, d_{max}]$, $\forall v_i \in V$.

We set $c$-dimensional features, such that the feature mean vectors are $\mu_\ell \in \mathbb{R}^c$ and covariance matrices are $\Sigma_\ell \in \mathbb{R}^{c \times c}, \forall \ell \in [c]$.
The generated CSBM-X2 graphs have a wide range of {feature distance} FD, {class homophily} \hc, and {CFH} \hdot.
\vspace{-2mm}
\begin{itemize}[leftmargin=*]
    \item {FD}: $\Vert \mu_{\ell_0} - \mu_{\ell_1} \Vert_1 \in \set{0, ~1/4, ~1/2, ~1, ~2}$; $\Sigma_{\ell} = \mathbf{I}$.
    \item {\hc}: $r \in \set{0.50, 0.55, ..., 0.95}$.
    \item {\hdot}: $\tau \in \set{-1.5, -1.4,\ldots, -0.1,0,0.1,\ldots, 1.4, 1.5}$.
\end{itemize}
\vspace{-2mm}

\smallsection{Experiment results.}
Figure~\ref{fig:rebuttal-csbm-x} shows the experimental results with CSBM-X2. 
We observe consistent findings from the experiments with \gmodel (Figure~\ref{fig:csbm-x-main-exp}), 
such that the findings 1 and 2 from Sec.~\ref{sec:graph-model-exp} are reproduced with  CSBM-X2.
Specifically, the simplified GNN performance gradually increases over decreasing CFH magnitude $\vert \mathbf{\tilde{h}(\cdot)} \vert$, 
and the effect of CFH \hdot on GNN performance is moderated by feature distance FD and class homophily \hc.
Our results highlight the generalizability of our conclusion that \ax mediates the effect of graph convolution.

\begin{figure*}[htb]
\begin{center}
    \includegraphics[width=\textwidth]{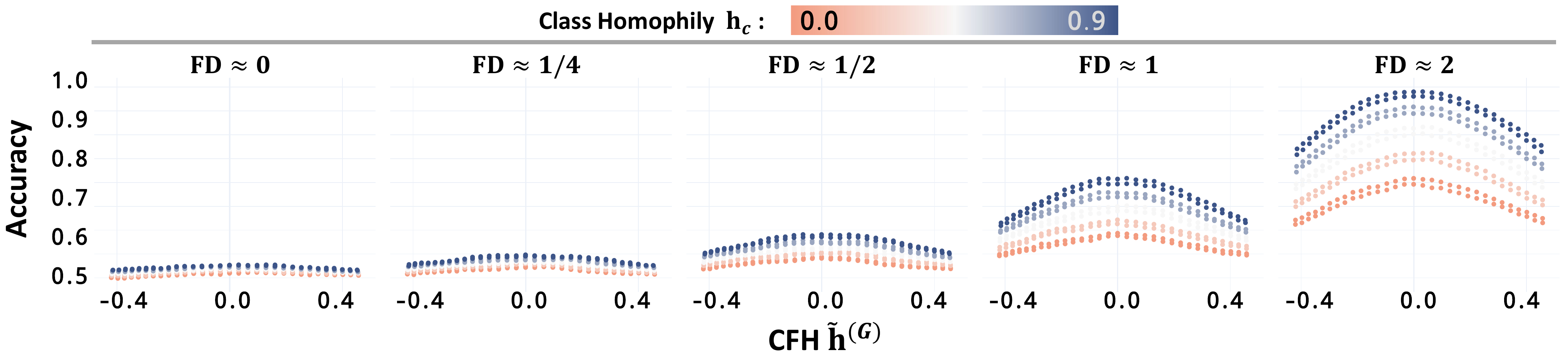}

    \caption{\label{fig:rebuttal-csbm-x}
    \bolden{The Simplified GNN Performance in CSBM-X2 Graphs.}
    The degrees follow the power-law distribution, and the graphs have multi-community structures.
    Consistent with Theorem~\ref{thm:main_result} and Figure~\ref{fig:csbm-x-main-exp},
    for given feature distance $\text{FD} > 0$ and class homophily $\mathbf{h}_c >0$,
    the simplified GNN performance increases as graph-level CFH $\mathbf{\Tilde{h}}^{(G)} \rightarrow 0~(\tau \rightarrow 0)$.
    }
\end{center}
\end{figure*}
\clearpage
\section{Additional Experimental Results with Feature Shuffle}\label{app:shuffle}

In this section, we provide additional experimental results of the feature shuffle with real-world graphs.
We use the same setting as in Sec.~\ref{sec:swapping_exp}, unless otherwise specified.

\textbf{\textit{Models}}.
First, Fig.~\ref{fig:shuffle_exp(full)}(a) shows GCN performance after the feature shuffle in 6 high \hc and 6 low \hc datasets.
While GCN performance increases over the feature shuffles, 
GCNII benefits more from the feature shuffle (i.e., larger positive slopes by GCNII). 
This outcome may relate to the difference in their number of layers.
We claim two complementary pieces of evidence.
For one, \gmodel experiment in Fig.~\ref{fig:csbm-x-model-exp} suggests that a larger number of layers can further improve the beneficial effect of small $\tau$.
Also, one of the main differences between the GCN and GCNII is their capability in stacking deeper layers.
The relationship between GNN depth and \ax, however, is beyond the scope of the present work, and we leave it up to future studies.
Overall, consistent with the conclusion of Sec.~\ref{sec:swapping_exp}, we conclude that all GNN models benefit from the feature shuffle.

\textbf{\textit{Train ratios}}.
Second, Fig.~\ref{fig:shuffle_exp(full)}(b) shows GCNII performance over the feature shuffle with different splits. 
We use three different train/val splits while fixing the test split.
Two findings are worth noting. 
First, model performance increases over the feature shuffles in all splits, 
highlighting that our conclusion is consistent with varying train and validation node ratios.
Second, the performance gap between different splits generally reduces over the increasing shuffled node ratio.
That is, the effect of feature shuffle may also interact with the number of train labels, hinting that \ax may influence the generalization capacity of GNNs.
Analysis of GNN generalization, however, is beyond the scope of the present work, and we leave it up to future studies.



The extensive experiments empirically support our conclusion that \ax mediates the effect of graph convolution.

\begin{figure}[t] 
    \centering

    \begin{subfigure}[b]{\linewidth}
    \centering
    \includegraphics[width=\linewidth]{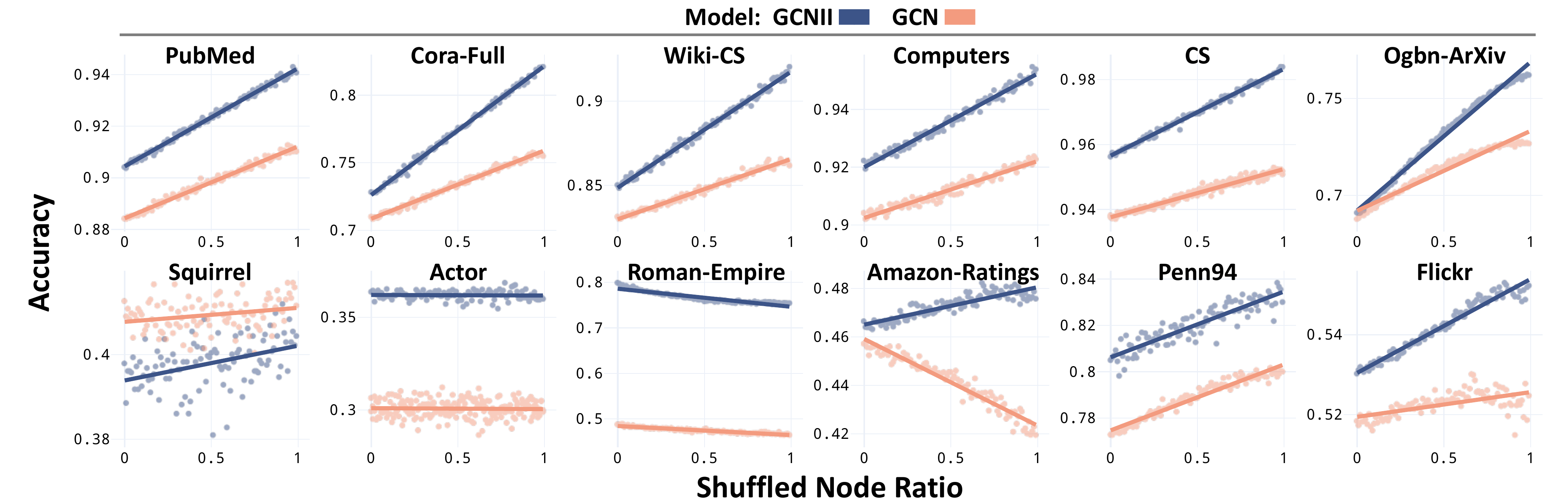}
    \caption{Additional results on the feature shuffle: Models}
    \end{subfigure}
    
    \vspace{2mm}

    \begin{subfigure}[b]{\linewidth}
    \centering
    \includegraphics[width=\linewidth]{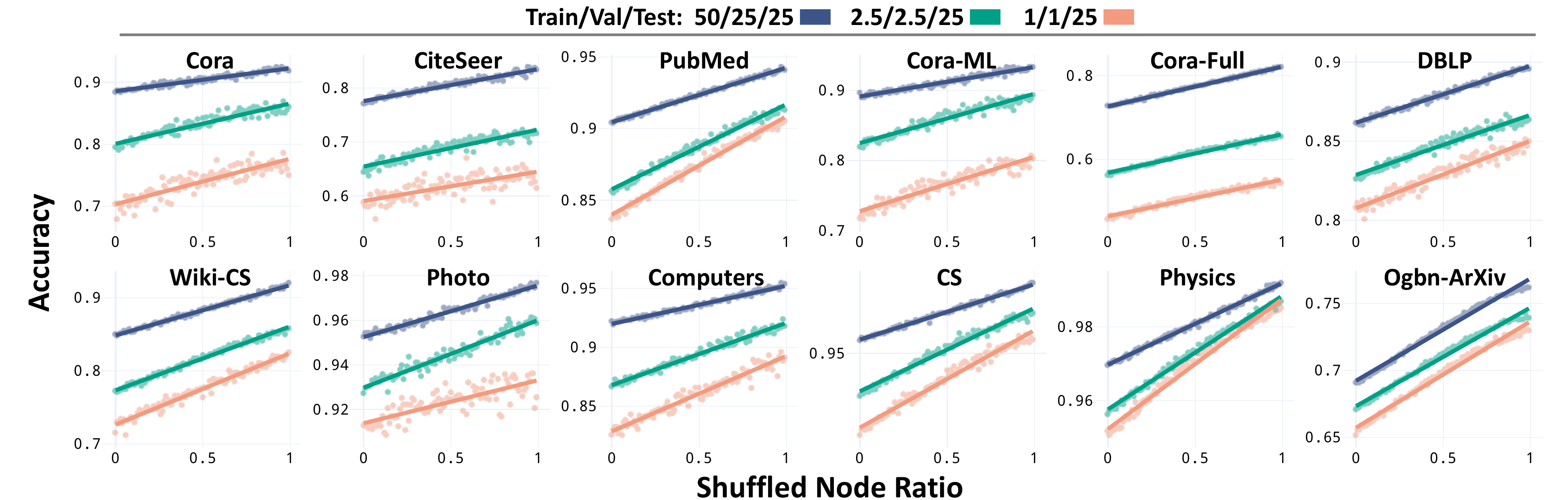}
    \caption{Additional results on the feature shuffle: Train node ratio}
    \end{subfigure}

    \caption{\label{fig:shuffle_exp(full)}
    \bolden{Additional Results on the Feature Shuffle.}
    }
    \vspace{2mm} 
\end{figure}






\section{Potential Applications}\label{app:applications}

In this section, we explore a potential application of our findings with node feature shuffle.
As discussed in Section~\ref{sec:swapping_exp}, the feature shuffle can significantly enhance GNN node classification.
However, the algorithm requires test labels, which are not known.
We, thus, propose an algorithm that extends the feature shuffle for practical scenarios where the labels of many nodes are unknown.

\smallsection{Algorithm.}
Here, we describe our proposed pseudo-label-based feature shuffle algorithm. 
The algorithm has three steps: 

\textbf{\textit{Step 1) Initial GNN training.}}
Consider we are given a graph $G=(V,E)$, node features ${X}$, and node classes of train/val nodes $V^{(train/val)} \subset V$.
We train a node classification GNN, which we denote as $\texttt{GNN} : (X,E) \mapsto \mathbf{\hat{Y}}$, where $\mathbf{\hat{Y}} \in [0,1]^{n \times c}$ is a node class prediction matrix.

\textbf{\textit{Step 2) Pseudo-labeling.}}
By using the trained \texttt{GNN}, we assign pseudo-labels to unlabeled nodes.
First, we choose nodes to be labeled by using confidence scores of \texttt{GNN}.
To this end, we use prediction matrix $\mathbf{\hat{Y}}$ of \texttt{GNN}.
We label nodes whose prediction probability lies within the predefined confidence range.
Specifically, we define labeling nodes $V^{(label)}$ as follows: 
\begin{equation}
    V^{(label)} = \{v_{k} \in V : o_{l} \leq \max_{j \in [c]}\mathbf{\hat{Y}}_{kj}  \leq o_{u}\} \cup V^{(train/val)},
\end{equation}
where $0 \leq o_{l} \leq  o_{u} \leq 1$ are hyperparameters.
After choosing labeling nodes $V^{(label)}$, we assign pseudo-labels $y'_{k},\forall v_{k} \in V^{(label)}$ as follows:
\begin{equation}
    y'_{k}=\begin{cases}
			Y_{k} & \text{if $v_{k} \in V^{(train/val)}$}\\
            \arg\max_{j \in [c]}\mathbf{\hat{Y}}_{kj} & \text{otherwise}
		 \end{cases}, \forall v_{k} \in V^{(label)}.
\end{equation}  

\textbf{\textit{Step 3) Classwise feature shuffle.}}
Lastly, with obtained pseudo-labels $y_{k}, \forall v_{k} \in V^{(label)}$, we shuffle node features that share the same pseudo-label.
Specifically, we shuffle the features of all nodes that share the same pseudo-labels, without considering the train/val/test split.
Train/val/test node indices and true node labels are not shuffled.
By performing the shuffle, we obtain new node features $X'$.

\textbf{\textit{Step 4) GNN fine-tuning.}}
Finally, we fine-tune the trained \texttt{GNN} with the shuffled node features $X'$. Specifically, 
$\texttt{GNN} : (X',E) \mapsto \mathbf{\hat{Y}}$. We make the test inference with the fine-tuned \texttt{GNN} and shuffled feature $X'$.

\smallsection{Experiment.}
We test the algorithm on the node classification benchmark datasets.
We use 12 high class-homophily graphs.
Similar experimental procedures and hyperparameter settings are used as in Sec.~\ref{sec:exp_setting} and Appendix~\ref{app:train_details}.
To focus on more practical scenarios, we use 20 train nodes and 30 validation nodes per class, whereas the rest of the nodes serve as the test nodes.
We use GCNII as the GNN encoder for the pseudo-label-based feature shuffle algorithm.
For pseudo-labeling hyperparameters, we fix $o_{l}=0.7$ and $o_{u}=1$.
We compare the performance of vanilla GCNII versus GCNII enhanced with pseudo-label-based feature shuffle.
We repeat 30 trials and report the mean performance in Table~\ref{tab:pseudo-label-shuffle}.
In 9/12 datasets, the enhanced GCNII outperforms the vanilla GCNII by a small margin.

\begin{table*}[t]
\begin{center}
\caption{Node Classification Performance of a GNN Enhanced with Pseudo-label-based Feature Shuffle Algorithm} \label{tab:pseudo-label-shuffle}
    \resizebox{\textwidth}{!}{
    \renewcommand{\arraystretch}{0.9}
        \centering
        \begin{tabular}{*{13}{c}}
            \toprule            
            \textbf{Method} & Cora & CiteSeer & PubMed & Cora-ML & DBLP & Wiki-CS & Cora-Full & Photo & Computers & CS & Physics & Ogbn-ArXiv \\
            \midrule
            \textbf{Vanila GCNII} & 79.70 & 67.26 & 76.23 & 82.05 & 75.42 & 73.37 & 60.94 & 90.44 & 81.10 & 91.31 & 92.97 & 51.60\\
            \textbf{Enhanced GCNII} & 80.83 & 68.02 & 76.26 & 82.77 & 77.07 & 73.16 & 61.58 & 90.50 & 81.66 & 91.27 & 93.31 & 50.80 \\
            \textbf{Performance Gap} & +1.13 & +0.76 & +0.03 & +0.72 & +1.65 & -0.21 & +0.64 & +0.06 & +0.56 & -0.04 & +0.34 & -0.80 \\
            \midrule            
            \bottomrule
        \end{tabular}
        }
\end{center}

\vspace{1.5mm}

\end{table*}

\section{Experiment Settings: Pre-processing, Training, Hyperparameters, and Details}\label{app:train_details}

\subsection{Dataset Pre-processing}

\textbf{\textit{Measurement}}.
No dataset pre-processing is done when measuring class homophily \hc and feature distance FD.
If the dataset has self-loops, they are removed when measuring CFH \hdot.

\textbf{\textit{\gmodel}}.
For experiments with symmetrically normalized graph convolution in Fig.~\ref{fig:csbm-x-model-exp}, (\textit{i}) directed edges are converted into undirected edges (without edge weights) and (\textit{ii}) self-loops are added.
In other experiments, no dataset pre-processing is done.

\textbf{\textit{The real-world graphs}}.
All the considered GNN models assume undirected graph topology.
Thus, directed edges are converted into undirected edges (without edge weights).
Also, self-loops are added.

\subsection{Model Training}

All models are trained with Adam~\cite{Kingma2015Adam:Optimization} optimizer.
We fix 500 train epochs. 
The best model is chosen based on early stopping, with a patience of 100.
In feature shuffle experiments (Sec.~\ref{sec:swapping_exp}, Appendix~\ref{app:shuffle}), 
a new model is initialized and trained for each shuffled graph.

\subsection{Hyperparameters}

In \gmodel experiments (Sec.~\ref{sec:graph_model}, Appendix~\ref{app:csbmx}), 
we do not tune hyperparameters since the coefficient $W \in \mathbb{R}$ is the only learnable parameter.
In feature shuffle experiments (Sec.~\ref{sec:swapping_exp}, Appendix~\ref{app:shuffle}), 
we tune the hyperparameters on the \textit{original graphs}.
That is, the feature-shuffled graphs are unknown to the models during the hyperparameter search.

For all models, we set their hidden feature dimension as 64 and the learning rate as 0.01.
Below, we provide the hyperparameter search space for each considered model.

\begin{enumerate}
    \item \textbf{GCN}: 
    \begin{itemize}[leftmargin=-0.1mm]
        \item $\text{Optimizer weight decay} \in \{5e-3, 1e-3, 5e-4, 1e-4\}$
        \item $\text{Dropout} \in \{0.5, 0.6, 0.7\}$
        \item Number of layers $\in$ $\{2, 3, 4\}$
    \end{itemize}   
    \item \textbf{GCN-II}: 
    \begin{itemize}[leftmargin=-0.1mm]
        \item {Optimizer weight decay} $\in$ $\{1e-3, 5e-4, 1e-4, 5e-5\}$
        \item Dropout $\in$ $\{0.5, 0.6, 0.7\}$
        \item Number of layers $\in$ $\{4, 8, 16\}$
        \item Residual connection weight $\alpha$ $\in$ $\{0.1, 0.3, 0.5\}$
        \item Weight decay $\lambda$ $\in$ $\{0.5, 1.0, 1.5\}$
    \end{itemize}    
    \item \textbf{GPR-GNN}: 
    \begin{itemize}[leftmargin=-0.1mm]
        \item $\text{Optimizer weight decay} \in \{5e-3, 1e-3, 5e-4, 1e-4\}$
        \item Dropout $\in$ $\{0.5, 0.6, 0.7, 0.8\}$
        \item Number of layers  $\in$ $\{10\}$
        \item Return probability $\alpha$ $\in$ $\{0.1, 0.3, 0.5\}$    
    \end{itemize}
    \item \textbf{AERO-GNN}: 
    \begin{itemize}[leftmargin=-0.1mm]
        \item {Optimizer weight decay} $\in$ $\{5e-3, 1e-3, 5e-4\}$
        \item Dropout $\in$ $\{0.5, 0.6, 0.7\}$
        \item Number of MLP layers $\in$ $\{1, 2\}$
        \item Number of convolution layers  $\in$ $\{4, 8, 16\}$
        \item Weight decay $\lambda$ $\in$ $\{0.5, 1.0, 1.5\}$
    \end{itemize}    
\end{enumerate}

\subsection{Other Details}

\textbf{\textit{Train/val/test split}}.
For each node class, the train/val/test set is split randomly by the ratio of 50/25/25, unless otherwise specified.
In \gmodel experiments (Sec.~\ref{sec:graph_model}, Appendix~\ref{app:csbmx}), for each generated \gmodel graph $\mathcal{G}$, we obtain 5 different splits.
In feature shuffle experiments (Sec.~\ref{sec:swapping_exp}, Appendix~\ref{app:shuffle}), we use 5 different splits consistent across the shuffled node ratio.


\textbf{\textit{Node2Vec}}.
For Fig.~\ref{fig:shuffle-feat}, we use Node2Vec~\cite{Grover2016Node2vec:Networks} as the node features.
For each graph, the Node2Vec vector is 256-dimensional.
To obtain the vector, we train the Node2Vec model with a walk length of 20, a context size of 10, walks per node of 10, and 100 epochs.

\textbf{\textit{Shuffle and CFH \hdot}}.
When measuring CFH after the feature shuffle, we average the outcomes over 5 trials.

\textbf{\textit{Evaluation of WebKB datasets}}.
The WebKB datasets (i.e., Texas, Cornell, and Wisconsin) have very small number of nodes, ranging from 183 to 251.
Thus, the variance of GNN node classification accuracy on the datasets often tend to be very large, because mis-classifying one test node can drop near 2\%p test accuracy.
To enhance reliability of the empirical results in Sec.~\ref{sec:swapping_exp}, we report the mean performance \textit{over 30 trials} only for the WebKB datasets.

\end{document}